\documentclass{article}

\usepackage[preprint]{neurips_2026}

\usepackage[utf8]{inputenc}
\usepackage[T1]{fontenc}
 \usepackage{hyperref}
\hypersetup{
        colorlinks   = true,
        urlcolor     = blue,
        linkcolor    = blue,
        citecolor   = blue
      }     
\usepackage{url}
\usepackage{booktabs}
\usepackage{amsfonts}
\usepackage{amsmath}
\usepackage{amssymb}
\usepackage{amsthm}
\usepackage{mathtools}
\usepackage{microtype}
\usepackage{xcolor}
\usepackage{graphicx}
\usepackage{subcaption}
\usepackage{array}
\usepackage{tabularx}
\usepackage{longtable}
\usepackage{float}
\usepackage{enumitem}
\usepackage{nicefrac}
\usepackage[capitalize]{cleveref}

\graphicspath{{figures/}{./}}

\title{Generalization in Nonlinear Least Squares via Learned Feature Geometry}

\author{%
  Ayub Kharel \\
  University of Oxford \\
  \And
  Ilja Kuzborskij \\
  Google DeepMind \\
  \And
  Patrick Rebeschini \\
  University of Oxford \\
  \And
  Yasin Abbasi-Yadkori \\
  Sapient Intelligence
}

\newtheorem{theorem}{Theorem}
\newtheorem{lemma}[theorem]{Lemma}
\newtheorem{proposition}[theorem]{Proposition}
\newtheorem{definition}[theorem]{Definition}
\newtheorem{corollary}[theorem]{Corollary}
\newtheorem{remark}[theorem]{Remark}
\theoremstyle{definition}

\newcommand{\R}{\mathbb{R}}
\DeclareMathOperator{\E}{\mathbb{E}}
\newcommand{\deff}{d_{\mathrm{eff}}}
\newcommand{\dclass}{d_{\mathrm{lin}}}

\newcommand{\tr}{\operatorname{tr}}
\newcommand{\rank}{\operatorname{rank}}

\newcommand{\wh}{\widehat}

\newcommand{\cN}{\mathcal{N}}

\newcommand{\gen}{\mathrm{gen}}
\newcommand{\norm}[1]{\|#1\|}

\newcommand{\op}{\mathrm{op}}

\makeatletter
\renewcommand{\@bottomtitlebar}{
  \vskip 0.20in
  \vskip -\parskip
  \hrule height 1\p@
  \vskip 0.05in
}
\renewcommand{\@maketitle}{%
  \vbox{%
    \hsize\textwidth
    \linewidth\hsize
    \vskip 0.05in
    \@toptitlebar
    \centering
    {\LARGE\bf \@title\par}
    \@bottomtitlebar
    \if@anonymous
      \begin{tabular}[t]{c}\bf\rule{\z@}{18\p@}
        Anonymous Author(s) \\
        Affiliation \\
        Address \\
        \texttt{email} \\
      \end{tabular}%
    \else
      \def\And{%
        \end{tabular}\hfil\linebreak[0]\hfil%
        \begin{tabular}[t]{c}\bf\rule{\z@}{18\p@}\ignorespaces%
      }
      \def\AND{%
        \end{tabular}\hfil\linebreak[4]\hfil%
        \begin{tabular}[t]{c}\bf\rule{\z@}{18\p@}\ignorespaces%
      }
      \begin{tabular}[t]{c}\bf\rule{\z@}{18\p@}\@author\end{tabular}%
    \fi
    \vskip 0.18in \@minus 0.05in
  }
}
\renewenvironment{abstract}%
{%
  \vskip 0.03in%
  \centerline{\large\bf Abstract}%
  \vspace{0.25ex}%
  \begin{quote}%
}
{%
  \par%
  \end{quote}%
  \vskip 0.5ex%
}
\makeatother

 \usepackage[textsize=tiny,
 ]{todonotes}
\begin{document}

\maketitle

\begin{abstract}
We study the generalization of ridge-regularized nonlinear least-squares models via on-average algorithmic stability, deriving error bounds for local minimizers in terms of a data-dependent effective dimension that reflects the geometry of the gradient model at the trained parameters, through the empirical Jacobian Gram matrix and a residual--curvature term. In the linear case, where the curvature term vanishes, this recovers the classical effective dimension of the Jacobian kernel covariance, but evaluated at the trained model rather than at initialization as is typical in neural tangent kernel analyses. We further bound this effective dimension via covering complexity of the gradient features, leading to guarantees that depend on learned geometry rather than parameter count. In particular, for manifold-supported data and piecewise Lipschitz Jacobians, the bounds scale with intrinsic dimension, while for one-hidden-layer ReLU networks, the mechanism can be made explicit through counts of activation-stable regions. Experiments on synthetic manifolds, clustered distributions, and benchmark datasets illustrate trained-Jacobian compression, the tightness of the residual-curvature linearization, and agreement between the stability bound and observed generalization gaps. A key feature of our bounds is the simplicity of their derivation, which follows from first principles using the Brascamp--Lieb inequality under strongly log-concave noise.
\end{abstract}

\section{Introduction} 
Modern machine learning models are often trained in regimes where classical notions of complexity appear inadequate: highly overparameterized predictors can interpolate the data and yet generalize well \cite{zhang2021understanding,belkin2019reconciling,bartlett2020benign,koehler2021uniform}. This phenomenon suggests that generalization is governed not by the size of the hypothesis class alone, but by properties of the specific solution found by training. A natural question is therefore: which aspects of a fitted model determine its ability to generalize? In particular, can one identify quantities that depend on the learned representation and reflect the geometry induced by the data, rather than worst-case complexity over all parameters?

To address this question in a clean and interpretable setting, we study ridge-regularized nonlinear least-squares regression under a fixed design, where the inputs are treated as deterministic and the randomness arises only from the response noise. This perspective isolates the role of the learned predictor and its geometry, allowing us to analyze how sensitivity to the data, and ultimately generalization, is controlled by properties of the fitted solution itself.

\paragraph{Setup.}
Let \(x_1,\ldots,x_n\in\mathbb R^d\) be a fixed design and suppose that the responses are generated as
\[
    Y_i = f^\star(x_i)+\xi_i,\qquad i=1,\ldots,n,
\]
where the noise variables are independent and, for the main results, Gaussian or more generally strongly log-concave.  Given a differentiable predictor \(f(\cdot;\theta)\), with parameter \(\theta\in\mathbb R^p\), we study the ridge-regularized nonlinear least-squares objective
\begin{equation}
\label{eq:nonlinear_ls_intro}
    \theta \mapsto
    \widehat L(\theta)+\frac{\lambda}{2}\norm{\theta}_2^2,
    \qquad \text{where}
    \qquad
    \widehat L(\theta)
    :=
    \frac1{2n}\sum_{i=1}^n
    \bigl(f(x_i;\theta)-Y_i\bigr)^2 .
\end{equation}

We study regression where \(f(x;\theta)\) is nonlinear in the parameters and the number of parameters may be much larger than the sample size, and we focus on neural networks as our central example.  Since practical optimization algorithms for such objectives are typically only expected to find local solutions, we focus on a nondegenerate local minimizer \(\widehat\theta\) of \eqref{eq:nonlinear_ls_intro}.  This abstracts away the detailed dynamics of a specific optimizer while retaining the central question: which geometric properties of a fitted nonlinear least-squares solution control its generalization?

Let \(Y_i'\) be an independent copy of \(Y_i\), and define, for the dataset \(D=\{(x_i,Y_i)\}_{i=1}^n\),
\[
    L(\theta)
    :=
    \frac1{2n}\sum_{i=1}^n
    \mathbb E\bigl[(f(x_i;\theta)-Y_i')^2\bigr],
    \qquad
    \gen_D(\theta)
    :=
    L(\theta)-\widehat L(\theta).
  \]  

Classical generalization theory offers several existing methods.  Uniform convergence bounds control \(\sup_{\theta\in\Theta}|L(\theta)-\widehat L(\theta)|\) through quantities such as VC dimension \cite{vapnik95}, covering number~\cite{anthony1999neural,bartlett2019nearly}, or Rademacher complexity~\cite{koltchinskii2002rademacher,bartlett2019nearly} of a hypothesis class.  PAC-Bayesian and information-theoretic bounds provide nonuniform alternatives, replacing a global class complexity by a posterior-dependent or information-dependent quantity \cite{mcallester1998pac,shalev2014understanding,xu2017information}.  These frameworks are powerful, but several works have emphasized that conventional capacity measures and uniform convergence arguments can be vacuous or oblivious to algorithm-dependent inductive biases for trained deep networks, especially in interpolating regimes \cite{zhang2021understanding,nagarajan2019uniform}.



We take the viewpoint of algorithmic stability \cite{bousquet2002stability}.  Stability asks whether the output of a learning procedure changes little when one training example is replaced by an independent copy.  If \(\widehat\theta^{(i)}\) denotes the local minimizer obtained after replacing only \(Y_i\) by \(Y_i'\), then a natural on-average prediction-stability quantity is
\[
    \frac1n\sum_{i=1}^n
    \mathbb E\Bigl[
    \bigl(
    f(x_i;\widehat\theta)
    -
    f(x_i;\widehat\theta^{(i)})
    \bigr)^2
    \Bigr] 
    \qquad \text{where}
    \qquad
\widehat\theta\in
\operatorname*{arg\,min}_{\mathrm{local}}
\left\{\widehat L(\theta)+\frac{\lambda}{2}\|\theta\|_2^2\right\}.
  \]  
  Small replace-one sensitivity leads to a small expected generalization gap \cite{bousquet2002stability, pmlr-v48-hardt16}. In the context of optimization, algorithmic stability theory is well-developed for convex and strongly convex problems, and has also been studied for nonconvex objectives under additional landscape assumptions such as the Polyak--\L{}ojasiewicz condition or weak convexity \cite{CharlesP18,bassily2020stability}.  Other works explored stability in the non-convex scenarios by introducing stabilization operations  into the algorithm itself, such as through rapidly decaying step sizes, clipping~\citep{pmlr-v48-hardt16} or adding noise to the gradient~\citep{pensia2018generalization}.  Our question is different: can a local minimizer of the original nonlinear least-squares problem result in stable predictor because the fitted model
has a low effective complexity, such as a low \emph{effective dimension}?

Indeed, in closely related kernel methods one does encounter such scenarios.
Since our guiding example is a neural network learning, of a particular interest here is
the Neural Tangent Kernel (NTK) theory which aim to explain generalization ability in wide neural networks~\cite{jacot2018neural,du2019gradientdescentfindsglobal,arora2019exactcomputationinfinitelywide,cao2019generalizationboundsstochasticgradient}.  The NTK approach linearizes the network around random initialization and studies the kernel matrix generated by the initialization Jacobian features.  For a fixed feature map with empirical covariance \(K_0\), the variance quantity is the classical ridge effective dimension
\[
    \dclass(K_0,\lambda)
    :=
    \tr\!\left(
    \bigl((K_0+\lambda I)^{-1}K_0\bigr)^2
    \right),
\]
which is central in kernel ridge regression and random-feature analyses~\cite{CaponnettoV07,bach2013sharpanalysislowrankkernel,rudi2021generalizationpropertieslearningrandom}.  This fixed-feature description captures the initialization geometry.  Feature learning requires understanding the geometry of the feature space which can change substantially during training.

In contrast,
this paper studies the genuinely nonlinear regime without linearizing at initialization.  At the fitted point, let \(\widehat G\) be the empirical covariance of the trained Jacobian features and let $\wh H_{\lambda}$ be the Hessian of the regularized empirical objective.  The effective dimension that appears in our bounds is
\begin{equation}
\label{eq:main_deff_def}
    \deff(\widehat\theta;\lambda)
    :=
    \tr\!\left(\bigl(\wh H_{\lambda}^{-1}\widehat G\bigr)^2\right).
\end{equation}
Thus the numerator is the Jacobian feature geometry after training, and the inverse metric is the local curvature of the nonlinear training objective.  

For the feature geometry of neural networks, direct connections between the complexity of the activation regions and generalization bounds have not been made explicit by existing frameworks. Recent work \cite{patel2025localcomplexitylinearregions} studies the complexity of local linear regions in a ReLU network and highlights using it to mathematically obtain a generalization bound as an outstanding open problem. Our work makes this connection explicit, giving a theoretical grounding for empirical findings \cite{o'brien2025using}, and is general enough (due to the usage of covering arguments) to work even when activation functions are not piecewise linear, but remain piecewise Lipschitz.

\subsection{Summary of contributions}
\label{sec:contributions}

\begin{enumerate}[leftmargin=*]
\item
\textbf{A trained-model effective dimension through algorithmic stability.}
For any differentiable predictor equipped with a regular nondegenerate local-minimizer selection of ridge-regularized nonlinear least squares, we prove the prediction-stability bound (\Cref{thm:main_prediction_stability})
\[
    \frac1n\sum_{i=1}^n
    \mathbb E
    \left[
    \bigl(
    f(x_i;\widehat\theta)
    -
    f(x_i;\widehat\theta^{(i)})
    \bigr)^2
    \right]
    \le
    \frac{4\,\mathbb E[
    \deff(\widehat\theta;\lambda)]}{\alpha\,n},
\]
where $\alpha$ is the strong log-concavity parameter of the response noise (so $\alpha=1$ for Gaussian noise). For square loss this gives a bound of order (\Cref{cor:main_gen_gap}),
\[
    \mathbb E[\gen_D(\widehat\theta)]
    =
    O\!\left(      
    \sqrt{
    \frac{
        \mathbb E[\deff(\widehat\theta;\lambda)]
    }{n}}
\right)
\qquad (\mathrm{as} \,\, n \to \infty)
    .
  \]  
  where $\lambda = \lambda_n$ is tuned based on $n$.  
  The result applies to arbitrary differentiable predictors\footnote{The argument extends to ReLU networks under standard assumptions on the optimization dynamics, using Clarke derivatives in place of standard gradients; see \citep{ji2020directional} for more detail.} at nondegenerate local minimizers, including multilayer neural networks. The proof is a simple but novel application of a replace-one centering argument with the Brascamp--Lieb inequality and an implicit-function sensitivity identity for the local minimizer.  In the linear or zero-residual-curvature case, \(\deff\) reduces to \(\dclass\), so our bound is a strict generalization of standard generalization bounds for the linear setting. We control \(\deff\) and show that it can be significantly smaller than $n$ depending on data or feature geometry. 

\item \textbf{Covering complexity of trained Jacobian features.}
We show that \(\deff\) is small whenever the trained Jacobian features can be compressed in the following sense. If \(\mathcal C_J(\varepsilon)\) is the empirical covering number of the trained Jacobian features at radius $\varepsilon$, then
\[
    \deff(\widehat\theta)
    \;\lesssim\;
    \inf_{\varepsilon>0}
    \left[
        \mathcal C_J(\varepsilon)+\varepsilon^2
    \right].
  \]  
  For one-hidden-layer ReLU networks, this covering bound can be expressed more explicitly by exploiting the piecewise-linear structure of the model: it is controlled by the number of \emph{activation-stable regions} (see Def.\ \ref{def:activation-stable-region}) occupied by the data and the local feature contraction within those regions (formal statement in Section~\ref{sec:geometry_main}). Compared to standard metric-entropy bounds on Lipschitz function classes \cite{vonluxburg2004distance}, our bound is tighter because it covers only the empirical Jacobian features actually realized by the trained model, rather than an entire ambient class. We also explicitly connect this feature geometry to data geometry under stable feature maps, allowing for prediction of generalization gaps depending on geometric properties of the input data.  

\item
  \textbf{Intrinsic-dimensional and ReLU consequences.}
If the data lie on an \(m\)-dimensional manifold and the trained Jacobian map is (piecewise) Lipschitz on the $M$ occupied parts of the manifold with constants $(L_r)_{r \leq M}$, then the input-space covers transfer to Jacobian-feature covers.  In particular, this gives a bound of the form (Proposition~\ref{prop:manifold_main})\footnote{The notation \(\lesssim_m\) hides constants depending on \(m\) (the intrinsic dimension); \(C_{\mathcal M}\) is the manifold covering constant, defined formally in Section~\ref{sec:geometry_main}.}
\[
    \deff(\widehat\theta)
    \;\lesssim_m\;
    \left(
        C_{\mathcal M}
        \sum_{r=1}^M L_r^m
    \right)^{2/(m+2)}~.
\]  
%
Thus, when the intrinsic dimension \(m\), the number of occupied pieces \(M\), and the local Jacobian Lipschitz constants are controlled, the relevant complexity is governed by learned geometry on the data, even in large ambient and parameter spaces.  For twice differentiable ReLU networks, the pieces are activation-stable regions and the constants \(L_r\) measure feature contraction.

The number of activation-stable regions $M$ can be as little as $4$ under strong assumptions such as orthogonal separability of data \cite{phuong2020inductive,boursier2025simplicitybiasoptimizationthreshold}, and in general it tends to be linear in the total number of neurons \cite{pmlr-v97-hanin19a}, of which the number of occupied ones tend to be much lower still. This is many orders of magnitude lower than the total parameter count, even in the shallow ReLU case, but especially in deeper networks.
To this end, controlling the number of activation-stable regions in a problem-dependent way beyond strong assumptions remains an open problem, and here we resort to experimental measurements in practice.


\item
\textbf{Numerical evidence for trained Jacobian compression.}
We complement the theory with simulations on synthetic manifolds, clustered distributions, and benchmark regression datasets.  The experiments detailed in Section \ref{sec:experiments} and  Appendix \ref{app:experimental_details} compare initialization and trained Jacobian Grams on the same samples, test the residual-curvature linearization used in the theory, probe cover and activation-region geometry, and compare the theoretical bounds to observed gaps.  

\end{enumerate}

\begin{figure*}[h!]
  \centering
  \includegraphics[width=\textwidth]{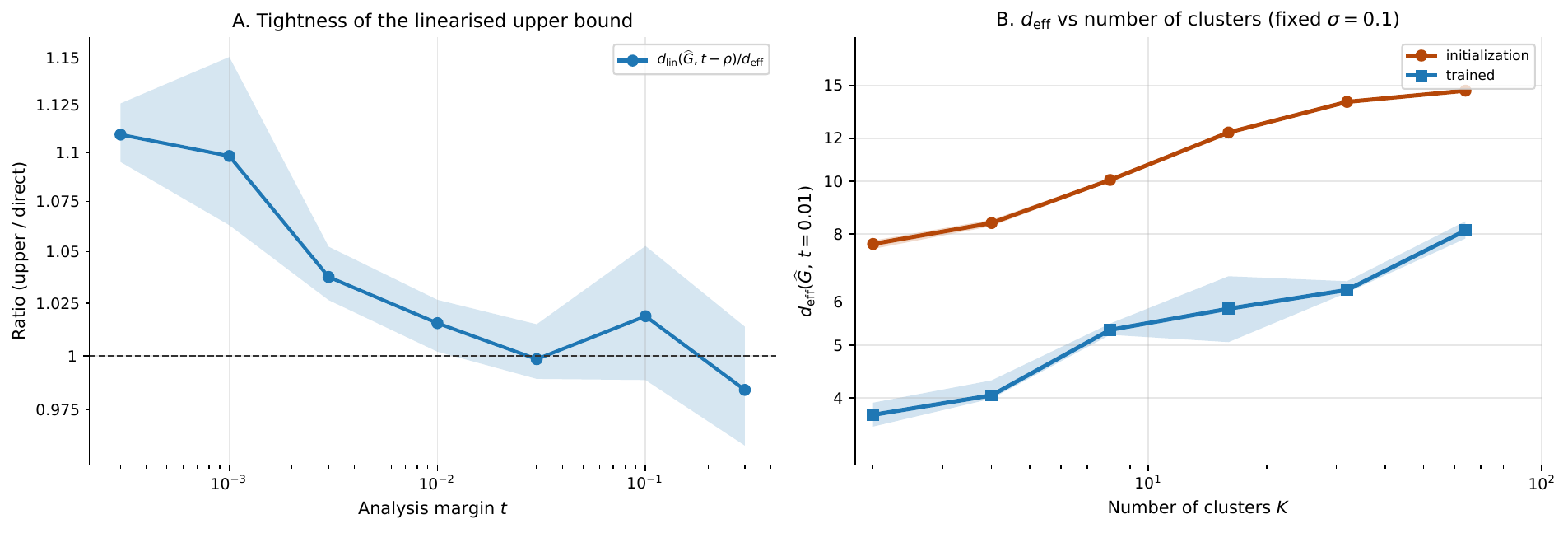}
  \caption{\textbf{The nonlinear effective dimension is controlled by trained Jacobian geometry.} Left: across noisy manifold regression tasks, \(d_{\rm lin}(\widehat G,t-\rho)\) closely tracks the directly estimated nonlinear effective dimension, validating the residual-curvature reduction used in the theory. Right: on clustered-sphere data, the trained effective dimension remains below the initialization geometry as the number of displayed clusters grows to \(K\le 100\), showing that feature learning can compress the relevant directions even when the data geometry becomes harder.}
  \label{fig:main_deff_validation_cluster}
\end{figure*}


\section{Effective Dimension and Generalization Bound}
\label{sec:stability_setup}

This section proves the main result that motivates the paper.  The inputs are fixed, and the randomness is in the response noise.  We compare the fitted local minimizer \(\widehat\theta\) with the local minimizer \(\widehat\theta^{(i)}\) obtained after replacing a response \(Y_i\) by an independent copy \(Y_i'\).  The main claim is that the average prediction change is controlled by an effective dimension evaluated at the trained model.

\subsection{Effective dimension at the trained local minimizer}

We work with local minimizers of the regularized empirical risk
\[
    L_\lambda(\theta) := \widehat L(\theta) + \tfrac{\lambda}{2}\|\theta\|_2^2,
\]
where \(\widehat L\) is as defined in \eqref{eq:nonlinear_ls_intro}. To capture the geometric quantities used in our bounds, we will need not only stationarity but also the existence of a well-defined inverse Hessian along the local solution branch as the responses vary.

\begin{definition}[Nondegenerate local minimizer]
\label{def:nondegenerate_local_min}
For \(y\in\mathbb R^n\), let \(D_y=\{(x_i,y_i)\}_{i=1}^n\) and \(L_{\lambda,y}(\theta)=\widehat L_y(\theta)+\frac{\lambda}{2}\|\theta\|_2^2\). A point \(\theta^\star\in\mathbb R^p\) is a \emph{nondegenerate local minimizer} of \(L_{\lambda,y}\) if \(\nabla_\theta L_{\lambda,y}(\theta^\star)=0\) and
\[
H_\lambda(y,\theta^\star):=\nabla_\theta^2 L_{\lambda,y}(\theta^\star)=\nabla_\theta^2\widehat L_y(\theta^\star)+\lambda I\succ 0 .
\]
We denote the set of such minimizers by \(\Theta^\star_\lambda(D_y)\).
\end{definition}

\paragraph{\textit{Selection assumption.}}
{\itshape We fix a \(C^1\) learning rule \(A_\lambda:\mathbb R^n\to\mathbb R^p\), write \(\theta_y=A_\lambda(y)\), and assume that \(\theta_y\in\Theta^\star_\lambda(D_y)\) for all \(y\in\mathbb R^n\). Equivalently, \(A_\lambda\) follows a single nondegenerate local branch of the first-order condition \(\nabla_\theta L_{\lambda,y}(\theta)=0\), ruling out discontinuous branch switching. For each \(j\), the prediction map \(h_j(y)=f(x_j;\theta_y)\) is locally Lipschitz and satisfies
\[
\mathbb E[h_j(Y)^2]<\infty,\qquad \mathbb E\|\nabla_y h_j(Y)\|_2^2<\infty .
\]
}

\begin{lemma}[Implicit response derivative]
\label{lem:implicit_response_derivative}
Under the selection assumption, for all \(j,k\in[n]\),
\[
\partial_{y_k}h_j(y)=\frac1n\nabla_\theta f(x_j;\theta_y)^\top H_\lambda(y,\theta_y)^{-1}\nabla_\theta f(x_k;\theta_y).
\]
\end{lemma}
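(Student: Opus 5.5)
The plan is to differentiate the first-order optimality condition along the selected branch. By the selection assumption, $\theta_y=A_\lambda(y)$ is $C^1$ in $y$ and satisfies, for every $y\in\mathbb R^n$,
\[
F(y,\theta_y):=\nabla_\theta L_{\lambda,y}(\theta_y)=\frac1n\sum_{i=1}^n\bigl(f(x_i;\theta_y)-y_i\bigr)\nabla_\theta f(x_i;\theta_y)+\lambda\theta_y=0 .
\]
Since this is an identity in $y$, its total derivative with respect to $y_k$ vanishes. The chain rule then gives
\[
\partial_\theta F(y,\theta_y)\,\partial_{y_k}\theta_y+\partial_{y_k}F(y,\theta_y)=0 .
\]

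The two partial derivatives are straightforward. First, $\partial_\theta F(y,\theta)=\nabla^2_\theta L_{\lambda,y}(\theta)$, which at $\theta_y$ equals $H_\lambda(y,\theta_y)$. Second, $y$ enters $F$ only through the residual term linearly, so $\partial_{y_k}F(y,\theta)=-\frac1n\nabla_\theta f(x_k;\theta)$.

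Nondegeneracy gives $H_\lambda(y,\theta_y)\succ0$, so this Hessian is invertible and we can solve:
\[
\partial_{y_k}\theta_y=\frac1n H_\lambda(y,\theta_y)^{-1}\nabla_\theta f(x_k;\theta_y).
\]
Since $h_j(y)=f(x_j;\theta_y)$ is a composition of the differentiable map $\theta\mapsto f(x_j;\theta)$ with the $C^1$ map $y\mapsto\theta_y$, the chain rule gives $\partial_{y_k}h_j(y)=\nabla_\theta f(x_j;\theta_y)^\top\partial_{y_k}\theta_y$. Substituting the expression above yields the claimed formula.

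The main point needing care is regularity. Differentiating $F(y,\theta_y)=0$ in $\theta$ requires $f(x_i;\cdot)$ to be twice differentiable, which is implicit in the definition of $H_\lambda$ in \Cref{def:nondegenerate_local_min}; I will state this assumption explicitly. Also, we do not invoke the implicit function theorem to establish differentiability of $\theta_y$, since this is given by the $C^1$ selection. The theorem is only a consistency check: the invertibility of $H_\lambda$ shows that the locally defined branch is unique, so the derivative computed is that of the selected rule $A_\lambda$ and not of some other branch. No further obstacle arises; the identity holds pointwise for every $y$.
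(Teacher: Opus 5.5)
Your proposal is correct and follows the paper's proof essentially step for step: differentiate the first-order condition $\nabla_\theta L_{\lambda,y}(\theta_y)=0$ along the $C^1$ selection and compute $\partial_{y_k}F=-\frac1n\nabla_\theta f(x_k;\theta_y)$. You then invert $H_\lambda(y,\theta_y)$ and finish with the chain rule for $h_j$. Your remark that twice differentiability of $f(x_i;\cdot)$ is implicitly required is a fair clarification that the paper leaves unstated.
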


The proof is a direct application of the implicit function theorem to the first-order condition and is given in Appendix~\ref{app:section2_proofs}, Lemma \ref{app:implicit_response_derivative}.



At the trained point we set
\begin{align*}
g_i &:= \nabla_\theta f(x_i;\widehat\theta), &
\widehat G &:= \frac1n\sum_{i=1}^n g_i g_i^\top, \\
r_i &:= f(x_i;\widehat\theta)-Y_i, &
\widehat\Delta &:= \frac1n\sum_{i=1}^n r_i\nabla_\theta^2 f(x_i;\widehat\theta).
\end{align*}The Hessian of the regularized empirical objective at $\widehat\theta$ is $\widehat H_{\lambda}:=\nabla_\theta^2\widehat L(\widehat\theta)+\lambda I=\widehat G+\widehat\Delta+\lambda I$.

By Definition~\ref{def:nondegenerate_local_min}, $\widehat H_{\lambda}$ is invertible; at a strict local minimizer, $\widehat H_{\lambda}$ is positive definite. The matrix \(\widehat\Delta\) is the residual-curvature correction: it vanishes for predictors that are affine in the parameters, and it is small when the training residuals are small or if the local parameter curvature is mild.

Recall that $\deff(\widehat\theta;\lambda):=\tr((\widehat H_{\lambda}^{-1}\widehat G)^2)$. The covariance $\widehat G$ identifies directions visible to the predictions through the trained Jacobian features.  The inverse Hessian \(\widehat H_{\lambda}^{-1}\) measures the local sensitivity of the fitted solution in those directions.  Their product therefore gives a local degrees-of-freedom count for the trained predictor.

In the linear feature case, \(f(x;\theta)=\theta^\top\Phi(x)\), the second derivative \(\nabla_\theta^2 f\) vanishes.  Hence \(\widehat\Delta=0\), \(\widehat H_{\lambda}=\widehat G+\lambda I\), and
\[
    \deff(\widehat\theta;\lambda)
    =
    \tr\!\left(\bigl((\widehat G+\lambda I)^{-1}\widehat G\bigr)^2\right)
    =
    \sum_j\left(\frac{\mu_j(\widehat G)}{\mu_j(\widehat G)+\lambda}\right)^2
    =
    \dclass(\widehat G,\lambda).
\]

\subsection{Prediction stability}

We state our main stability result in its most general form, under strongly log-concave response noise. Recall that a probability density $\nu$ on $\mathbb R$ is \emph{$\alpha$-strongly log-concave} if $\nu(z)\propto \exp(-V(z))$ for some twice-differentiable potential $V$ with $V''(z)\ge \alpha>0$ on the support.

\begin{theorem}[Prediction stability]
\label{thm:main_prediction_stability}
Suppose the response noise variables \(\xi_1,\ldots,\xi_n\) are independent
and each \(\xi_i\) has an \(\alpha\)-strongly log-concave density. Let
\(A_\lambda\) satisfy the selection assumption above, set
\(\widehat\theta=A_\lambda(Y)\), and set
\(\widehat\theta^{(i)}=A_\lambda(Y^{(i)})\), where \(Y^{(i)}\) replaces only
\(Y_i\) by an independent copy \(Y_i'\). Then
\[
\frac1n\sum_{i=1}^n
\mathbb E\Big[
\big(f(x_i;\widehat\theta)-f(x_i;\widehat\theta^{(i)})\big)^2
\Big]
\le
\frac{4\,\mathbb E[\deff(\widehat\theta;\lambda)]}{\alpha n}.
\]
\end{theorem}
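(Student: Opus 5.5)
The plan is to reduce the replace-one prediction change to a variance of the prediction map and then bound that variance with the Brascamp--Lieb inequality. Fix \(i\) and write \(h_i(y)=f(x_i;\theta_y)\), so that \(f(x_i;\widehat\theta)=h_i(Y)\) and \(f(x_i;\widehat\theta^{(i)})=h_i(Y^{(i)})\). Conditionally on \(Y_{-i}\), the variables \(Y_i\) and \(Y_i'\) are i.i.d. Hence \(\mathbb E[(h_i(Y)-h_i(Y^{(i)}))^2\mid Y_{-i}]=2\,\mathrm{Var}_{Y_i}(h_i(Y)\mid Y_{-i})\). This is the replace-one centering step. The selection assumption guarantees \(\mathbb E[h_i(Y)^2]<\infty\), so every term is finite.

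Next I apply the one-dimensional Brascamp--Lieb inequality to the conditional law of \(Y_i=f^\star(x_i)+\xi_i\). This law is \(\alpha\)-strongly log-concave, so Brascamp--Lieb, or equivalently the Poincaré inequality with constant \(1/\alpha\), gives
\[
\mathrm{Var}_{Y_i}(h_i\mid Y_{-i})\le \frac1\alpha\,\mathbb E\bigl[(\partial_{y_i}h_i(Y))^2\mid Y_{-i}\bigr].
\]
The inequality applies because \(h_i\) is locally Lipschitz with square-integrable gradient. Taking expectations yields
\[
\mathbb E[(h_i(Y)-h_i(Y^{(i)}))^2]\le \frac{2}{\alpha}\,\mathbb E[(\partial_{y_i}h_i(Y))^2].
\]
I expect to lose at most a factor \(2\) against the stated constant \(4\). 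If a cruder route through \((a-b)^2\le 2a^2+2b^2\) around a centered quantity is needed, it would produce exactly \(4\), and the stated bound would still follow.

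The remaining step is to control \(\sum_i (\partial_{y_i}h_i)^2\) by the effective dimension, using Lemma~\ref{lem:implicit_response_derivative}. With \(y=Y\), \(\theta_Y=\widehat\theta\) and \(H_\lambda=\widehat H_\lambda\), the lemma gives \(\partial_{y_i}h_i=\frac1n g_i^\top\widehat H_\lambda^{-1}g_i\). Let \(S\) be the matrix with entries \(S_{jk}=\frac1n g_j^\top\widehat H_\lambda^{-1}g_k\); these are the diagonal entries of the Jacobian of the full response map. Then
\[
\sum_i(\partial_{y_i}h_i)^2=\sum_i S_{ii}^2\le \sum_{j,k}S_{jk}^2=\mathrm{tr}(S^\top S).
\]
Writing \(J\) for the \(n\times p\) matrix with rows \(g_j^\top\), we have \(S=\frac1n J\widehat H_\lambda^{-1}J^\top\), which is symmetric since \(\widehat H_\lambda\) is. Using \(\widehat G=\frac1nJ^\top J\) and cyclicity of the trace,
\[
\mathrm{tr}(S^2)=\frac1{n^2}\mathrm{tr}\bigl(J\widehat H_\lambda^{-1}J^\top J\widehat H_\lambda^{-1}J^\top\bigr)=\mathrm{tr}\bigl((\widehat H_\lambda^{-1}\widehat G)^2\bigr)=\deff(\widehat\theta;\lambda).
\]
Averaging over \(i\) with the factor \(\frac1n\) then gives the bound \(\frac{2\,\mathbb E[\deff]}{\alpha n}\le\frac{4\,\mathbb E[\deff]}{\alpha n}\).

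I expect the main obstacle to be the rigorous justification of Brascamp--Lieb conditionally. This requires measurability of the conditional laws, the integrability supplied by the selection assumption, and the fact that \(\partial_{y_i}h_i\) exists everywhere because \(A_\lambda\) is \(C^1\). A second subtle point is the relation \(\mathrm{tr}(S^2)=\sum_{j,k}S_{jk}^2\). It relies on the symmetry of \(\widehat H_\lambda\), which holds because it is a Hessian, and not on positive definiteness, so only invertibility is needed.
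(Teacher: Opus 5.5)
Your proof is correct, and it differs from the paper's at the centering/variance step. Your version is never worse and is often tighter.

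\textbf{The paper's route.} The paper uses only that $f(x_i;\widehat\theta)$ and $f(x_i;\widehat\theta^{(i)})$ have the same marginal law, and bounds their squared difference by $4\operatorname{Var}(f(x_i;\widehat\theta))$. The fallback you mention via $(a-b)^2\le 2a^2+2b^2$ is essentially this step. The paper then applies the $n$-dimensional Brascamp--Lieb inequality to this total variance. That picks up the whole $i$-th row of the response Jacobian, $\sum_k S_{ik}^2=\frac1n\|g_i\|^2_{\widehat H_\lambda^{-1}\widehat G\widehat H_\lambda^{-1}}$, and averaging over $i$ gives exactly $\tr(S^2)=\deff(\widehat\theta;\lambda)$.

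\textbf{Your route.} You use that $Y$ and $Y^{(i)}$ differ in a single coordinate. This gives the exact Efron--Stein identity $2\,\mathbb E[\operatorname{Var}(h_i\mid Y_{-i})]$. You then apply the one-dimensional Brascamp--Lieb inequality in that coordinate only, so only the self-influence $S_{ii}=\frac1n g_i^\top\widehat H_\lambda^{-1}g_i$ appears.

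\textbf{What each route buys.} Yours gives the constant $2$ instead of $4$. It also yields the intermediate quantity $\sum_i S_{ii}^2$, which can be much smaller than $\deff$: it is of order $(\tr S)^2/n$ when leverages are spread evenly. The stated bound then follows from $\sum_i S_{ii}^2\le\tr(S^2)$. Even on the paper's route the factor $4$ is loose, since $\mathbb E[(X-X')^2]=2\operatorname{Var}(X)-2\operatorname{Var}(\mathbb E[X\mid Y_{-i}])\le 2\operatorname{Var}(X)$. In return, the paper's route gives a pointwise variance bound $\operatorname{Var}(f(x;\widehat\theta))\le\frac{1}{\alpha n}\mathbb E\|\nabla_\theta f(x;\widehat\theta)\|^2_{\widehat H_\lambda^{-1}\widehat G\widehat H_\lambda^{-1}}$ at an arbitrary input $x$. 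This is of independent interest for confidence estimation and is not needed for the theorem itself.

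The technical points you flag are handled as you suggest. Conditioning reduces to Fubini plus independence of $Y_i$ from $Y_{-i}$. The identity $\tr(S^2)=\sum_{j,k}S_{jk}^2$ needs only the symmetry of $\widehat H_\lambda$.
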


A canonical example is Gaussian noise, $\xi_i\sim \cN(0,\sigma^2)$, where $V''\equiv 1/\sigma^2$, so $\alpha=1/\sigma^2$.


\begin{proof}[Proof sketch.]
  The proof technique is inspired by results in
  point variance estimation for non-linear least squares~\citep[Sec. D.1 and Lemma 1]{kuzborskij2025pointwiseconfidenceestimationnonlinear}.
First, \(\widehat\theta\) and \(\widehat\theta^{(i)}\) have the same marginal law, because replacing \(Y_i\) by an iid copy preserves the distribution of the sample.  Thus, for any fixed input \(x\), the two random predictions have the same mean.  Centering them at this common mean gives
\begin{equation}
\label{eq:main_centering_step}
\mathbb E\Bigl[
\bigl(f(x;\widehat\theta)-f(x;\widehat\theta^{(i)})\bigr)^2
\Bigr]
\le
4\,\operatorname{Var}\bigl(f(x;\widehat\theta)\bigr).
\end{equation}

Second, the variance is controlled by the response sensitivity of the fitted local minimizer and selected learning rule.  The first-order condition is
\[
0=
\frac1n\sum_{j=1}^n
\bigl(f(x_j;\widehat\theta)-Y_j\bigr)
\nabla_\theta f(x_j;\widehat\theta)
+
\lambda\widehat\theta.
\]
Differentiating this equation with respect to \(Y_k\) along the local solution branch gives the implicit-function identity \cite[Lemma 1]{kuzborskij2025pointwiseconfidenceestimationnonlinear};
\begin{equation}
\label{eq:main_if_identity}
    \frac{\partial \widehat\theta}{\partial Y_k}
    =
    \frac1n\widehat H_{\lambda}^{-1}g_k.
\end{equation}
Consequently, for a fixed input \(x\),
\[
    \frac{\partial}{\partial Y_k}f(x;\widehat\theta)
    =
    \frac1n
    \nabla_\theta f(x;\widehat\theta)^\top
    \widehat H_{\lambda}^{-1}g_k.
\]
The multivariate Brascamp--Lieb inequality \cite{brascamp1976, liebmodern} applied to the response-noise vector then yields
\begin{align}
\operatorname{Var}\bigl(f(x;\widehat\theta)\bigr)
&\le
\frac{1}{\alpha}\,
\mathbb E\sum_{k=1}^n
\left(
\frac1n
\nabla_\theta f(x;\widehat\theta)^\top
\widehat H_{\lambda}^{-1}g_k
\right)^2 \notag\\
&=
\frac{1}{\alpha\,n}\,
\mathbb E\left[
\left\|\nabla_\theta f(x;\widehat\theta)\right\|^2_{
\widehat H_{\lambda}^{-1}\widehat G\widehat H_{\lambda}^{-1}}
\right].
\label{eq:main_variance_bound}
\end{align}
For Gaussian noise, this step reduces to the standard Gaussian Poincar\'e inequality with $\alpha=1$. Hence, the variance of a prediction is bounded by its Jacobian norm in the sensitivity metric \(\widehat H_{\lambda}^{-1}\widehat G\widehat H_{\lambda}^{-1}\).

Finally, average \eqref{eq:main_variance_bound} over the training inputs.  Since \(g_i=\nabla_\theta f(x_i;\widehat\theta)\),
\[
\frac1n\sum_{i=1}^n
\left\|g_i\right\|^2_{
\widehat H_{\lambda}^{-1}\widehat G\widehat H_{\lambda}^{-1}}
=
\tr\!\left(\bigl(\widehat H_{\lambda}^{-1}\widehat G\bigr)^2\right)
=
\deff(\widehat\theta;\lambda).
\]
Combining this with \eqref{eq:main_centering_step} proves Theorem~\ref{thm:main_prediction_stability}.  Appendix~\ref{app:section2_proofs}  (Corollary~\ref{cor:log_concave_brascamp_lieb}) contains a detailed proof.

\end{proof}

\subsection{From stability to generalization}
\label{sec:stab-to-gen}

Recall the population risk \(L(\theta)\) and generalization gap \(\gen_D(\theta)\) defined above. We now combine prediction stability with the standard replace-one identity for square loss to control \(\E[\gen_D(\widehat\theta)]\).

\begin{corollary}[Square-loss generalization]
\label{cor:main_gen_gap}
Under the assumptions of Theorem~\ref{thm:main_prediction_stability},
\[
\mathbb E\bigl[L(\widehat\theta)-\widehat L(\widehat\theta)\bigr]
\le
\sqrt{
\frac{8\,\mathbb E[\widehat L(\widehat\theta)]\,
\mathbb E[\deff(\widehat\theta;\lambda)]}{\alpha n}}
+
\frac{2\,\mathbb E[\deff(\widehat\theta;\lambda)]}{\alpha n}.
\]
Consequently, by Young's inequality, for every \(\eta>0\),
\[
\mathbb E\bigl[L(\widehat\theta)-\widehat L(\widehat\theta)\bigr]
\le
\eta\,\mathbb E[\widehat L(\widehat\theta)]
+
\left(2+\frac2\eta\right)
\frac{\mathbb E[\deff(\widehat\theta;\lambda)]}{\alpha n}.
\]
\end{corollary}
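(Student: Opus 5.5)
We start from the standard replace-one identity for the fixed-design square loss. Since \(Y_i'\) is an independent copy of \(Y_i\) and \(\widehat\theta^{(i)}=A_\lambda(Y^{(i)})\) is independent of \(Y_i\) while having the same joint law with \(Y_i\) as \((\widehat\theta,Y_i')\) does, we can write
\[
\mathbb E[L(\widehat\theta)]
=\frac1{2n}\sum_{i=1}^n\mathbb E\bigl[(f(x_i;\widehat\theta)-Y_i')^2\bigr]
=\frac1{2n}\sum_{i=1}^n\mathbb E\bigl[(f(x_i;\widehat\theta^{(i)})-Y_i)^2\bigr].
\]
The second equality swaps the roles of \(Y_i\) and \(Y_i'\). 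Hence
\[
\mathbb E[\gen_D(\widehat\theta)]=\frac1{2n}\sum_{i=1}^n\mathbb E\bigl[(f(x_i;\widehat\theta^{(i)})-Y_i)^2-(f(x_i;\widehat\theta)-Y_i)^2\bigr].
\]

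Write \(a_i=f(x_i;\widehat\theta)-Y_i\) and \(\delta_i=f(x_i;\widehat\theta^{(i)})-f(x_i;\widehat\theta)\). Then \((a_i+\delta_i)^2-a_i^2=2a_i\delta_i+\delta_i^2\), and so
\[
\mathbb E[\gen_D(\widehat\theta)]=\frac1n\sum_i\mathbb E[a_i\delta_i]+\frac1{2n}\sum_i\mathbb E[\delta_i^2].
\]
By Theorem~\ref{thm:main_prediction_stability}, the quadratic term is at most \(\tfrac12\cdot 4\mathbb E[\deff]/(\alpha n)=2\mathbb E[\deff]/(\alpha n)\), which is exactly the second term of the claim.

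For the cross term we apply Cauchy--Schwarz twice, once in expectation and once over \(i\):
\[
\frac1n\sum_i\mathbb E[a_i\delta_i]\le\Bigl(\frac1n\sum_i\mathbb E a_i^2\Bigr)^{1/2}\Bigl(\frac1n\sum_i\mathbb E\delta_i^2\Bigr)^{1/2}.
\]
Here \(\frac1n\sum_i\mathbb E a_i^2=2\mathbb E[\widehat L(\widehat\theta)]\), and by Theorem~\ref{thm:main_prediction_stability} the second factor is at most \((4\mathbb E[\deff]/(\alpha n))^{1/2}\). The product is therefore \(\sqrt{8\,\mathbb E[\widehat L(\widehat\theta)]\,\mathbb E[\deff]/(\alpha n)}\), which gives the first display.

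For the Young form we use \(\sqrt{8AB}\le \eta A+2B/\eta\) with \(A=\mathbb E[\widehat L(\widehat\theta)]\) and \(B=\mathbb E[\deff]/(\alpha n)\). This follows from \(2\sqrt{(\eta A)(2B/\eta)}\le \eta A+2B/\eta\). Adding the \(2B\) term yields the constant \(2+2/\eta\).

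The only delicate step is the swap identity. It requires the integrability guaranteed by the selection assumption (\(\mathbb E[h_j(Y)^2]<\infty\)) so that all expectations are finite and may be split. It also uses the exchangeability of \((Y,Y_i')\): \(Y^{(i)}\) together with \(Y_i\) has the same law as \(Y\) together with \(Y_i'\).
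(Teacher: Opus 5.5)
Your proof is correct and matches the paper's argument step for step: the replace-one identity (which you derive directly from the exchangeability swap rather than citing it), the exact quadratic decomposition of the square loss, Cauchy--Schwarz first in expectation and then over $i$, Theorem~\ref{thm:main_prediction_stability} applied to $\frac1n\sum_i\mathbb E[\delta_i^2]$, and Young's inequality for the $\eta$-form. One small addition to your integrability remark: splitting the expectations also uses $\mathbb E[Y_i^2]<\infty$, which holds automatically because strongly log-concave noise has finite moments of all orders.
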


\begin{proof}[Proof sketch.]
Combine the standard replace-one identity for square loss \citep{shalev2014understanding} with the quadratic decomposition \(\ell(a,y)-\ell(b,y)=(b-y)(a-b)+\tfrac12(a-b)^2\), then apply Cauchy--Schwarz and Theorem~\ref{thm:main_prediction_stability}; the $\eta$-form follows by Young's inequality. The full argument is in Appendix~\ref{app:section2_proofs} (Corollary~\ref{cor:gen_gap}).
\end{proof}

The first term is the smoothness price paid by square loss when the training residual is nonzero; the second term is the stability price.  In a near-interpolating regime, the bound is essentially \(\mathbb E[\deff]/n\) for standard Gaussian noise.  


\section{How Data Geometry Affects Effective Dimension}
\label{sec:geometry_main}


Section~\ref{sec:stability_setup} reduces generalization to controlling \(\deff(\widehat\theta;\lambda)\).  We now investigate when this quantity is small.  Our main result here is that the trained Jacobian features that can be well represented by a small number of centers have relatively small effective dimension. Throughout this section we work in regimes where the residual-curvature correction is dominated by regularization, captured by the residual margin
\[
    \rho:=\|\widehat\Delta\|_{\op},
    \qquad
    t:=\lambda-\rho,
    \qquad
    t>0.
\]
This margin condition is the natural setting for our covering arguments: it ensures that the inverse Hessian metric in $\deff$ is well-controlled by a ridge inverse at level $t$. Our first observation is that, in this regime, $\deff$ is dominated by the classical effective dimension of the trained Jacobian covariance.
\begin{proposition}[Reduction to classical effective dimension at margin $t$]
\label{prop:nonlin_to_lin_main}
If $\rho<\lambda$, then $\widehat H_{\lambda} \succeq \widehat G+tI$, and
\begin{equation}
\label{eq:main_nonlin_to_lin}
    \deff(\widehat\theta;\lambda)
    =
    \tr\!\left(\bigl(\widehat H_{\lambda}^{-1}\widehat G\bigr)^2\right)
    \le
    \tr\!\left(\bigl((\widehat G+tI)^{-1}\widehat G\bigr)^2\right)
    =
    \dclass(\widehat G,t).
  \end{equation}  
Moreover, a matching lower bound also holds: $\dclass(\widehat G,\lambda+\rho)\le \deff(\widehat\theta;\lambda)$, see Theorem~\ref{thm:nonlin_to_lin_section3}.
\end{proposition}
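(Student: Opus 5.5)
The plan is a Loewner-order comparison followed by a symmetrization that turns the trace into the squared Frobenius norm of a positive semidefinite matrix.

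\textbf{Step 1: the Loewner bounds on the Hessian.} Since $\widehat\Delta$ is symmetric with $\|\widehat\Delta\|_{\op}=\rho$, we have $-\rho I\preceq\widehat\Delta\preceq\rho I$. Adding $\widehat G+\lambda I$ gives
\[
\widehat G+tI\;\preceq\;\widehat H_{\lambda}\;\preceq\;\widehat G+(\lambda+\rho)I .
\]
This is the first claim of the proposition. When $t>0$, the left-hand matrix $A_t:=\widehat G+tI$ is positive definite.

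\textbf{Step 2: rewriting $\deff$ in symmetric form.} The matrix $\widehat H_{\lambda}^{-1}\widehat G$ is not symmetric, so the trace is not directly monotone in $\widehat H_{\lambda}^{-1}$. I would use cyclicity of the trace and the square root $\widehat G^{1/2}$:
\[
\tr\!\left(\widehat H_{\lambda}^{-1}\widehat G\widehat H_{\lambda}^{-1}\widehat G\right)
=\tr\!\left(X^2\right),
\qquad X:=\widehat G^{1/2}\widehat H_{\lambda}^{-1}\widehat G^{1/2}\succeq 0 .
\]
Here $\widehat H_{\lambda}\succ0$, either by Definition~\ref{def:nondegenerate_local_min} or because $\widehat H_{\lambda}\succeq A_t\succ 0$.

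\textbf{Step 3: the upper bound.} Inversion is operator monotone on positive definite matrices, so $\widehat H_{\lambda}\succeq A_t$ gives $\widehat H_{\lambda}^{-1}\preceq A_t^{-1}$. Conjugating by $\widehat G^{1/2}$ then yields
\[
0\preceq X\preceq Y,\qquad Y:=\widehat G^{1/2}A_t^{-1}\widehat G^{1/2}.
\]
By Courant--Fischer, $0\preceq X\preceq Y$ implies $\mu_j(X)\le\mu_j(Y)$ for every $j$, with eigenvalues in decreasing order. All these eigenvalues are nonnegative, so $\tr(X^2)=\sum_j\mu_j(X)^2\le\sum_j\mu_j(Y)^2=\tr(Y^2)$. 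Finally, $A_t$ commutes with $\widehat G$, and undoing the symmetrization gives $\tr(Y^2)=\tr((A_t^{-1}\widehat G)^2)=\dclass(\widehat G,t)$, which is \eqref{eq:main_nonlin_to_lin}.

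\textbf{Step 4: the lower bound.} The argument is the same with the inequalities reversed. From $0\prec\widehat H_{\lambda}\preceq\widehat G+(\lambda+\rho)I$ we get $\widehat H_{\lambda}^{-1}\succeq(\widehat G+(\lambda+\rho)I)^{-1}$. This gives $X\succeq\widehat G^{1/2}(\widehat G+(\lambda+\rho)I)^{-1}\widehat G^{1/2}\succeq0$, and hence $\deff\ge\dclass(\widehat G,\lambda+\rho)$. This direction needs only the positive definiteness of $\widehat H_{\lambda}$, not the condition $t>0$.

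The main obstacle is Step 2. Loewner monotonicity does not transfer directly to $\tr((H^{-1}G)^2)$ because the factors do not commute. The symmetrization through $\widehat G^{1/2}$, combined with eigenvalue domination for ordered positive semidefinite matrices, is what resolves this. Everything else is standard operator-monotonicity bookkeeping.
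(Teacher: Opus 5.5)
Your proposal is correct and follows the paper's proof (Theorem~\ref{thm:nonlin_to_lin_section3} with Lemma~\ref{lem:metric_mono_section3}) essentially step for step: the same Loewner sandwich $\widehat G+(\lambda-\rho)I\preceq\widehat H_{\lambda}\preceq\widehat G+(\lambda+\rho)I$, then the symmetrization $\widehat G^{1/2}B^{-1}\widehat G^{1/2}$ and Loewner monotonicity of $X\mapsto\tr(X^2)$ on positive semidefinite matrices. Your Courant--Fischer justification of that monotonicity, which the paper asserts without detail, and your remark that the lower bound needs only $\widehat H_{\lambda}\succ 0$ rather than $\rho<\lambda$ are small but valid refinements.
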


The proof, given in Appendix~\ref{app:section3_proofs}, (Theorem~\ref{thm:nonlin_to_lin_section3}), follows from $-\rho I\preceq \widehat\Delta\preceq \rho I$ together with monotonicity of the squared ridge-filter trace under the L\"owner order. Once $\rho<\lambda$, it suffices to control the classical effective dimension of the trained Jacobian covariance at the margin $t=\lambda-\rho$. See also Figure \ref{fig:section3overview} in the Appendix for a diagram visualizing the results of this section.

We now develop covering bounds that allow us to control $\dclass(\widehat G,t)$, and hence $\deff$, from geometric properties of the trained Jacobian features. The high-level intuition is that whenever the trained Jacobian features are compressible (in the sense that they can be covered by a few moderate-sized radius balls) the associated effective dimension is small. We structure the section as follows: first, a covering bound (Proposition~\ref{prop:cover_main}) that controls $\deff$ in terms of the empirical Jacobian-feature cover; second, a manifold transfer (Proposition~\ref{prop:manifold_main}) that links input-space geometry to feature-space covers via a Lipschitz Jacobian map; and finally an explicit instantiation for one-hidden-layer ReLU networks (Proposition~\ref{prop:relu_main}). The first two are general; the last makes the mechanism concrete in a setting where activation-stable regions provide the natural pieces.

\begin{definition}
  \label{def:activation-stable-region}
  We define an \emph{activation-stable region} for a ReLU network with first-layer weight matrix $\widehat W=[\widehat w_1,\dots,\widehat w_q]^\top$ as a connected set $U\subset\mathbb R^d$ on which the activation pattern $\bigl(\mathbf 1\{\widehat w_j^\top x>0\}\bigr)_{j=1}^q$ is constant for all $x\in U$.
\end{definition}
On such a region, the trained Jacobian map $g(x)=\nabla_\theta f(x;\widehat\theta)$ is linear, and its local Lipschitz constant becomes computable in closed form.

\subsection{Covers: transferring input geometry to Jacobian geometry}

For \(S\subseteq \mathcal X\) in a metric space \((\mathcal X,d)\), write
\(B_d(c,\varepsilon):=\{x\in\mathcal X:d(x,c)\le\varepsilon\}\). An
\(\varepsilon\)-cover of \(S\) is a finite set \(C\) such that
\(S\subseteq \bigcup_{c\in C}B_d(c,\varepsilon)\). Then, the smallest possible \(|C|\) is defined to be the covering number
\(\mathcal N(S,d,\varepsilon)\). Define
the empirical Jacobian-feature covering number
\[
    \mathcal C_J(\varepsilon)
    :=
    \mathcal N(\{g_i\}_{i=1}^n,\|\cdot\|_2,\varepsilon),
    \qquad
    g_i=\nabla_\theta f(x_i;\widehat\theta).
\]

\begin{proposition}[Cover bound]
\label{prop:cover_main}
If \(\rho<\lambda\), then
\begin{equation}
\label{eq:main_cover_bound}
    \deff(\widehat\theta;\lambda)
    \le
    \min\!\left\{p,n,
    \inf_{\varepsilon>0}
    \left[
    \mathcal C_J(\varepsilon)
    +
    \frac{\varepsilon^2}{\lambda-\rho}
    \right]
    \right\}.
\end{equation}
\end{proposition}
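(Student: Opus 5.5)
By Proposition~\ref{prop:nonlin_to_lin_main}, when $\rho<\lambda$ we have $\deff(\widehat\theta;\lambda)\le \dclass(\widehat G,t)$ with $t=\lambda-\rho$. It therefore suffices to bound $\dclass(\widehat G,t)=\sum_j \bigl(\mu_j/(\mu_j+t)\bigr)^2$ by each of the three quantities in the minimum.

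\textbf{The trivial bounds $p$ and $n$.} Each summand is at most $1$, and it vanishes unless $\mu_j>0$. The number of positive eigenvalues is $\rank(\widehat G)\le\min\{p,n\}$, since $\widehat G$ is a $p\times p$ matrix that is a sum of $n$ rank-one terms. This gives the first two bounds.

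\textbf{The cover bound.} Fix $\varepsilon>0$ and an $\varepsilon$-cover $\{c_1,\dots,c_N\}$ of $\{g_i\}$ with $N=\mathcal C_J(\varepsilon)$. Assign each $g_i$ to a center $c_{\pi(i)}$ with $\|g_i-c_{\pi(i)}\|\le\varepsilon$. Write $g_i=c_{\pi(i)}+e_i$, and let $C=\frac1n\sum_i c_{\pi(i)}c_{\pi(i)}^\top$. Then $\rank(C)\le N$, since its range lies in the span of the centers.

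The cleanest route is to work with the feature (Gram) matrix rather than the covariance. Let $J\in\R^{n\times p}$ have rows $g_i^\top/\sqrt n$, so that $\widehat G=J^\top J$. Write $J=A+E$, where $A$ has rows $c_{\pi(i)}^\top/\sqrt n$ with $\rank(A)\le N$, and $E$ has rows $e_i^\top/\sqrt n$ with $\|E\|_F^2=\frac1n\sum_i\|e_i\|^2\le\varepsilon^2$.

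Set $s_j=\sigma_j(J)$, so that $\mu_j=s_j^2$. Use the elementary split $\bigl(s^2/(s^2+t)\bigr)^2\le \min\{1,s^2/t\}$. Apply the bound $1$ to the top $N$ indices, and the bound $s_j^2/t$ to the rest:
\[
\dclass(\widehat G,t)\le N+\frac1t\sum_{j>N}\sigma_j(J)^2 .
\]
By Eckart--Young (or Weyl's inequality for singular values, $\sigma_{N+k}(A+E)\le\sigma_{N+1}(A)+\sigma_k(E)=\sigma_k(E)$), we get
\[
\sum_{j>N}\sigma_j(J)^2\le \sum_k\sigma_k(E)^2=\|E\|_F^2\le\varepsilon^2 .
\]
Hence $\dclass(\widehat G,t)\le \mathcal C_J(\varepsilon)+\varepsilon^2/(\lambda-\rho)$. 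Taking the infimum over $\varepsilon$, and combining with the trivial bounds, gives \eqref{eq:main_cover_bound}.

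\textbf{Main obstacle.} The delicate step is the tail estimate. A naive Löwner comparison $\widehat G\preceq 2C+2E^\top E$ would introduce factors of $2$ and would require subadditivity of $\dclass$. The singular value interlacing argument avoids both: $\sigma_{N+k}(A+E)\le\sigma_k(E)$ holds because $\rank A\le N$, and this yields the exact constants in the statement.
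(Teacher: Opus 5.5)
Your proof is correct and is essentially the paper's argument. The paper likewise reduces to $\dclass(\widehat G,\lambda-\rho)$, bounds the tail eigenvalue sum of $\widehat G$ beyond index $K=\mathcal C_J(\varepsilon)$ by $\varepsilon^2$ using the span of the cover centers, applies $(\mu/(\mu+t))^2\le\min\{1,\mu/t\}$, and gets the $\min\{p,n\}$ ceiling from $\rank(\widehat G)$; the only difference is that the paper projects onto $\operatorname{span}(u_1,\dots,u_K)$ and uses Ky Fan's maximum principle for $\widehat G$, whereas you use the center-substituted low-rank matrix $A$ with Eckart--Young/Weyl for $J$, which is the dual form of the same inequality.
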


\begin{proof}[Proof sketch.]
The proof projects each $g_i$ onto a $K$-dimensional subspace spanned by an $\varepsilon$-cover of size $K=\mathcal C_J(\varepsilon)$, bounds the tail eigenvalues of $\widehat G$ by $\varepsilon^2$, and applies the pointwise bound $(\mu/(\mu+t))^2\le\min\{1,\mu/t\}$. The full proof is given in Appendix~\ref{app:section3_proofs} (Theorem~\ref{thm:covering_J_section3}).
\end{proof}

The covering number \(\mathcal C_J(\varepsilon)\) can be controlled a priori by covering the inputs and transferring the cover through a stable Jacobian map. This transfer is most powerful when the data lies on a low-dimensional manifold and the trained Jacobian is regular on appropriate pieces.

\begin{proposition}[Manifold and piecewise-Lipschitz feature covers]
\label{prop:manifold_main}
Suppose the training inputs lie on a compact \(C^1\) embedded submanifold $\mathcal M\subset\mathbb R^d$ of dimension $m$, with manifold covering constant $C_{\mathcal M}$ and diameter $D_{\mathcal M}$. Suppose the data-occupied part of $\mathcal M$ is covered by pieces $U_1,\dots,U_M\subset\mathcal M$ on which the trained Jacobian map $g(x)=\nabla_\theta f(x;\widehat\theta)$ is $L_r$-Lipschitz, and let $L_{\min}=\min_r L_r>0$. Then for every $0<\varepsilon\le D_{\mathcal M}L_{\min}$,
\begin{equation}
\label{eq:main_feature_cover_from_manifold}
    \mathcal C_J(\varepsilon)
    \le
    \min\!\left\{n,
    C_{\mathcal M}\varepsilon^{-m}\sum_{r=1}^M L_r^m
    \right\},
\end{equation}
and consequently, if $\rho<\lambda$, optimizing over $\varepsilon$ yields

\begin{equation}
\label{eq:main_optimized_manifold}
    \deff(\widehat\theta;\lambda)
    \le
    \min\!\left\{p,n,\,
    C_m
    \left(C_{\mathcal M}\sum_{r=1}^M L_r^m\right)^{2/(m+2)}
    (\lambda-\rho)^{-m/(m+2)}
    \right\},
\end{equation}
where $C_m=(1+m/2)(2/m)^{m/(m+2)}$.
\end{proposition}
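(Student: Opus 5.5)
The plan has two parts: a covering transfer through the Lipschitz pieces, which gives \eqref{eq:main_feature_cover_from_manifold}, and a one-dimensional optimization in $\varepsilon$, which combined with Proposition~\ref{prop:cover_main} gives \eqref{eq:main_optimized_manifold}.

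\emph{Covering transfer.} I take the manifold covering constant to mean that every subset $U\subseteq\mathcal M$ satisfies $\mathcal N(U,\|\cdot\|_2,\delta)\le C_{\mathcal M}\delta^{-m}$ for $0<\delta\le D_{\mathcal M}$.

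For each piece $U_r$, set $\delta_r=\varepsilon/L_r$. The hypothesis $\varepsilon\le D_{\mathcal M}L_{\min}$ gives $\delta_r\le D_{\mathcal M}$, so the covering constant applies to every piece. Fix a $\delta_r$-cover $\{c_{r,k}\}$ of $U_r$ of size at most $C_{\mathcal M}L_r^m\varepsilon^{-m}$.

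Each training input $x_i$ lies in some $U_r$, and hence within $\delta_r$ of some center $c_{r,k}$. Since $g$ is $L_r$-Lipschitz on $U_r$, we get $\|g(x_i)-g(c_{r,k})\|\le\varepsilon$. The centers lie in $U_r$, so $g$ is defined there and the Lipschitz bound applies to the pair. If one insists on centers drawn from $\{g_i\}$, one can use a $2\varepsilon$ cover instead; but $\mathcal N$ as defined allows arbitrary centers, so no loss occurs.

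The images $g(c_{r,k})$ over all $r$ and $k$ therefore form an $\varepsilon$-cover of $\{g_i\}$. Summing over $r$ gives the second term of the bound. The bound $n$ is trivial, since the points $g_i$ themselves form a $0$-cover.

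\emph{Optimization.} Plug $\mathcal C_J(\varepsilon)\le A\varepsilon^{-m}$, with $A=C_{\mathcal M}\sum_r L_r^m$, into Proposition~\ref{prop:cover_main}. Writing $t=\lambda-\rho$, this gives $\deff\le A\varepsilon^{-m}+\varepsilon^2/t$.

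Minimizing over $\varepsilon$, the first-order condition $mA\varepsilon^{-m-1}=2\varepsilon/t$ gives $\varepsilon_\star^{m+2}=mAt/2$. At $\varepsilon_\star$ we have $A\varepsilon_\star^{-m}=(2/m)\,\varepsilon_\star^2/t$, so the value is $(1+m/2)\,A\varepsilon_\star^{-m}$. Substituting $\varepsilon_\star$ gives $(1+m/2)(2/m)^{m/(m+2)}A^{2/(m+2)}t^{-m/(m+2)}$, which is exactly $C_m A^{2/(m+2)}t^{-m/(m+2)}$. The remaining terms $p$ and $n$ of the minimum come directly from Proposition~\ref{prop:cover_main}.

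\emph{Main obstacle.} The delicate step is the admissibility constraint $\varepsilon\le D_{\mathcal M}L_{\min}$ at the optimizer $\varepsilon_\star$.

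If $\varepsilon_\star$ exceeds $D_{\mathcal M}L_{\min}$, I would argue differently. At scales $\delta\ge D_{\mathcal M}$ a single ball covers each piece, so $\mathcal C_J(\varepsilon)\le M$, and this quantity is itself bounded by $C_{\mathcal M}\varepsilon^{-m}\sum_r L_r^m$ evaluated at the boundary value. The stated bound can then be shown to still dominate, possibly after adjusting the normalization of $C_{\mathcal M}$ (e.g.\ assuming $C_{\mathcal M}D_{\mathcal M}^{-m}\ge1$). Alternatively, one can take the bound as holding wherever $\varepsilon_\star$ is admissible, with the minimum against $n$ and $p$ covering the other cases.

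I would check this normalization first and state it explicitly.
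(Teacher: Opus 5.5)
Your covering transfer and the scalar optimization coincide with the paper's proof (\Cref{thm:piecewise_regular_section3} followed by \Cref{cor:optimized_nonlin_section3}), and your computation of $\varepsilon_\star$ and $C_m$ is correct. The problem is in your ``main obstacle'' paragraph.

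Your second option, asserting the optimized bound only when $\varepsilon_\star\le D_{\mathcal M}L_{\min}$, is the right one. It is exactly the hypothesis the paper imposes in \Cref{cor:optimized_nonlin_section3}, although the main-text statement leaves it out. The other two claims in that paragraph are false: the stated bound need not ``still dominate'', and the minimum with $p$ and $n$ does not cover the inadmissible case. Here is a counterexample.
\begin{itemize}
\item Take $f(x;\theta)=\theta^\top v$ with a fixed $v\ne0$. Then $g\equiv v$, $\widehat\Delta=0$, $\rho=0$, and $g$ is $L$-Lipschitz on $U_1=\mathcal M$ for every $L>0$. (Perturbing the features to $v+L(e^\top x)w$ makes $L$ sharp up to a fixed factor and changes $\deff$ only by $o(1)$.)
\item $\deff(\widehat\theta;\lambda)=\bigl(\|v\|_2^2/(\|v\|_2^2+\lambda)\bigr)^2$ does not depend on $L$.
\item By contrast, $C_m(C_{\mathcal M}L^m)^{2/(m+2)}\lambda^{-m/(m+2)}\to0$ as $L\to0$, and $p,n\ge1$, so the minimum does not rescue the bound.
\item This is precisely the inadmissible regime: $\varepsilon_\star\propto L^{m/(m+2)}\gg D_{\mathcal M}L$ for small $L$.
\end{itemize}
Your boundary argument also runs the wrong way. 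The value of $A\varepsilon^{-m}+\varepsilon^2/t$ at $\varepsilon=D_{\mathcal M}L_{\min}$ is at least its unconstrained infimum, so it can only certify a larger bound. Moreover, $C_{\mathcal M}D_{\mathcal M}^{-m}\ge1$ already holds automatically, because $\mathcal N(\mathcal M,\|\cdot\|_2,D_{\mathcal M})\ge1$; renormalizing therefore changes nothing. When $\varepsilon_\star$ is inadmissible, what you can actually state is the endpoint bound at $\varepsilon=D_{\mathcal M}L_{\min}$, or the unoptimized bound.

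A smaller point concerns your reading of $C_{\mathcal M}$. You assume that every subset $U\subseteq\mathcal M$ has a $\delta$-cover of size $C_{\mathcal M}\delta^{-m}$ with centers in $U$. The paper only bounds $\mathcal N(\mathcal M,\|\cdot\|_2,\delta)$ (\Cref{prop:manifold_cover_section3}), with unrestricted centers. Monotonicity then gives a cover of $U_r$, but its centers need not lie in $U_r$, so the $L_r$-Lipschitz bound cannot be applied to them. The fix is to replace each center by a point of $U_r$ inside its ball. This gives a cover by points of $U_r$ at radius $2\delta$, so you should take $\delta_r=\varepsilon/(2L_r)$ and pay a factor $2^m$ in the constant. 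The paper's proof passes over this step as well, so either keep your stronger definition explicitly as an assumption or absorb the $2^m$.
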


The full proof of Proposition~\ref{prop:manifold_main} is given in Appendix~\ref{app:section3_proofs} (Theorem~\ref{thm:piecewise_regular_section3} and Corollary~\ref{cor:optimized_nonlin_section3}). The cover transfer works as follows: covering each piece $U_r$ at input radius $\varepsilon/L_r$, the Lipschitz property turns each input ball into an $\varepsilon$-ball in Jacobian-feature space, so that summing over pieces gives the bound. This provides a direct route for the activation region complexity $M$ to control the generalization error.

\subsection{One-hidden-layer ReLU networks}

For a one-hidden-layer ReLU network $f(x;\theta)=q^{-1/2}a^\top\sigma(Wx)$ with $q$ hidden units, weights $W\in\mathbb{R}^{q\times d}$, output weights $a\in\mathbb{R}^q$, and parameter $\theta=(\mathrm{vec}(W),a)$, activation-stable regions provide an explicit realization of the pieces $U_r$ from Proposition~\ref{prop:manifold_main}. On each region $U_r$, the activation pattern is constant and we write $D_r\in\{0,1\}^{q\times q}$ for the corresponding diagonal gate matrix. The trained Jacobian map is linear on $U_r$, with local Lipschitz constant $L_r^2=\|S_r^{\mathrm{norm}}\|_{\op}$, where
\[
S_r^{\mathrm{norm}}=q^{-1}\widehat W^\top D_r\widehat W+q^{-1}\|D_r\widehat a\|_2^2 I_d
\]
measures local feature contraction (details in Appendix~\ref{app:section3_proofs}, Proposition~\ref{prop:relu_local_metric_section3}). The residual margin admits an explicit bound that allows us to choose $\lambda$ such that $\rho<\lambda$ holds.

\begin{proposition}[ReLU instantiation: residual control]
\label{prop:relu_main}
Suppose the fitted one-hidden-layer ReLU network is twice differentiable at $\widehat\theta$ on every training input (equivalently, the Hessian exists at $\widehat\theta$). Then
\[
    \rho
    \;\le\;
    \sqrt{2\widehat L(\widehat\theta)}
    \left(\frac1n\sum_{i=1}^n\|x_i\|_2^2\right)^{1/2}.
\]
In particular, on a manifold with $\|x\|_2\le R_{\mathcal M}$, $\rho\le R_{\mathcal M}\sqrt{2\widehat L(\widehat\theta)}$, so $\rho$ is small whenever the training loss is small. The proof is given in Appendix~\ref{app:section3_proofs}, Proposition~\ref{prop:rho_relu_section3}.
\end{proposition}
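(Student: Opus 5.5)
We bound $\rho=\|\widehat\Delta\|_{\op}$ with $\widehat\Delta=\frac1n\sum_i r_i\nabla_\theta^2 f(x_i;\widehat\theta)$. The plan has three steps: compute the parameter Hessian of a single one-hidden-layer ReLU prediction, bound its operator norm by $\|x_i\|_2$ up to the scaling, and combine over samples with Cauchy--Schwarz.

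\emph{Step 1 (Hessian structure).} Write $f(x;\theta)=q^{-1/2}\sum_{j=1}^q a_j\sigma(w_j^\top x)$. Twice differentiability at $\widehat\theta$ on every $x_i$ means $\widehat w_j^\top x_i\neq 0$ for all relevant $j$, so locally $\sigma(w_j^\top x_i)=D_{jj}(x_i)\,w_j^\top x_i$ with a frozen gate. Hence $f$ is locally bilinear in $(W,a)$. The $W$--$W$ block vanishes because $f$ is linear in each $w_j$ for fixed gates, and the $a$--$a$ block vanishes because $f$ is linear in $a$. The only nonzero part is the cross block $\partial_{a_j}\partial_{w_j} f=q^{-1/2}D_{jj}(x_i)\,x_i$. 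So
\[
\nabla_\theta^2 f(x_i;\widehat\theta)=\begin{pmatrix}0 & B_i\\ B_i^\top & 0\end{pmatrix},
\]
where $B_i\in\mathbb R^{qd\times q}$ is block diagonal with blocks $q^{-1/2}D_{jj}(x_i)x_i$.

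\emph{Step 2 (operator norm).} The operator norm of this symmetric off-diagonal matrix equals $\|B_i\|_{\op}$. This in turn equals the largest singular value among the block-diagonal pieces, namely $q^{-1/2}\|x_i\|_2\max_j D_{jj}\le q^{-1/2}\|x_i\|_2\le\|x_i\|_2$. The main point to check carefully is that the blocks act on orthogonal coordinates: column $j$ of $B_i$ is supported only on the coordinates of $w_j$. This is what makes the norm a maximum over $j$ rather than a sum over $j$; a sum would introduce an unwanted factor of $q$.

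\emph{Step 3 (combine).} By the triangle inequality and Cauchy--Schwarz,
\[
\rho\le\frac1n\sum_i|r_i|\,\|x_i\|_2\le\Bigl(\frac1n\sum_i r_i^2\Bigr)^{1/2}\Bigl(\frac1n\sum_i\|x_i\|_2^2\Bigr)^{1/2},
\]
and $\frac1n\sum_i r_i^2=2\widehat L(\widehat\theta)$. This gives the claimed bound. The manifold corollary follows immediately from $\frac1n\sum_i\|x_i\|^2\le R_{\mathcal M}^2$.

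The main obstacle is Step 1: justifying that the Hessian exists and has exactly this form. It requires handling the gates as locally constant, which uses the twice-differentiability assumption to exclude samples sitting on kinks, and verifying that no second-order term from $\sigma$ survives.
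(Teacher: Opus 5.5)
Your proof is correct and follows the paper's argument. The Hessian of each prediction reduces to the off-diagonal $a$--$W$ block, its operator norm is at most $\|x_i\|_2$, and the triangle inequality plus Cauchy--Schwarz with $\frac1n\sum_i r_i^2=2\widehat L(\widehat\theta)$ finishes the proof. The only cosmetic difference is that you compute $\|B_i\|_{\mathrm{op}}=q^{-1/2}\|x_i\|_2\max_j D_{jj}$ exactly from the block-diagonal structure, whereas the paper bounds it via $\|D_iUx_i\|_2\le\|U\|_F\|x_i\|_2$ for the unnormalized model $a^\top\sigma(Wx)$, so your extra $q^{-1/2}$ factor is a harmless sharpening.
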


The number $M$ of activation-stable regions at initialization can scale exponentially with depth and neuron count in the worst case \cite{montufar2014number}, but on average $M$ is only linear in the neuron count \cite{pmlr-v97-hanin19a}.

\section{Experiments}
\label{sec:experiments}
We empirically test the main claims suggested by or required for the theory.  Here, we present results showing ridge-regularized neural networks induce a small number of activation regions in Figure \ref{fig:main_activation_regions}, and that generalization bounds that track closely to observed gaps in synthetic and real data in Figure \ref{fig:main_supportive_bounds}.
Figure \ref{fig:main_deff_validation_cluster} demonstrates the tightness of the bound from Proposition \ref{prop:nonlin_to_lin_main} and that $\deff$ after training is much lower than at initialization. 
Full details and additional experiments are deferred to Appendix~\ref{app:experimental_details}.

\begin{figure*}[h]
  \centering
  \includegraphics[width=\textwidth]{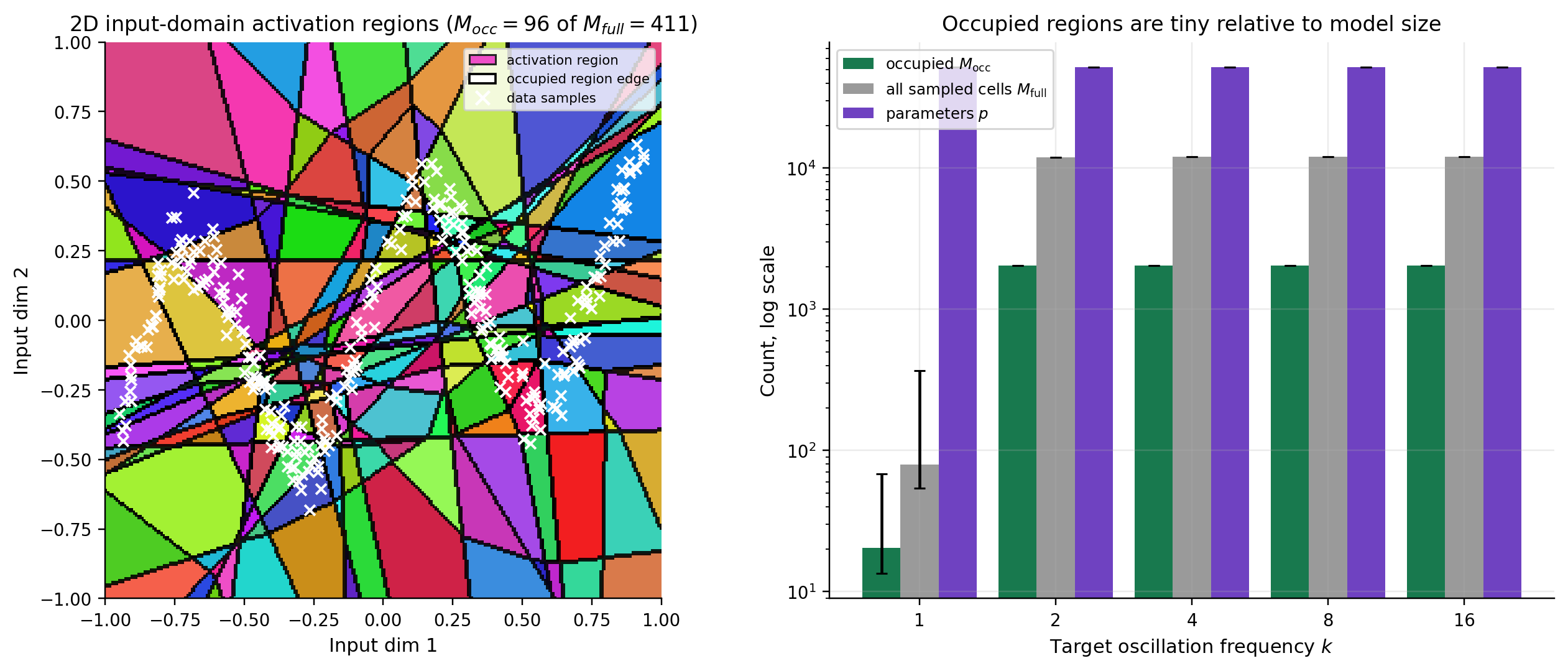}
  \caption{\textbf{The number of occupied activation regions is small.}  The left panel visualizes a two-dimensional ReLU input-domain partition; only cells containing data matter for the bound.  The right panel measures the same count based on the frequency.  The number of occupied regions is orders of magnitude smaller compared with the parameter count \(p=51{,}712\).}
  \label{fig:main_activation_regions}
\end{figure*}

\begin{figure*}[!tbp]
  \centering
  \includegraphics[width=\textwidth]{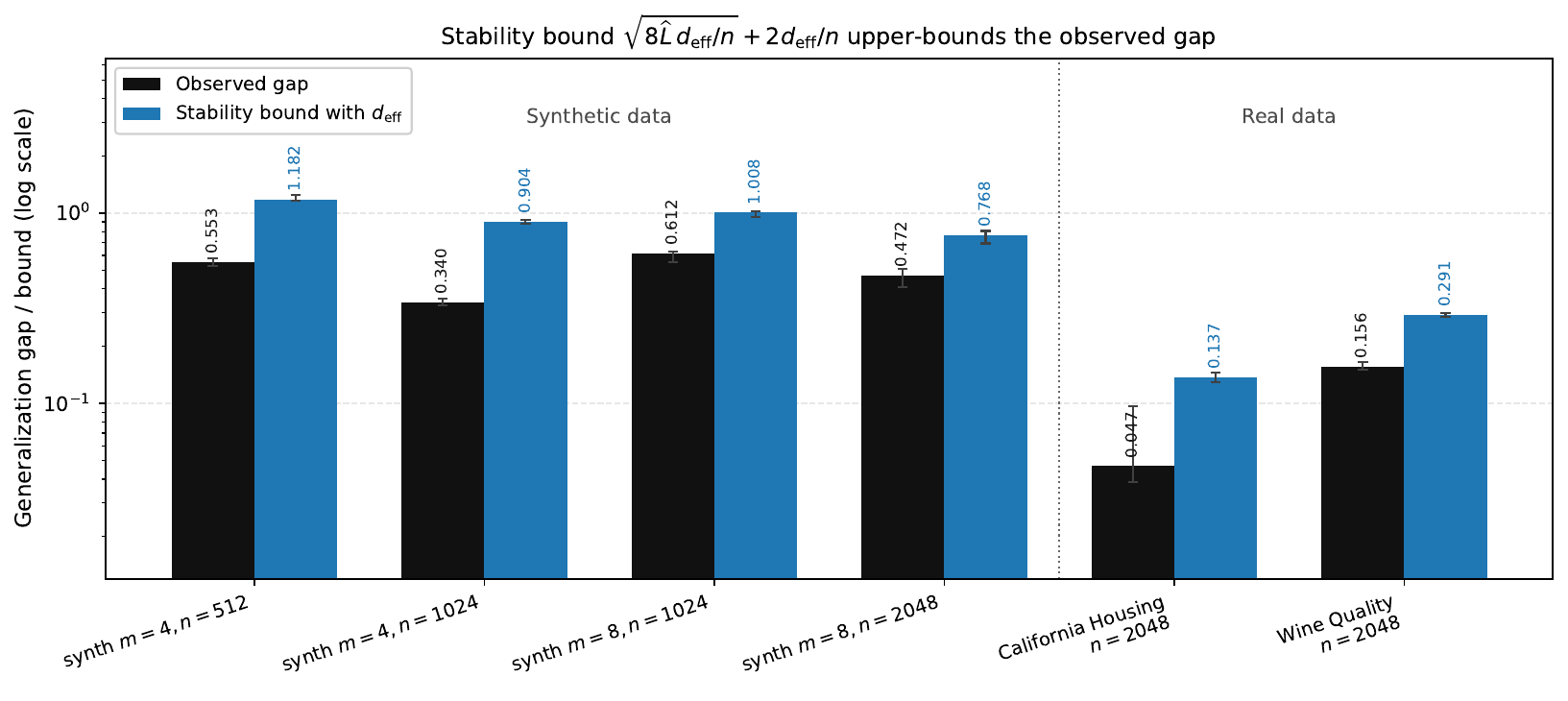}
  \caption{\textbf{
  The effective-dimension stability bound upper-bounds the observed gaps.}  
  In every configuration, the trained \(d_{\rm eff}\) bound remains above but close to the observed gap.}
  \label{fig:main_supportive_bounds}
\end{figure*}


\section{Discussion}
\label{sec:discussion}
We showed that stability in ridge-regularized nonlinear least squares is controlled by a learned-geometry effective dimension, yielding data-dependent bounds that closely track observed generalization gaps. We explored the shallow ReLU setting standard in NTK analysis, but replaced the initialization Jacobian with our more general effective dimension, which let us study the effect of feature learning and activation-region complexity that fixed-kernel analyses cannot.

We also note some limitations.  Our analysis focuses on fixed-design square-loss regression with strongly log-concave noise, explicit ridge regularization, and a local non-degeneracy condition. Extending our framework to more realistic settings raises several technical challenges. In this work, we deliberately focus on the fixed-design setting to isolate the core mechanism as cleanly as possible. Moving to random design would require accounting for how the learned Jacobian geometry interacts with the data distribution, and the sensitivity analysis based on implicit function theory does not carry over directly to this setting. Extending the analysis to other loss functions is also nontrivial: the square loss plays a central role in enabling both the stability argument and the associated variance control, and for losses that differ substantially from it, these tools are no longer readily applicable. Another obstacle lies in capturing implicit regularization from gradient-based training. Our current approach relies on the sensitivity of a well-defined, nondegenerate local minimizer via an explicit inverse-Hessian characterization, whereas gradient-based methods typically produce solutions shaped by the entire optimization trajectory and may not admit such a representation. Developing tools to capture this algorithm-dependent geometry, and to relate it to stability and generalization, remains an important direction for future work.

\section{Acknowledgements}
Ayub Kharel acknowledges funding support from His Majesty’s Government in the development of this research. Patrick Rebeschini was funded by UK Research and Innovation (UKRI) under the UK government’s Horizon Europe funding guarantee [grant number EP/Y028333/1].

\bibliographystyle{plain}
\bibliography{learning_capitalized}

\clearpage
\appendix

\section{Related Work}
\label{app:related_work}

This appendix expands the discussion in the introduction and positions the paper relative to various other directions including stability-based generalization, effective dimension in kernel methods, neural tangent and fixed-kernel analyses of neural networks, and geometric accounts based on manifolds, coverings, and activation regions.  The common theme is that many existing results control generalization through either a global function-class complexity, a fixed kernel, or an algorithmic trajectory.  Our results however isolate a local, trained-model quantity, the Jacobian covariance at the fitted point, measured in the inverse Hessian metric of the nonlinear least-squares objective.

\subsection{Stability and generalization}

Classical learning-theoretic bounds control the generalization gap uniformly over a hypothesis class through VC dimension, pseudodimension, metric entropy, or Rademacher complexity \cite{anthony1999neural,bartlett2019nearly,koltchinskii2002rademacher}.  PAC-Bayesian and information-theoretic approaches give nonuniform alternatives by replacing a worst-case class complexity with a posterior-dependent or information-dependent term \cite{mcallester1998pac,shalev2014understanding,xu2017information}.  In the neural-network setting, these approaches have produced influential norm-, margin-, PAC-Bayes-, and compression-based bounds \cite{bartlett2017spectrally,neyshabur2018pac,dziugaite2017computing,lotfi2022pac}.  They are important baselines for any theory of generalization in overparameterized models.  However, several works have also emphasized that conventional capacity measures and uniform convergence arguments can miss the behavior of trained deep networks, especially in regimes where networks interpolate or where the relevant inductive bias is strongly algorithm-dependent \cite{zhang2021understanding,nagarajan2019uniform}.  Our bounds are closer in spirit to such nonuniform views, but the nonuniform object here is the local Jacobian geometry of the actual fitted least-squares solution.

Algorithmic stability offers a direct route from the sensitivity of a learning rule to expected generalization \cite{bousquet2002stability,shalev2014understanding}.  The stability analysis of stochastic gradient methods \cite{pmlr-v48-hardt16} showed that the number of optimization steps, Lipschitz/smoothness constants, and convexity assumptions can be used to control generalization.  Subsequent work extended stability analyses to broader convex, nonsmooth, nonconvex, and weakly convex settings, often by imposing optimization-landscape conditions such as the Polyak--Lojasiewicz or quadratic-growth condition, or by using algorithmic mechanisms such as step-size decay, clipping, noise, or other forms of regularization \cite{CharlesP18,bassily2020stability,lei2020finegrained,lei2023stability}.  Our perspective is different in that we do not prove that a particular optimizer is stable along its whole trajectory.  Instead, we analyze the replace-one sensitivity of a nondegenerate local minimizer of the original ridge-regularized nonlinear least-squares objective, which can be thought of as the limiting solution of any optimizer.  Stability arises when the fitted predictor has low effective dimension in the trained Jacobian metric.

The proof is also related to sensitivity and influence-function analyses for regularized estimators.  In the linear least-squares case, the response derivative is governed by the ridge inverse covariance.  For nonlinear least squares, the analogous derivative involves the inverse Hessian at the fitted local minimizer.  The pointwise confidence bounds of Kuzborskij and Abbasi-Yadkori \cite{kuzborskij2025pointwiseconfidenceestimationnonlinear} use closely related inverse-Hessian sensitivity quantities for fixed-design nonlinear least squares.  Our contribution is to convert this local response sensitivity into an on-average stability and generalization bound, and then to identify when the resulting trace quantity is small.  The use of Gaussian Poincar\'e and Brascamp--Lieb inequalities \cite{brascamp1976, liebmodern} places the argument in the broader tradition of variance-control methods, but the final complexity measure is specific to the geometry learned by the fitted model.

\subsection{Effective dimension and kernel ridge regression}

Effective dimension, statistical dimension, and degrees of freedom are central quantities in the analysis of regularized least squares and kernel ridge regression.  For a fixed kernel covariance or empirical Gram matrix, quantities of the form
\[
    \operatorname{tr}\bigl((K+\lambda I)^{-1}K\bigr)
    \quad\text{or}\quad
    \operatorname{tr}\bigl(((K+\lambda I)^{-1}K)^2\bigr)
\]
measure how many spectral directions survive ridge regularization.  Such quantities appear in learning-rate analyses for kernel regression \cite{zhang2005learning,CaponnettoV07,neu2018iterate}, in low-rank kernel approximation and Nystr\"om analyses \cite{bach2013sharpanalysislowrankkernel}, and in statistical-computational tradeoffs for random features and Fourier-feature approximations \cite{rudi2021generalizationpropertieslearningrandom,avron2017random}.  Recent work on ridgeless regression and benign overfitting also shows that spectral structure can permit good generalization even when models interpolate, especially when covariance or kernel eigenvalues have favorable decay \cite{liang2020just,bartlett2020benign}.

The present paper should be viewed as a nonlinear extension of this spectral line of work, but with two important changes.  First, the feature map is not fixed in advance.  In a neural network or other nonlinear predictor, the relevant Jacobian features can change substantially during training.  Our effective dimension therefore uses the trained Jacobian covariance \(\widehat G\), instead of the covariance at initialization.  Second, the local inverse metric is different to \((\widehat G+\lambda I)^{-1}\).  Nonlinearity contributes a residual-curvature term, so the metric is the inverse Hessian \(\widehat H_{\lambda}^{-1}\) of the fitted objective.
A notion of effective dimension involving $\wh H_{\lambda}^{-1}$ appeared
in generalization analysis of Gibbs predictors with non-convex potentials \citep{kuzborskij2019distribution}, however, here we focus on deterministic learning procedure rather than a randomized one.
When the predictor is affine in parameters, or when the residual-curvature term vanishes, our quantity reduces to the classical ridge effective dimension.  Away from that case, it captures how the learned feature geometry and local objective curvature jointly determine sensitivity.

Our partition and covering bounds are also connected to low-rank approximation results for kernel methods.  In kernel ridge regression, fast spectral decay or accurate low-rank approximation lowers the effective degrees of freedom and computational cost \cite{bach2013sharpanalysislowrankkernel,avron2017random}.  Here the low-rank object is not a pre-existing kernel matrix but the empirical covariance of trained Jacobian features.  A partition of the sample into cells with small within-cell Jacobian scatter gives a low-rank-plus-residual decomposition of \(\widehat G\).  This is how occupied activation regions, empirical covers, and manifold covers enter the bound, as geometric mechanisms for making the trained Jacobian covariance effectively low-rank.

\subsection{NTK and fixed-kernel neural-network theory}

Neural tangent kernel theory analyzes wide neural networks through a linearization around random initialization \cite{jacot2018neural}.  In this regime, training dynamics can be approximated by kernel gradient descent with an initialization kernel, and several works use this framework to establish optimization and generalization guarantees for overparameterized networks \cite{du2019gradientdescentfindsglobal,arora2019exactcomputationinfinitelywide,cao2019generalizationboundsstochasticgradient}.  The linearized perspective was further clarified by results showing that sufficiently wide networks evolve as linear models under gradient descent \cite{lee2019wide}.  The broader ``lazy training'' viewpoint identifies scaling regimes in which parameters move little and the predictor remains close to its first-order Taylor expansion \cite{chizat2019lazy}.

This fixed-kernel perspective is powerful, but it intentionally suppresses feature learning.  Work on lazy versus rich regimes shows that the scale of initialization and parametrization can control whether training behaves like kernel regression or instead learns representations that are not captured by a fixed RKHS norm \cite{woodworth2020kernel}.  Related infinite-width parametrizations such as maximal update parametrization were developed precisely to preserve nontrivial feature learning as width changes \cite{yang2021tensor}.  Our analysis is aimed at this learned-feature side of the picture.  We do not assume infinite width, small parameter displacement, or a fixed tangent kernel.  The relevant kernel-like object is the Jacobian Gram matrix at the trained parameter, and the corresponding effective dimension is evaluated in the local Hessian metric of the nonlinear objective.

The distinction from NTK theory is especially visible in the experiments and in the covering bounds.  NTK analyses typically ask whether the initialization Jacobian Gram is well conditioned and remains stable during training.  Our bounds ask whether the trained Jacobian features are compressible after training.  Thus the theory can explain a decrease in effective dimension from initialization to the fitted model, a phenomenon that fixed-kernel analyses are not designed to capture.

\subsection{Geometry, manifolds, coverings, and activation regions}

Covering numbers and metric entropy are classical tools for converting geometric size into statistical complexity.  The basic idea goes back to metric entropy and capacity in functional spaces \cite{kolmogorov1961entropy}, and it became central in empirical process theory through chaining, entropy integrals, and uniform laws of large numbers \cite{dudley1967sizes,pollard1984convergence,vanderVaartWellner1996,dudley1999uniform,carl1990entropy}.  In learning theory, Rademacher and Gaussian complexity bounds can often be proved by first covering a function class and then applying symmetrization and chaining \cite{bartlettMendelson2002rademacher,koltchinskii2002rademacher}.
In overparameterized learning problems such bounds can be way too pessimistic.
To this, a recent literature has also explored notions of algorithm-dependent covers of the function class, for instance, the subset of the parameter space that is realized by the optimization algorithm \citep{camuto2021fractal,dupuis2024uniform}.

Our use of covers is deliberately more local.  Rather than covering the entire nonlinear neural-network function class, we cover the finite set of trained Jacobian features \(\{\nabla_\theta f(x_i;\widehat\theta)\}_{i=1}^n\), so the complexity term reflects the geometry actually realized by the fitted model on the observed data.

The closest classical precedent for the Lipschitz-covering part of our argument is the metric-space learning framework of von Luxburg and Bousquet \cite{vonluxburg2004distance}.  They study Lipschitz classifiers on bounded metric spaces, relate inverse Lipschitz constant to a margin, and use a covering-number approach (via Dudley-type entropy bounds and Kolmogorov--Tikhomirov covering estimates for Lipschitz balls) to control Rademacher complexity.  Follow-up work on classification and regression in metric spaces makes the dependence on doubling or intrinsic dimension more algorithmic, often through Lipschitz extension and nearest-neighbor primitives \cite{gottlieb2014efficientclassification,gottlieb2017efficientregression}.  Our result is related in that Lipschitz continuity transfers input-space covers into a statistical complexity bound.  The main difference however is that the functions being covered are not scalar Lipschitz classifiers or regressors.  In our work, they are parameter-gradient feature vectors after training, and the cover enters through an effective-dimension trace rather than through a uniform Rademacher bound.

A separate line of work studies how the geometry of the data distribution affects learning in high ambient dimension.  The manifold hypothesis asserts that many high-dimensional data sets are concentrated near lower-dimensional geometric sets, an intuition that motivated classical nonlinear dimension-reduction methods such as locally linear embedding, Laplacian eigenmaps, Hessian eigenmaps, diffusion maps, and manifold regularization \cite{roweis2000nonlinear,tenenbaum2000global,belkin2003laplacian,donoho2003hessian,coifman2006diffusion,belkin2006manifold}.  Theoretical analyses of graph Laplacians and related methods clarified when empirical neighborhood graphs approximate intrinsic differential operators on the manifold \cite{belkin2008towards}.  These works motivate the assumption that the data-occupied part of input space has low intrinsic dimension.  We use the manifold cover only as a route to controlling the trained Jacobian feature cover.

Nonparametric regression on manifolds is especially relevant because our setting is fixed-design square-loss regression.  Classical local-polynomial and local-linear methods can adapt to an unknown lower-dimensional manifold and achieve rates governed by intrinsic dimension rather than ambient dimension \cite{bickel2007local,cheng2013local}.  Recent neural-network theory establishes analogous intrinsic-dimension behavior for ReLU networks under exact or approximate manifold support, low Minkowski dimension, or related geometric assumptions \cite{schmidt2019deep,nakada2020adaptive,chen2022nonparametric,jiao2023deep}.  Empirical work on natural images also supports the view that intrinsic dimension is much smaller than pixel dimension and that lower intrinsic dimension can make tasks easier to learn \cite{levina2005maximum,pope2021intrinsic}.  These results analyze approximation or estimation rates for function classes.  Our bound focuses on controlling the stability of a particular trained solution through the empirical geometry of its Jacobian features.

For ReLU and other piecewise-linear networks, the input space is partitioned into activation or linear regions.  Early response-region analyses and worst-case expressivity bounds show that depth can create many more regions than comparable shallow architectures, in some cases exponentially many in depth \cite{pascanu2013response,montufar2014number,raghu2017expressive,serra2018bounding}.  Average-case, initialization-level under a good intialization, and training-level results are often much smaller than these worst-case counts.  Hanin et al \cite{pmlr-v97-hanin19a, hanin2019fewpatterns} studies activation patterns/regions and argues that deep ReLU networks have surprisingly few such patterns at initialization and during training.  The spline viewpoint of Balestriero and Baraniuk \cite{balestriero2018spline} similarly interprets piecewise-linear networks as adaptive affine spline operators whose local affine pieces are selected by activation patterns.  These works explain why activation partitions are natural geometric objects, but they do not by themselves yield a stability bound for nonlinear least-squares training.

More recent work connects activation-region geometry to learned representations and robustness.  Patel and Montufar \cite{patel2025localcomplexitylinearregions} define a local complexity measure for the density of linear regions near a data distribution and relate lower local complexity to low-dimensional learned representations.  Empirically, O'Brien et al \cite{o'brien2025using} study local complexity as a predictor of out-of-distribution behavior.  The present paper turns this geometric intuition into a direct generalization mechanism for square-loss regression. Only data-occupied regions enter the partition bound, and the relevant quantity is not the number of all possible regions (which is very large) but the much smaller number of occupied pieces, together with the within-piece variation of trained Jacobian features.  Results on simplicity bias in two-layer ReLU networks \cite{boursier2025simplicitybiasoptimizationthreshold} are consistent with this picture, since training can favor simpler solutions than arbitrary interpolation even in overparameterized models.

Finally, several empirical and theoretical works associate good generalization with low local sensitivity or collapsed representations.  Jacobian-based margin and robustness analyses argue that controlling the input-output Jacobian near the data can improve generalization or robustness \cite{sokolic2017robust,novak2018sensitivity}.  Neural collapse studies show that late-stage classification training can produce highly structured within-class feature geometry \cite{papyan2020prevalence,zhu2021geometric,2025neuralcollapsegloballyoptimal}.  These works are adjacent but not identical to our setting.  We study regression with square loss, and our central object is the parameter-gradient feature map \(x\mapsto \nabla_\theta f(x;\widehat\theta)\), rather than the input-output Jacobian or the penultimate-layer classifier geometry.  Nevertheless, all of these lines support the broader idea that learned local geometry is the relevant complexity controlling generalization.

\section{Proofs for Section~\ref{sec:stability_setup}}
\label{app:section2_proofs}

We establish here how to relate the notions of effective dimension we develop back to generalization error.

\subsection{Proof of Lemma~\ref{lem:implicit_response_derivative}}
\label{app:implicit_response_derivative}

\begin{proof}
Let \(F(\theta,y)=\nabla_\theta L_{\lambda,y}(\theta)\). Since \(\theta_y=A_\lambda(y)\) is a local minimizer, \(F(\theta_y,y)=0\). Differentiating this identity with respect to \(y_k\) gives
\[
D_\theta F(\theta_y,y)\partial_{y_k}\theta_y+\partial_{y_k}F(\theta_y,y)=0 .
\]
Here \(D_\theta F(\theta_y,y)=H_\lambda(y,\theta_y)\), while
\[
\partial_{y_k}F(\theta_y,y)=-\frac1n\nabla_\theta f(x_k;\theta_y).
\]
Therefore \(\partial_{y_k}\theta_y=\frac1n H_\lambda(y,\theta_y)^{-1}\nabla_\theta f(x_k;\theta_y)\). Applying the chain rule to \(h_j(y)=f(x_j;\theta_y)\) gives
\[
\partial_{y_k}h_j(y)=\frac1n\nabla_\theta f(x_j;\theta_y)^\top H_\lambda(y,\theta_y)^{-1}\nabla_\theta f(x_k;\theta_y),
\]
as claimed.
\end{proof}

\subsection{Classical Effective Dimension $\dclass(\widehat G,\lambda)$}

Assume first that $Y_i=f^\star(x_i)+Z_i$, where $Z=(Z_1,\ldots,Z_n)\sim\cN(0,I_n)$, and that $\widehat\Delta=0$. This is the linear-in-parameters case. 

\begin{theorem}[Prediction stability via classical effective dimension]
\label{thm:stability_lin}
Under the Gaussian-noise and local-minimum assumptions, and assuming $\widehat\Delta=0$,
\[
\frac1n\sum_{i=1}^n
\mathbb E\Bigl[
\bigl(f(x_i;\widehat\theta)-f(x_i;\widehat\theta^{(i)})\bigr)^2
\Bigr]
\le
\frac{4\,\mathbb E[\dclass(\widehat G,\lambda)]}{n},
\]
where $\widehat\theta^{(i)}$ denotes the fitted parameter obtained from the sample in which only $Y_i$ is replaced by an independent copy $Y_i'$.
\end{theorem}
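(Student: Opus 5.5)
The plan follows the sketch of Theorem~\ref{thm:main_prediction_stability}, specialised to standard Gaussian noise and $\widehat\Delta=0$. I first use exchangeability. Because $Y_i'$ is an independent copy of $Y_i$, the vectors $Y$ and $Y^{(i)}$ have the same law. So for each fixed $i$, the predictions $h_i(Y)=f(x_i;\widehat\theta)$ and $h_i(Y^{(i)})=f(x_i;\widehat\theta^{(i)})$ are identically distributed with common mean $m_i=\mathbb E[h_i(Y)]$, which is finite by the selection assumption. Writing $h_i(Y)-h_i(Y^{(i)})=(h_i(Y)-m_i)-(h_i(Y^{(i)})-m_i)$ and applying $(a-b)^2\le 2a^2+2b^2$ gives
\[
\mathbb E\bigl[(h_i(Y)-h_i(Y^{(i)}))^2\bigr]\le 4\operatorname{Var}(h_i(Y)).
\]
This is the factor $4$ in the statement. (Independence would give the sharper factor $2$ after expanding the cross term, but that is not needed.)

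Next I bound each variance with the Gaussian Poincar\'e inequality for $Z\sim\cN(0,I_n)$, applied to $z\mapsto h_i(f^\star+z)$. This map is locally Lipschitz with square-integrable gradient by the selection assumption, and a standard approximation argument covers the locally Lipschitz case. The result is $\operatorname{Var}(h_i(Y))\le \mathbb E\|\nabla_y h_i(Y)\|_2^2$. Lemma~\ref{lem:implicit_response_derivative} gives $\partial_{y_k}h_i=\frac1n g_i^\top \widehat H_\lambda^{-1}g_k$, and since $\widehat\Delta=0$ we have $\widehat H_\lambda=\widehat G+\lambda I$. Therefore
\[
\|\nabla_y h_i\|_2^2=\frac1{n^2}\sum_k \bigl(g_i^\top(\widehat G+\lambda I)^{-1}g_k\bigr)^2=\frac1n\, g_i^\top(\widehat G+\lambda I)^{-1}\widehat G(\widehat G+\lambda I)^{-1}g_i .
\]

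Finally I average over $i$ inside the expectation and use $\frac1n\sum_i g_ig_i^\top=\widehat G$ together with cyclicity of the trace:
\[
\frac1n\sum_i\|\nabla_y h_i\|_2^2=\frac1n\tr\bigl((\widehat G+\lambda I)^{-1}\widehat G(\widehat G+\lambda I)^{-1}\widehat G\bigr)=\frac1n\dclass(\widehat G,\lambda).
\]
Combining this with the first two steps gives the claimed bound.

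The only delicate point is justifying Poincar\'e for $h_i$. The gradient identity of Lemma~\ref{lem:implicit_response_derivative} must hold almost everywhere along the selected branch, which follows from the $C^1$ selection assumption, and the moment conditions ensure both sides are finite. Everything else is algebra.
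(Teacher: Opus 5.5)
Your proposal is correct and follows essentially the same route as the paper. The replace-one centering step gives the factor $4$; Gaussian Poincar\'e combined with the implicit response derivative (with $\widehat H_\lambda=\widehat G+\lambda I$ since $\widehat\Delta=0$) bounds each prediction variance; and averaging over the training inputs yields $\frac1n\dclass(\widehat G,\lambda)$ through the trace identity. A small correction to your aside: the two predictions are not independent, since they share $Y_{-i}$, but they are conditionally i.i.d.\ given $Y_{-i}$, and that is what actually justifies the sharper factor $2$.
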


\begin{lemma}
\label{lem:var_wn_lin}
For any fixed input $x$,
\[
\operatorname{Var}(f(x;\widehat\theta))
\le
\frac1n\,
\mathbb E\bigl\|\nabla_\theta f(x;\widehat\theta)\bigr\|_{\widehat M_{\mathrm{lin}}}^2,
\qquad
\widehat M_{\mathrm{lin}}
:=
(\widehat G+\lambda I)^{-1}\widehat G(\widehat G+\lambda I)^{-1}.
\]
\end{lemma}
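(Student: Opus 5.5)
Write $Y=f^\star(x_{1:n})+Z$ with $Z\sim\cN(0,I_n)$, and regard the prediction $h_x(Z):=f(x;\theta_Y)$ as a function of the noise vector alone, with the design fixed. The plan is to apply the Gaussian Poincar\'e inequality to $h_x$ and then evaluate its gradient in closed form with Lemma~\ref{lem:implicit_response_derivative}. The selection assumption already supplies the regularity this needs: $h_x$ is locally Lipschitz, lies in $L^2$, and has square-integrable gradient (for a fixed $x$ outside the design, we take the same integrability to hold, as the selection assumption implicitly requires). Poincar\'e then gives
\[
\operatorname{Var}(h_x(Z))\le \mathbb E\|\nabla_Z h_x(Z)\|_2^2=\mathbb E\sum_{k=1}^n\bigl(\partial_{y_k}f(x;\theta_y)\big|_{y=Y}\bigr)^2 .
\]

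Next we compute each partial derivative. Lemma~\ref{lem:implicit_response_derivative} is stated for training inputs $x_j$, but its proof uses only the identity $\partial_{y_k}\theta_y=\frac1n H_\lambda^{-1}\nabla_\theta f(x_k;\theta_y)$ followed by the chain rule, so it holds verbatim for any fixed $x$. Under $\widehat\Delta=0$ the Hessian is $\widehat H_\lambda=\widehat G+\lambda I$, so with $u:=\nabla_\theta f(x;\widehat\theta)$ we obtain $\partial_{y_k}h_x=\frac1n u^\top(\widehat G+\lambda I)^{-1}g_k$.

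Summing the squares over $k$ gives
\[
\sum_k \frac1{n^2}\bigl(u^\top(\widehat G+\lambda I)^{-1}g_k\bigr)^2=\frac1n\,u^\top(\widehat G+\lambda I)^{-1}\Bigl(\frac1n\sum_k g_kg_k^\top\Bigr)(\widehat G+\lambda I)^{-1}u=\frac1n\|u\|^2_{\widehat M_{\mathrm{lin}}},
\]
using the symmetry of $(\widehat G+\lambda I)^{-1}$. Taking expectations yields the lemma.

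The main obstacle is bookkeeping of the hypothesis $\widehat\Delta=0$. We must be sure that it holds along the whole branch $y\mapsto\theta_y$, not only at the realized $Y$, so that the Hessian equals $\widehat G+\lambda I$ at every point where the gradient is evaluated. In the linear-in-parameters case this is automatic, since $\nabla_\theta^2 f\equiv 0$. If instead we only know $\widehat\Delta=0$ at the trained point, we interpret the hypothesis as holding almost surely in $Y$, which is all the expectation needs. Beyond this, we only need to check that Poincar\'e applies to a locally Lipschitz function with square-integrable gradient, which follows from the standard approximation argument.
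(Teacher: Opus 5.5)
Your proposal is correct and follows essentially the same route as the paper. You apply Gaussian Poincar\'e to the noise-to-prediction map, use the implicit-function derivative $\partial_{y_k}f(x;\theta_y)=\frac1n u^\top(\widehat G+\lambda I)^{-1}g_k$, and sum the squares via $\sum_k g_kg_k^\top=n\widehat G$ to obtain $\frac1n\|u\|^2_{\widehat M_{\mathrm{lin}}}$. Your side remarks make explicit points that the paper's proof leaves implicit: extending the derivative identity and the integrability hypothesis to a non-design $x$, and observing that $\widehat\Delta=0$ is only needed almost surely at the realized $Y$, since the derivative at $y$ involves only the Hessian at $\theta_y$.
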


\begin{proof}
For a fixed input \(x\), let
\(g_x(Z):=f(x;A_\lambda(Y(Z)))\), where \(Y_i(Z)=f^\star(x_i)+Z_i\).
By the selection regularity assumption, under the Gaussian
noise law. Hence Gaussian Poincar\'e gives
\[
\operatorname{Var}(g_x(Z))
\le
\mathbb E\|\nabla_Z g_x(Z)\|_2^2.
\]
The derivative is the square-loss pointwise influence identity
\cite[Lemma~1 and Section~4.1]{kuzborskij2025pointwiseconfidenceestimationnonlinear}:
\[
\partial_{Z_k}g_x(Z)
=
\frac1n
\nabla_\theta f(x;\widehat\theta)^\top
(\widehat G+\lambda I)^{-1}g_k .
\]
Therefore
\[
\|\nabla_Z g_x(Z)\|_2^2
=
\frac1n
\bigl\|\nabla_\theta f(x;\widehat\theta)\bigr\|_{\widehat M_{\mathrm{lin}}}^2,
\]
which proves the claim.
\end{proof}

\begin{proof}[Proof of \Cref{thm:stability_lin}]
For every training index $i$ and every fixed input $x$, the random variables
\[
X:=f(x;\widehat\theta),
\qquad
X':=f(x;\widehat\theta^{(i)})
\]
have the same distribution because $S$ and $S^{(i)}$ do. Hence
\[
\mathbb E\bigl[(X-X')^2\bigr]
=
\mathbb E\bigl[(X-\mathbb E X+\mathbb E X'-X')^2\bigr]
\le
4\,\operatorname{Var}(f(x;\widehat\theta)).
\]
Applying \Cref{lem:var_wn_lin} at $x=x_i$ and averaging over $i$ yields
\[
\frac1n\sum_{i=1}^n
\mathbb E\bigl[(f(x_i;\widehat\theta)-f(x_i;\widehat\theta^{(i)}))^2\bigr]
\le
\frac{4}{n}
\mathbb E\left[
\frac1n\sum_{i=1}^n \bigl\|\nabla_\theta f(x_i;\widehat\theta)\bigr\|_{\widehat M_{\mathrm{lin}}}^2
\right].
\]
Now
\[
\frac1n\sum_{i=1}^n \bigl\|\nabla_\theta f(x_i;\widehat\theta)\bigr\|_{\widehat M_{\mathrm{lin}}}^2
=
\operatorname{tr}(\widehat M_{\mathrm{lin}}\widehat G)
=
\operatorname{tr}\!\left(((\widehat G+\lambda I)^{-1}\widehat G)^2\right)
=
\dclass(\widehat G,\lambda).
\]
Substituting this identity into the previous estimate proves the theorem.
\end{proof}

Introduce the fixed-design comparison risk
\[
L(\theta)
:=
\frac1n\sum_{i=1}^n
\mathbb E\!\left[\frac12\bigl(f(x_i;\theta)-Y_i'\bigr)^2\right],
\]
where $Y_i'$ is an independent copy of $Y_i$ and $\ell(a,y)=\frac12(a-y)^2$. If $S=(x_i,Y_i)_{i=1}^n$ and $S^{(i)}$ denotes the sample in which only $Y_i$ is replaced by $Y_i'$, then the standard replace-one identity for expected generalization gap \cite[Chapter~13]{shalev2014understanding} gives
\[
\mathbb E\!\left[L(\widehat\theta)-\widehat L(\widehat\theta)\right]
=
\frac1n\sum_{i=1}^n
\mathbb E\!\left[
\ell(f(x_i;\widehat\theta^{(i)}),Y_i)-\ell(f(x_i;\widehat\theta),Y_i)
\right].
\]

\begin{corollary}[Square-loss generalization bound for $\dclass$]
\label{cor:gen_gap_lin}
For square loss, under the assumptions of \Cref{thm:stability_lin},
\[
\mathbb E\!\left[L(\widehat\theta)-\widehat L(\widehat\theta)\right]
\le
\sqrt{\frac{8\,\mathbb E[\widehat L(\widehat\theta)]\,\mathbb E[\dclass(\widehat G,\lambda)]}{n}}
+
\frac{2\,\mathbb E[\dclass(\widehat G,\lambda)]}{n}.
\]
Consequently, for every $\eta>0$,
\[
\mathbb E\!\left[L(\widehat\theta)-\widehat L(\widehat\theta)\right]
\le
\eta\,\mathbb E[\widehat L(\widehat\theta)]
+
\left(2+\frac{2}{\eta}\right)\frac{\mathbb E[\dclass(\widehat G,\lambda)]}{n}.
\]
\end{corollary}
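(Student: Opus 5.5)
We plan to start from the replace-one identity displayed just before the statement,
\[
\mathbb E\bigl[L(\widehat\theta)-\widehat L(\widehat\theta)\bigr]
=\frac1n\sum_{i=1}^n\mathbb E\bigl[\ell(f(x_i;\widehat\theta^{(i)}),Y_i)-\ell(f(x_i;\widehat\theta),Y_i)\bigr],
\]
and expand each summand with the exact quadratic decomposition
\(\ell(a,y)-\ell(b,y)=(b-y)(a-b)+\tfrac12(a-b)^2\), taking \(a=f(x_i;\widehat\theta^{(i)})\) and \(b=f(x_i;\widehat\theta)\). Then \(b-Y_i=r_i\) is the training residual of the fitted model. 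Writing \(\delta_i:=f(x_i;\widehat\theta^{(i)})-f(x_i;\widehat\theta)\), the gap splits into a cross term \(\frac1n\sum_i\mathbb E[r_i\delta_i]\) and a quadratic term \(\frac1{2n}\sum_i\mathbb E[\delta_i^2]\).

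The quadratic term is handled directly by \Cref{thm:stability_lin}, which gives \(\frac1n\sum_i\mathbb E[\delta_i^2]\le 4\,\mathbb E[\dclass(\widehat G,\lambda)]/n\). Hence this term is at most \(2\,\mathbb E[\dclass]/n\), which is the second summand of the bound.

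For the cross term we will apply Cauchy--Schwarz jointly over the index \(i\) and the expectation:
\[
\frac1n\sum_i\mathbb E[r_i\delta_i]\le\Bigl(\frac1n\sum_i\mathbb E[r_i^2]\Bigr)^{1/2}\Bigl(\frac1n\sum_i\mathbb E[\delta_i^2]\Bigr)^{1/2}.
\]
The first factor equals \(\bigl(2\,\mathbb E[\widehat L(\widehat\theta)]\bigr)^{1/2}\), because \(\widehat L=\frac1{2n}\sum_i r_i^2\). The second factor is at most \(\bigl(4\,\mathbb E[\dclass]/n\bigr)^{1/2}\) by the stability theorem again. Their product is \(\sqrt{8\,\mathbb E[\widehat L]\,\mathbb E[\dclass]/n}\), which gives the first claim.

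The only point needing care is the order of operations in the cross term. We must apply Cauchy--Schwarz to the averaged sum, so that we obtain the product of the expectations \(\mathbb E[\widehat L]\) and \(\mathbb E[\dclass]\), rather than an expectation of a product. The stability theorem is stated for the averaged expected squared differences, so this step goes through.

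For the \(\eta\)-form we apply Young's inequality \(\sqrt{uv}\le \tfrac{\eta}{2}u+\tfrac{1}{2\eta}v\) with \(u=2\,\mathbb E[\widehat L]\) and \(v=4\,\mathbb E[\dclass]/n\). This bounds the square-root term by \(\eta\,\mathbb E[\widehat L]+\tfrac{2}{\eta}\mathbb E[\dclass]/n\). Adding the \(2\,\mathbb E[\dclass]/n\) term yields the stated bound.
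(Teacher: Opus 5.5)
Your proposal is correct and follows the paper's proof essentially step for step: the replace-one identity, the exact quadratic decomposition, Cauchy--Schwarz to separate $2\,\mathbb E[\widehat L(\widehat\theta)]$ from the averaged stability term controlled by \Cref{thm:stability_lin}, and Young's inequality for the $\eta$-form. The only cosmetic difference is that the paper applies Cauchy--Schwarz first in expectation for each $i$ and then over the sample index, whereas you apply it once jointly over index and randomness, which gives the same bound.
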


\begin{proof}
For each $i$, write
\[
a_i:=f(x_i;\widehat\theta^{(i)}),
\qquad
b_i:=f(x_i;\widehat\theta).
\]
By the replace-one identity,
\[
\mathbb E\!\left[L(\widehat\theta)-\widehat L(\widehat\theta)\right]
=
\frac1n\sum_{i=1}^n
\mathbb E\!\left[\ell(a_i,Y_i)-\ell(b_i,Y_i)\right].
\]
Using the exact quadratic identity for square loss,
\[
\ell(a_i,Y_i)-\ell(b_i,Y_i)
=
(b_i-Y_i)(a_i-b_i)+\frac12(a_i-b_i)^2.
\]
Therefore,
\[
\mathbb E\!\left[\ell(a_i,Y_i)-\ell(b_i,Y_i)\right]
\le
\sqrt{\mathbb E[(b_i-Y_i)^2]\,\mathbb E[(a_i-b_i)^2]}
+
\frac12\mathbb E[(a_i-b_i)^2].
\]
Since
\[
\frac1n\sum_{i=1}^n \frac12\mathbb E[(b_i-Y_i)^2]
=
\mathbb E[\widehat L(\widehat\theta)],
\]
Cauchy--Schwarz over the sample index gives
\[
\frac1n\sum_{i=1}^n
\sqrt{\mathbb E[(b_i-Y_i)^2]\,\mathbb E[(a_i-b_i)^2]}
\le
\sqrt{
2\,\mathbb E[\widehat L(\widehat\theta)]
\cdot
\frac1n\sum_{i=1}^n \mathbb E[(a_i-b_i)^2]
}.
\]
Now apply \Cref{thm:stability_lin}:
\[
\frac1n\sum_{i=1}^n \mathbb E[(a_i-b_i)^2]
\le
\frac{4\,\mathbb E[\dclass(\widehat G,\lambda)]}{n}.
\]
Substituting this estimate proves the displayed $\sqrt{\cdot}+\cdot$ bound. The $\eta$-form follows from Young's inequality.
\end{proof}

\subsection{Effective Dimension $\deff(\widehat\theta;\lambda)$}

Return to the Gaussian noise assumption $Z\sim\cN(0,I_n)$, and now allow $\widehat\Delta$ to be nonzero. For a nonlinear predictor the residual-curvature correction enters the local curvature, so the influence identity contains $\widehat H^{-1}$.

\begin{theorem}[Prediction stability via effective dimension]
\label{thm:stability}
Under the existing Gaussian-noise and local-minimum assumptions,
\[
\frac1n\sum_{i=1}^n
\mathbb E\Bigl[
\bigl(f(x_i;\widehat\theta)-f(x_i;\widehat\theta^{(i)})\bigr)^2
\Bigr]
\le
\frac{4\,\mathbb E[\deff(\widehat\theta;\lambda)]}{n}.
\]
\end{theorem}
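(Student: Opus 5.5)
The plan mirrors the proof of \Cref{thm:stability_lin}, replacing the ridge inverse $(\widehat G+\lambda I)^{-1}$ by the full inverse Hessian $\widehat H_{\lambda}^{-1}$ supplied by \Cref{lem:implicit_response_derivative}. There are three steps.

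\textbf{Step 1 (centering).} For each training index $i$ and each fixed input $x$, the variables $X=f(x;\widehat\theta)$ and $X'=f(x;\widehat\theta^{(i)})$ have the same law, because $Y$ and $Y^{(i)}$ are equal in distribution and $A_\lambda$ is a fixed deterministic map. Writing $X-X'=(X-\E X)-(X'-\E X')$ and using $(u-v)^2\le 2u^2+2v^2$ gives
\[
\E\bigl[(X-X')^2\bigr]\le 4\operatorname{Var}\bigl(f(x;\widehat\theta)\bigr).
\]
The factor $2$ would suffice, since the cross term is bounded by Cauchy--Schwarz, but we keep $4$ to match the statement. Second moments are finite by the selection assumption.

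\textbf{Step 2 (variance bound).} Define $g_x(Z)=f(x;A_\lambda(f^\star(x_{1:n})+Z))$ with $Z\sim\cN(0,I_n)$. By the selection assumption, $g_x$ is locally Lipschitz with $\E\|\nabla_Z g_x\|_2^2<\infty$. These are exactly the conditions under which the Gaussian Poincar\'e inequality applies to locally Lipschitz functions, via a standard truncation/mollification argument. It yields
\[
\operatorname{Var}(g_x)\le \E\|\nabla_Z g_x(Z)\|_2^2.
\]
Since $\partial_{Z_k}=\partial_{y_k}$, \Cref{lem:implicit_response_derivative}, applied at a generic $x$ rather than only at the design points, gives
\[
\partial_{Z_k}g_x=\tfrac1n\nabla_\theta f(x;\widehat\theta)^\top\widehat H_{\lambda}^{-1}g_k .
\]
Summing the squares over $k$ and using the symmetry of $\widehat H_{\lambda}$ gives
\[
\|\nabla_Z g_x\|_2^2=\tfrac1n\,\nabla_\theta f(x;\widehat\theta)^\top \widehat H_{\lambda}^{-1}\widehat G\,\widehat H_{\lambda}^{-1}\nabla_\theta f(x;\widehat\theta).
\]

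\textbf{Step 3 (averaging).} Take $x=x_i$ and average over $i$. Using $\frac1n\sum_i g_ig_i^\top=\widehat G$ and cyclicity of the trace,
\[
\frac1n\sum_{i=1}^n g_i^\top\widehat H_{\lambda}^{-1}\widehat G\widehat H_{\lambda}^{-1}g_i=\tr\bigl(\widehat H_{\lambda}^{-1}\widehat G\widehat H_{\lambda}^{-1}\widehat G\bigr)=\deff(\widehat\theta;\lambda).
\]
Taking expectations and combining with Steps 1 and 2 gives the bound $4\,\E[\deff]/n$.

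\textbf{Main obstacle.} The delicate point is justifying Poincar\'e for a map that is only defined through a local minimizer branch. This is exactly why the selection assumption asks for $A_\lambda$ to be $C^1$ on all of $\R^n$ with an integrable gradient: it excludes branch switching, which would create jumps and invalidate the gradient representation. If one wanted $\widehat H_{\lambda}\succ0$ only almost everywhere, one would first try approximating $g_x$ by bounded Lipschitz truncations $\phi_M\circ g_x$ and passing to the limit by monotone convergence.
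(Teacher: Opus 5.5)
Your proof is correct and follows the paper's argument step for step. Both use the replace-one centering bound $\mathbb E[(X-X')^2]\le 4\operatorname{Var}(X)$, then Gaussian Poincar\'e combined with the implicit response derivative $\partial_{Z_k}f(x;\widehat\theta)=\frac1n\nabla_\theta f(x;\widehat\theta)^\top\widehat H_{\lambda}^{-1}g_k$, and finally the trace identity $\frac1n\sum_i\|g_i\|^2_{\widehat H_{\lambda}^{-1}\widehat G\widehat H_{\lambda}^{-1}}=\deff(\widehat\theta;\lambda)$.

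One small correction to your aside. Cauchy--Schwarz on the cross term only recovers the factor $4$. The factor $2$ does hold, but for a different reason: $X$ and $X'$ are conditionally i.i.d.\ given $Y_{-i}:=(Y_j)_{j\ne i}$. Hence $\operatorname{Cov}(X,X')=\operatorname{Var}(\mathbb E[X\mid Y_{-i}])\ge 0$, and so $\mathbb E[(X-X')^2]=2\,\mathbb E[\operatorname{Var}(X\mid Y_{-i})]\le 2\operatorname{Var}(X)$.
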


\begin{proof}
The proof is identical to the proof of \Cref{thm:stability_lin} until the trace calculation. The variance lemma becomes
\[
\operatorname{Var}(f(x;\widehat\theta))
\le
\frac1n\,
\mathbb E\bigl\|\nabla_\theta f(x;\widehat\theta)\bigr\|_{\widehat M_{\mathrm{eff}}}^2,
\qquad
\widehat M_{\mathrm{eff}}
:=
\widehat H_{\lambda}^{-1}\widehat G\widehat H_{\lambda}^{-1}.
\]
This follows by the same Gaussian Poincar\'e argument as before, using the
response derivative supplied by the selection regularity assumption, equivalently
the pointwise influence identity of
\cite[Lemma~1 and Section~4.1]{kuzborskij2025pointwiseconfidenceestimationnonlinear}
on a single nondegenerate branch:
\[
\|\nabla_Z f(x;\widehat\theta(Z))\|_2^2
=
\frac1n\bigl\|\nabla_\theta f(x;\widehat\theta)\bigr\|_{\widehat M_{\mathrm{eff}}}^2.
\]
Averaging over the training inputs gives
\[
\frac1n\sum_{i=1}^n \bigl\|g_i\bigr\|_{\widehat M_{\mathrm{eff}}}^2
=
\operatorname{tr}(\widehat M_{\mathrm{eff}}\widehat G)
=
\operatorname{tr}\!\left((\widehat H_{\lambda}^{-1}\widehat G)^2\right)
=
\deff(\widehat\theta;\lambda).
\]
Substituting this trace identity into the replace-one variance argument proves the theorem.
\end{proof}

\begin{corollary}[Square-loss generalization bound]
\label{cor:gen_gap}
For square loss,
\[
\mathbb E\!\left[L(\widehat\theta)-\widehat L(\widehat\theta)\right]
\le
\sqrt{\frac{8\,\mathbb E[\widehat L(\widehat\theta)]\,\mathbb E[\deff(\widehat\theta;\lambda)]}{n}}
+
\frac{2\,\mathbb E[\deff(\widehat\theta;\lambda)]}{n}.
\]
Equivalently, for every $\eta>0$,
\[
\mathbb E\!\left[L(\widehat\theta)-\widehat L(\widehat\theta)\right]
\le
\eta\,\mathbb E[\widehat L(\widehat\theta)]
+
\left(2+\frac{2}{\eta}\right)\frac{\mathbb E[\deff(\widehat\theta;\lambda)]}{n}.
\]
\end{corollary}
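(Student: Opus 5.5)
The proof repeats the argument of \Cref{cor:gen_gap_lin} verbatim, with \Cref{thm:stability} in place of \Cref{thm:stability_lin}. For each training index $i$ write $a_i:=f(x_i;\widehat\theta^{(i)})$ and $b_i:=f(x_i;\widehat\theta)$.

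The first step is the replace-one identity of \cite[Chapter~13]{shalev2014understanding}. Since $\widehat\theta^{(i)}$ is independent of $Y_i$, and $(Y_i,Y_i')$ is exchangeable with the rest of the sample fixed, we get
\[
\mathbb E\bigl[L(\widehat\theta)-\widehat L(\widehat\theta)\bigr]=\frac1n\sum_{i=1}^n\mathbb E\bigl[\ell(a_i,Y_i)-\ell(b_i,Y_i)\bigr].
\]
This requires the square-integrability of $h_j(Y)$ from the selection assumption, so that every expectation is finite. Next apply the exact identity $\ell(a,y)-\ell(b,y)=(b-y)(a-b)+\tfrac12(a-b)^2$. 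Then use Cauchy--Schwarz twice: first in expectation for each $i$, and then over the index $i$. This bounds the cross term by
\[
\sqrt{2\,\mathbb E[\widehat L(\widehat\theta)]\cdot\tfrac1n\sum_i\mathbb E[(a_i-b_i)^2]},
\]
because $\tfrac1n\sum_i\tfrac12\mathbb E[(b_i-Y_i)^2]=\mathbb E[\widehat L(\widehat\theta)]$.

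The second step is to insert the stability bound of \Cref{thm:stability}, namely $\tfrac1n\sum_i\mathbb E[(a_i-b_i)^2]\le 4\,\mathbb E[\deff(\widehat\theta;\lambda)]/n$, into both terms. The cross term becomes $\sqrt{8\,\mathbb E[\widehat L]\,\mathbb E[\deff]/n}$. The quadratic term becomes $\tfrac12\cdot 4\,\mathbb E[\deff]/n=2\,\mathbb E[\deff]/n$. This gives the first display.

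For the $\eta$-form, apply Young's inequality $\sqrt{uv}\le \tfrac{\eta}{2}u+\tfrac{1}{2\eta}v$ with $u=2\,\mathbb E[\widehat L]$ and $v=4\,\mathbb E[\deff]/n$. This bounds the square root by $\eta\,\mathbb E[\widehat L]+2\,\mathbb E[\deff]/(\eta n)$. Adding the remaining $2\,\mathbb E[\deff]/n$ gives the constant $(2+2/\eta)$.

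The only delicate point is the replace-one identity. It needs the marginal law of $(\widehat\theta^{(i)},Y_i)$ to equal that of $(\widehat\theta,Y_i')$. This holds because $A_\lambda$ is a fixed deterministic map of the response vector and the responses are independent, so swapping $Y_i$ and $Y_i'$ preserves the joint law. Everything else is algebra. Under $\alpha$-strongly log-concave noise the same steps go through with $\deff$ replaced by $\deff/\alpha$, which recovers \Cref{cor:main_gen_gap}.
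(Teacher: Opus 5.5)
Your proposal is correct and matches the paper's proof essentially step for step. The steps are the replace-one identity, the exact decomposition $\ell(a,y)-\ell(b,y)=(b-y)(a-b)+\tfrac12(a-b)^2$, Cauchy--Schwarz per index and then over indices, the stability bound of \Cref{thm:stability}, and Young's inequality for the $\eta$-form; your constants and your justification of the replace-one identity (swapping $Y_i$ and $Y_i'$ preserves the joint law, since $A_\lambda$ is a deterministic map of the responses) are right.
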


\begin{proof}
Repeat the proof of \Cref{cor:gen_gap_lin}, replacing \Cref{thm:stability_lin} by \Cref{thm:stability}. No other step changes.
\end{proof}

\subsection{Going from Gaussian to log-concave noise}

The only distributional inequality used above is Gaussian Poincar\'e. It can be replaced by the Brascamp--Lieb variance inequality \cite{brascamp1976,liebmodern}. The following result holds for independent log-concave noise coordinates, compatible with the replace-one identity used above.

\begin{corollary}[Brascamp--Lieb replacement for log-concave noise]
\label{cor:log_concave_brascamp_lieb}
Assume $Y_i=f^\star(x_i)+Z_i$, where $Z_1,\ldots,Z_n$ are independent and each $Z_i$ has density proportional to $\exp(-V_i(z))$. Assume $V_i$ is twice differentiable, $V_i''(z)>0$ on the support, and the Brascamp--Lieb inequality applies with finite right-hand side. Define
\[
w_i:=\frac{1}{V_i''(Z_i)},
\qquad
\widehat G_{\mathrm{BL}}
:=
\frac1n\sum_{i=1}^n w_i g_i g_i^\top.
\]
For any positive definite matrix $B_\lambda$ equal to either $\widehat G+\lambda I$ in the linear case $\widehat\Delta=0$, or $\widehat H$ in the nonlinear case, define
\[
\mathfrak d_{\mathrm{BL}}(B_\lambda)
:=
\tr\!\left(B_\lambda^{-1}\widehat G_{\mathrm{BL}}B_\lambda^{-1}\widehat G\right).
\]
Then the replace-one stability bound becomes
\[
\frac1n\sum_{i=1}^n
\mathbb E\Bigl[
\bigl(f(x_i;\widehat\theta)-f(x_i;\widehat\theta^{(i)})\bigr)^2
\Bigr]
\le
\frac{4\,\mathbb E[\mathfrak d_{\mathrm{BL}}(B_\lambda)]}{n}.
\]
Consequently, for square loss,
\[
\mathbb E\!\left[L(\widehat\theta)-\widehat L(\widehat\theta)\right]
\le
\sqrt{\frac{8\,\mathbb E[\widehat L(\widehat\theta)]\,\mathbb E[\mathfrak d_{\mathrm{BL}}(B_\lambda)]}{n}}
+
\frac{2\,\mathbb E[\mathfrak d_{\mathrm{BL}}(B_\lambda)]}{n}.
\]
In particular, if $V_i''(z)\ge \alpha>0$ for all $i$ and all $z$ in the support, then
\[
\widehat G_{\mathrm{BL}}\preceq \alpha^{-1}\widehat G
\]
and hence
\[
\mathfrak d_{\mathrm{BL}}(\widehat G+\lambda I)
\le
\alpha^{-1}\dclass(\widehat G,\lambda)
\qquad (\widehat\Delta=0),
\]
while in the nonlinear case
\[
\mathfrak d_{\mathrm{BL}}(\widehat H)
\le
\alpha^{-1}\deff(\widehat\theta;\lambda).
\]
Thus the same generalization bounds hold with an additional factor $\alpha^{-1}$ inside the effective-dimension term.
\end{corollary}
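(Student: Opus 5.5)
I will follow the Gaussian proof of \Cref{thm:stability} step by step, changing only the variance inequality. Fix a training index $i$ and an input $x$, and set $\phi_x(Z):=f(x;A_\lambda(f^\star(x_\cdot)+Z))$. Because only $Y_i$ is replaced by an independent copy, $\widehat\theta$ and $\widehat\theta^{(i)}$ have the same law. Centering both predictions at their common mean, as in the proof of \Cref{thm:stability_lin}, gives $\mathbb E[(f(x;\widehat\theta)-f(x;\widehat\theta^{(i)}))^2]\le 4\operatorname{Var}(\phi_x(Z))$. The noise vector $Z$ has product density $\prod_k e^{-V_k(z_k)}$, so the potential $\sum_k V_k$ has diagonal Hessian $\mathrm{diag}(V_k''(z_k))$. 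The Brascamp--Lieb variance inequality therefore gives
\[
\operatorname{Var}(\phi_x(Z))\le \mathbb E\Bigl[\sum_{k=1}^n \frac{(\partial_{z_k}\phi_x(Z))^2}{V_k''(Z_k)}\Bigr]=\mathbb E\Bigl[\sum_{k=1}^n w_k(\partial_{z_k}\phi_x)^2\Bigr].
\]
The selection assumption makes $\phi_x$ locally Lipschitz with square-integrable gradient, and the statement assumes the right-hand side is finite, so the inequality applies.

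Next I insert the derivative from \Cref{lem:implicit_response_derivative}: $\partial_{z_k}\phi_x=\frac1n\nabla_\theta f(x;\widehat\theta)^\top B_\lambda^{-1}g_k$, where $B_\lambda=\widehat H_\lambda$ in general and $B_\lambda=\widehat G+\lambda I$ when $\widehat\Delta=0$. The right-hand side becomes
\[
\frac1{n}\,\mathbb E\bigl[\nabla_\theta f(x;\widehat\theta)^\top B_\lambda^{-1}\widehat G_{\mathrm{BL}}B_\lambda^{-1}\nabla_\theta f(x;\widehat\theta)\bigr].
\]
Now I take $x=x_i$, average over $i$, and use $\frac1n\sum_i g_ig_i^\top=\widehat G$. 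The averaged quantity equals $\tr(B_\lambda^{-1}\widehat G_{\mathrm{BL}}B_\lambda^{-1}\widehat G)=\mathfrak d_{\mathrm{BL}}(B_\lambda)$, which gives the stated stability bound. The generalization bound then follows exactly as in the proof of \Cref{cor:gen_gap_lin}: apply the replace-one identity, the quadratic decomposition and Cauchy--Schwarz, with this stability estimate substituted in place of \Cref{thm:stability_lin}.

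For the $\alpha$ specialization, $V_k''\ge\alpha$ gives $w_k\le\alpha^{-1}$, hence $\widehat G_{\mathrm{BL}}\preceq\alpha^{-1}\widehat G$. I then use monotonicity of the trace. The matrix $B_\lambda^{-1}\widehat G B_\lambda^{-1}$ is congruent to $\widehat G$ and so is positive semidefinite. Write
\[
\mathfrak d_{\mathrm{BL}}=\tr\bigl(\widehat G^{1/2}B_\lambda^{-1}\widehat G_{\mathrm{BL}}B_\lambda^{-1}\widehat G^{1/2}\bigr).
\]
Since $B_\lambda^{-1}\widehat G_{\mathrm{BL}}B_\lambda^{-1}\preceq\alpha^{-1}B_\lambda^{-1}\widehat GB_\lambda^{-1}$ by congruence, this is at most $\alpha^{-1}\tr((B_\lambda^{-1}\widehat G)^2)$. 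That quantity is $\alpha^{-1}\dclass(\widehat G,\lambda)$ in the linear case and $\alpha^{-1}\deff(\widehat\theta;\lambda)$ in the nonlinear case. This congruence step needs $B_\lambda$ to be symmetric, which holds for a Hessian, and nothing more.

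I expect the main obstacle to be justifying Brascamp--Lieb for $\phi_x$. The function is only locally Lipschitz and defined through the selected branch, and the support of the noise may be a proper subset of $\mathbb R$. I would handle this through the standing integrability hypotheses in the selection assumption, approximating $\phi_x$ by smooth compactly supported functions and passing to the limit. The remaining steps are algebraic.
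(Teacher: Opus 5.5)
Your proposal is correct and follows the paper's proof step for step. The steps are the same: the replace-one centering bound $\mathbb E[(X-X')^2]\le 4\operatorname{Var}(X)$, the product Brascamp--Lieb inequality with weights $w_k=1/V_k''(Z_k)$, the implicit-derivative formula producing $\frac1n\nabla_\theta f(x;\widehat\theta)^\top B_\lambda^{-1}\widehat G_{\mathrm{BL}}B_\lambda^{-1}\nabla_\theta f(x;\widehat\theta)$, averaging over training inputs to obtain $\mathfrak d_{\mathrm{BL}}(B_\lambda)$, and the congruence argument from $\widehat G_{\mathrm{BL}}\preceq\alpha^{-1}\widehat G$. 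The only difference is that the paper just asserts $h\in W^{1,2}$ from the selection assumption, whereas you sketch a smooth approximation argument; that extra care is harmless and not required.
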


\begin{proof}
For a fixed input \(x\), let \(h(Z):=f(x;A_\lambda(Y(Z)))\). By the selection
regularity assumption, \(h\in W^{1,2}\) under the product noise law, so the
product Brascamp--Lieb inequality \cite{brascamp1976} gives
\[
\operatorname{Var}(h(Z))
\le
\mathbb E\sum_{j=1}^n \frac{(\partial_{Z_j}h(Z))^2}{V_j''(Z_j)}.
\]
The derivative from the selection regularity assumption gives, and on a
single nondegenerate branch agrees with the pointwise influence identity of
\cite[Lemma~1 and Section~4.1]{kuzborskij2025pointwiseconfidenceestimationnonlinear}, with $B_\lambda=\widehat G+\lambda I$ in the linear case and $B_\lambda=\widehat H$ in the nonlinear case,
\[
\partial_{Z_j}h(Z)
=
\frac1n\,\nabla_\theta f(x;\widehat\theta)^\top B_\lambda^{-1}g_j.
\]
Therefore
\[
\operatorname{Var}(f(x;\widehat\theta))
\le
\frac1n\,\mathbb E\left[
\nabla_\theta f(x;\widehat\theta)^\top
B_\lambda^{-1}\widehat G_{\mathrm{BL}}B_\lambda^{-1}
\nabla_\theta f(x;\widehat\theta)
\right].
\]
The replace-one comparison $\mathbb E[(X-X')^2]\le 4\operatorname{Var}(X)$ is unchanged. Averaging over $x_1,\ldots,x_n$ yields
\[
\frac1n\sum_{i=1}^n
\mathbb E\bigl[(f(x_i;\widehat\theta)-f(x_i;\widehat\theta^{(i)}))^2\bigr]
\le
\frac{4}{n}\,
\mathbb E\left[
\tr\!\left(B_\lambda^{-1}\widehat G_{\mathrm{BL}}B_\lambda^{-1}\widehat G\right)
\right],
\]
which is the stability bound. The square-loss generalization bound follows from the same proof as \Cref{cor:gen_gap_lin}.

If $V_i''\ge\alpha$, then $w_i\le\alpha^{-1}$ and so $\widehat G_{\mathrm{BL}}\preceq\alpha^{-1}\widehat G$. Conjugating by $B_\lambda^{-1}\widehat G^{1/2}$ and taking traces gives
\[
\tr\!\left(B_\lambda^{-1}\widehat G_{\mathrm{BL}}B_\lambda^{-1}\widehat G\right)
\le
\alpha^{-1}\tr\!\left(B_\lambda^{-1}\widehat G B_\lambda^{-1}\widehat G\right).
\]
This gives the provided simplifications.

For standard Gaussian noise, $V_i(z)=z^2/2$ and $V_i''(z)=1$. Hence $w_i=1$, $\widehat G_{\mathrm{BL}}=\widehat G$, and Brascamp--Lieb reduces to Gaussian Poincar\'e. The linear choice $B_\lambda=\widehat G+\lambda I$ gives $\mathfrak d_{\mathrm{BL}}=\dclass(\widehat G,\lambda)$, while the nonlinear choice $B_\lambda=\widehat H$ gives $\mathfrak d_{\mathrm{BL}}=\deff(\widehat\theta;\lambda)$.
\end{proof}

\section{Proofs for Section~\ref{sec:geometry_main}}
\label{app:section3_proofs}

\subsection{Detailed notation and geometric bounds}

\begin{figure*}[h]
  \centering
  \includegraphics[width=\textwidth]{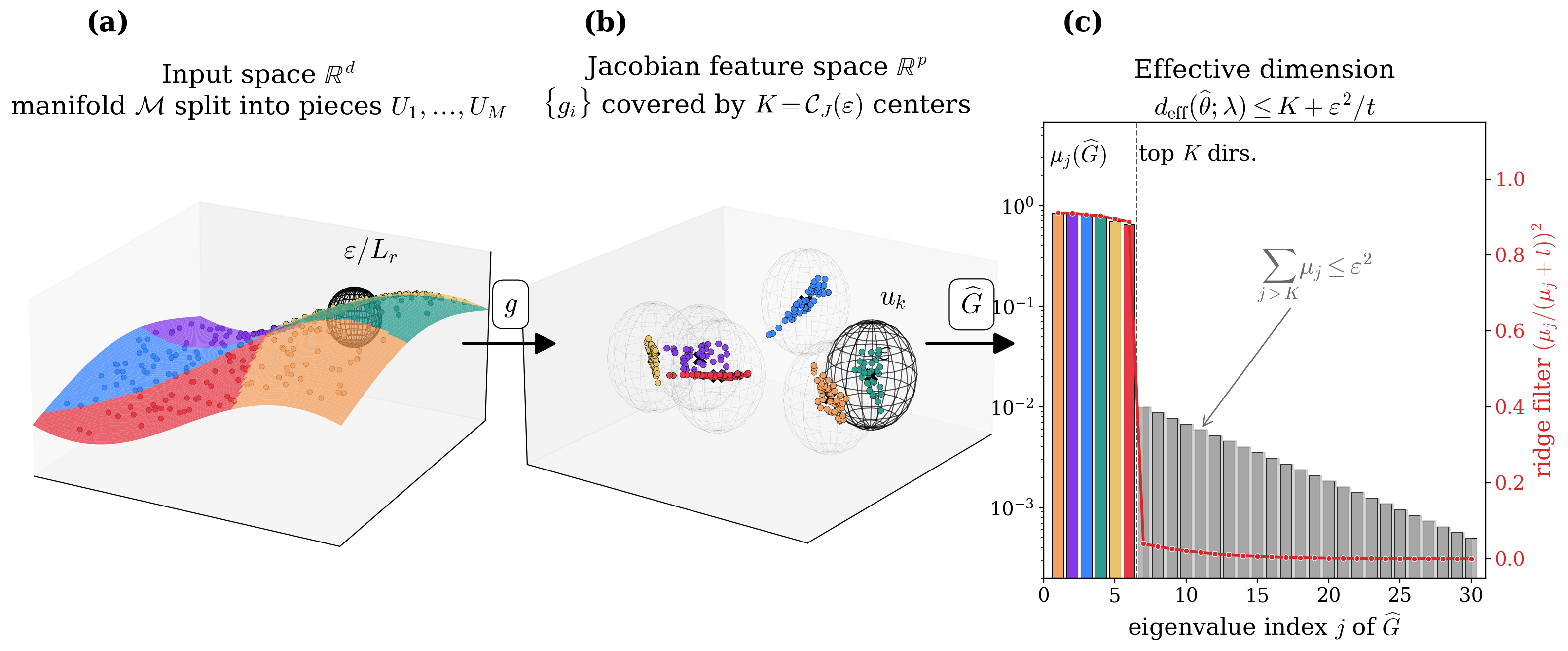}
  \caption{\textbf{Schematic of results in Section 3.}}
  \label{fig:section3overview}
\end{figure*}

\paragraph{Roadmap.} The goal of this appendix is to prove Proposition~\ref{prop:manifold_main}. We do so via three results whose composition gives the manifold bound on $\deff$. First, we reduce $\deff$ to the classical effective dimension of $\widehat G$ at margin $\lambda-\rho$ (Theorem~\ref{thm:nonlin_to_lin_section3}, proving Proposition~\ref{prop:nonlin_to_lin_main}). Second, we bound this classical effective dimension by an empirical Jacobian-feature covering number (Theorem~\ref{thm:covering_J_section3}, proving Proposition~\ref{prop:cover_main}). Third, we transfer manifold covers to feature-space covers under a piecewise-Lipschitz Jacobian map (Theorem~\ref{thm:piecewise_regular_section3} and Corollary~\ref{cor:optimized_nonlin_section3}, proving Proposition~\ref{prop:manifold_main}). Finally, we instantiate the constants explicitly for one-hidden-layer ReLU networks and bound the residual margin (Propositions~\ref{prop:relu_local_metric_section3} and~\ref{prop:rho_relu_section3}, proving Proposition~\ref{prop:relu_main}).

For reference, the chain of bounds we will prove in this appendix is summarised by
\begin{equation}
\label{eq:raw_cover_bound_section3}
\deff(\widehat\theta;\lambda)
\le
\min\!\left\{p,n,
\inf_{\varepsilon>0}
\left[\mathcal C_J(\varepsilon)
      +\frac{\varepsilon^2}{\lambda-\rho}\right]\right\}
\qquad (\rho<\lambda),
\end{equation}
where the sample size $n$ enters as a rank ceiling and as the bounded-radius saturation level of the empirical cover. Under the manifold assumption with piecewise-Lipschitz Jacobian, we will further show that, for every $\varepsilon>0$,
\begin{equation}
\label{eq:robust_manifold_cover_bound_section3}
\mathcal C_J(\varepsilon)
\le
\min\!\left\{n,
\sum_{r=1}^M
\max\!\left\{1,C_{\mathcal M}\left(\frac{L_r}{\varepsilon}\right)^m\right\}
\right\},
\end{equation}
which simplifies in the bounded-radius regime $0<\varepsilon\le D_{\mathcal M}L_{\min}$ ($L_{\min}:=\min_r L_r>0$) to
\begin{equation}
\label{eq:manifold_cover_bound_section3}
\mathcal C_J(\varepsilon)
\le
\min\!\left\{n,
C_{\mathcal M}\varepsilon^{-m}
\sum_{r=1}^M L_r^m
\right\}.
\end{equation}
Combining \eqref{eq:raw_cover_bound_section3} and \eqref{eq:manifold_cover_bound_section3} yields, for such radii,
\begin{equation}
\label{eq:manifold_nonlin_bound_section3}
\deff(\widehat\theta;\lambda)
\le
\min\!\left\{p,n,
\inf_{0<\varepsilon\le D_{\mathcal M}L_{\min}}
\left[
\min\!\left\{n,C_{\mathcal M}\varepsilon^{-m}
              \sum_{r=1}^M L_r^m\right\}
+\frac{\varepsilon^2}{\lambda-\rho}
\right]\right\}.
\end{equation}
Without a manifold assumption, the same argument uses an ambient covering exponent, replacing \(m\) by \(d\) for bounded Euclidean data, or by \(d-1\) for data on the sphere.

Before finite-sample saturation, the scalar optimization is, for \(m\ge 1\),
\[
\begin{aligned}
\inf_{\varepsilon>0}
\left[A\varepsilon^{-m}+\frac{\varepsilon^2}{t}\right]
&=
C_m A^{2/(m+2)}t^{-m/(m+2)},\\
C_m
&:=
\left(1+\frac m2\right)\left(\frac2m\right)^{m/(m+2)},
\end{aligned}
\]
with optimizer \(\varepsilon_\star=(mAt/2)^{1/(m+2)}\).  Applying this to \eqref{eq:manifold_nonlin_bound_section3} gives the bound
\begin{equation}
\label{eq:optimized_manifold_nonlin_rate_section3}
\deff(\widehat\theta;\lambda)
\le
\min\!\left\{p,n,
C_m\left(C_{\mathcal M}\sum_{r=1}^M L_r^m\right)^{2/(m+2)}
(\lambda-\rho)^{-m/(m+2)}\right\},
\end{equation}
provided the optimizer $\varepsilon_\star$ lies in the bounded-radius regime $\varepsilon_\star\le D_{\mathcal M}L_{\min}$.

\subsection{From $\deff$ to Jacobian features}

We first show that the effective dimension is controlled by the auxiliary spectral quantity \(\dclass(A,t)\) applied to the trained Jacobian covariance. This auxiliary quantity lets us transfer cover-compression arguments to \(\deff\).

\begin{lemma}[Monotonicity in the inverse metric]
\label{lem:metric_mono_section3}
Let $A\succeq 0$ and let $0\prec B_1\preceq B_2$. Then
\[
\tr\!\bigl(A^{1/2}B_2^{-1}A B_2^{-1}A^{1/2}\bigr)
\le
\tr\!\bigl(A^{1/2}B_1^{-1}A B_1^{-1}A^{1/2}\bigr).
\]
Equivalently,
\[
\tr\!\bigl((B_2^{-1/2}AB_2^{-1/2})^2\bigr)
\le
\tr\!\bigl((B_1^{-1/2}AB_1^{-1/2})^2\bigr).
\]
\end{lemma}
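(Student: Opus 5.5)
The argument is a monotonicity statement that moves across one congruence at a time. Write the quantity as
\[
\Phi(B):=\tr\!\bigl(A^{1/2}B^{-1}AB^{-1}A^{1/2}\bigr)=\tr\!\bigl(AB^{-1}AB^{-1}\bigr)=\tr\!\bigl((B^{-1/2}AB^{-1/2})^2\bigr),
\]
where the equalities follow from cyclicity of the trace; this also gives the stated equivalence of the two forms. The plan is to show that \(B\mapsto\Phi(B)\) is nonincreasing along the L\"owner order by comparing \(B_1\) and \(B_2\) through an intermediate matrix, using only the operator monotonicity of inversion, \(B_1\preceq B_2\Rightarrow B_2^{-1}\preceq B_1^{-1}\).

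The key steps, in order, are as follows. Set \(P_k:=B_k^{-1}\), so that \(0\prec P_2\preceq P_1\). Since \(A\succeq0\), conjugation preserves the order, so \(A^{1/2}P_2A^{1/2}\preceq A^{1/2}P_1A^{1/2}\). Put \(X_k:=A^{1/2}P_kA^{1/2}\succeq0\). Cyclicity then gives
\[
\tr(AP_kAP_k)=\tr\!\bigl((A^{1/2}P_kA^{1/2})^2\bigr)=\tr(X_k^2).
\]
It therefore suffices to prove that \(0\preceq X_2\preceq X_1\) implies \(\tr(X_2^2)\le\tr(X_1^2)\).

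This last implication is the only point needing care, because \(X\mapsto X^2\) is not operator monotone. However, its trace is monotone. Split the difference as
\[
\tr(X_1^2)-\tr(X_2^2)=\tr\bigl(X_1(X_1-X_2)\bigr)+\tr\bigl((X_1-X_2)X_2\bigr).
\]
Each term has the form \(\tr(PQ)\) with \(P,Q\succeq0\), and such a trace equals \(\tr(Q^{1/2}PQ^{1/2})\ge0\). Hence both terms are nonnegative.

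Chaining \(\Phi(B_2)=\tr(X_2^2)\le\tr(X_1^2)=\Phi(B_1)\) yields the lemma. I expect no real obstacle beyond avoiding the false claim that squaring is operator monotone, which the trace decomposition above circumvents. If one prefers an alternative route, Weyl eigenvalue monotonicity \(\mu_j(X_2)\le\mu_j(X_1)\) for \(0\preceq X_2\preceq X_1\) also gives \(\sum_j\mu_j(X_2)^2\le\sum_j\mu_j(X_1)^2\).
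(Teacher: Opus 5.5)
Your proof is correct and follows the same route as the paper: inversion reverses the Löwner order, conjugation by $A^{1/2}$ preserves it, and $X\mapsto\tr(X^2)$ is monotone on positive semidefinite matrices. You also justify two points the paper only asserts: the monotonicity of $\tr(X^2)$, via the split $\tr(X_1(X_1-X_2))+\tr((X_1-X_2)X_2)$, and the equivalence of the two forms, via cyclicity of the trace.
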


\begin{proof}
Because $B_1\preceq B_2$, inversion reverses the order and gives $B_2^{-1}\preceq B_1^{-1}$. Conjugating by $A^{1/2}$ yields
\[
0\preceq A^{1/2}B_2^{-1}A^{1/2}\preceq A^{1/2}B_1^{-1}A^{1/2}.
\]
For positive semidefinite matrices, the map $X\mapsto \tr(X^2)$ is monotone under the Loewner order, so the claim follows.
\end{proof}

\begin{theorem}[Residual-dependent reduction]
\label{thm:nonlin_to_lin_section3}
Assume $\rho:=\norm{\widehat\Delta}_{\op}<\lambda$. Then
\[
\widehat G+(\lambda-\rho)I
\preceq
\widehat H_{\lambda}
\preceq
\widehat G+(\lambda+\rho)I,
\]
and therefore
\[
\dclass(\widehat G,\lambda+\rho)
\le
\deff(\widehat\theta;\lambda)
\le
\dclass(\widehat G,\lambda-\rho).
\]
\end{theorem}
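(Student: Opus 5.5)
The plan is to first establish the two-sided Loewner sandwich for $\widehat H_{\lambda}$, and then feed each side into Lemma~\ref{lem:metric_mono_section3} with $A=\widehat G$.

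\textbf{The sandwich.} By definition $\widehat H_{\lambda}=\widehat G+\widehat\Delta+\lambda I$. The matrix $\widehat\Delta$ is a weighted sum of Hessians $\nabla_\theta^2 f(x_i;\widehat\theta)$, so it is symmetric. Its spectrum therefore lies in $[-\rho,\rho]$ with $\rho=\|\widehat\Delta\|_{\op}$, which gives $-\rho I\preceq\widehat\Delta\preceq\rho I$. Adding $\widehat G+\lambda I$ throughout yields
\[
\widehat G+(\lambda-\rho)I\preceq\widehat H_{\lambda}\preceq\widehat G+(\lambda+\rho)I .
\]
Since $\widehat G\succeq 0$ and $\rho<\lambda$, the left-hand side is positive definite. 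Hence all three matrices are positive definite, and in particular $\widehat H_{\lambda}\succ 0$, independently of the nondegeneracy assumption.

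\textbf{Rewriting the trace.} Next I express $\deff$ in the symmetric form used by the lemma. By cyclicity of the trace,
\[
\tr\bigl((\widehat H_{\lambda}^{-1}\widehat G)^2\bigr)=\tr\bigl(\widehat H_{\lambda}^{-1}\widehat G\widehat H_{\lambda}^{-1}\widehat G\bigr)=\tr\bigl(\widehat G^{1/2}\widehat H_{\lambda}^{-1}\widehat G\widehat H_{\lambda}^{-1}\widehat G^{1/2}\bigr).
\]
The same identity holds with $\widehat H_{\lambda}$ replaced by $\widehat G+sI$ for any $s>0$. In that case the matrices commute, and the quantity equals $\dclass(\widehat G,s)=\sum_j\bigl(\mu_j/(\mu_j+s)\bigr)^2$.

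\textbf{Applying the lemma.} For the upper bound, apply Lemma~\ref{lem:metric_mono_section3} with $B_1=\widehat G+(\lambda-\rho)I$ and $B_2=\widehat H_{\lambda}$, which gives $\deff\le\dclass(\widehat G,\lambda-\rho)$. For the lower bound, apply it with $B_1=\widehat H_{\lambda}$ and $B_2=\widehat G+(\lambda+\rho)I$, which gives $\dclass(\widehat G,\lambda+\rho)\le\deff$.

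\textbf{Main obstacle.} The delicate point is hidden inside the lemma: the monotonicity of $X\mapsto\tr(X^2)$ on PSD matrices under the Loewner order. This is not operator monotonicity of $X^2$, which fails. However, the trace version holds. If $0\preceq X\preceq Y$, then
\[
\tr(Y^2)-\tr(X^2)=\tr\bigl((Y-X)(Y+X)\bigr)=\tr\bigl((Y-X)^{1/2}(Y+X)(Y-X)^{1/2}\bigr)\ge 0,
\]
because $Y+X\succeq 0$ and $Y-X\succeq 0$. If I were to verify the lemma rather than assume it, I would include this one-line argument.

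The remaining step is to check that the lemma's hypothesis $0\prec B_1$ holds in both applications. For the upper bound, $B_1=\widehat G+(\lambda-\rho)I\succ 0$ uses $\rho<\lambda$. For the lower bound, $B_1=\widehat H_{\lambda}\succ 0$ follows from the sandwich just established.
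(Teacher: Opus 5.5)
Your proof is correct and follows the paper's argument: derive the sandwich from $-\rho I\preceq\widehat\Delta\preceq\rho I$, then apply Lemma~\ref{lem:metric_mono_section3} with $A=\widehat G$ to each adjacent pair of the three matrices. Your verification of trace monotonicity, $\tr(Y^2)-\tr(X^2)=\tr\bigl((Y-X)^{1/2}(Y+X)(Y-X)^{1/2}\bigr)\ge 0$, supplies the step the paper's lemma only asserts, and your observation that $\widehat H_{\lambda}\succ 0$ follows from the sandwich alone is a correct extra.
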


\begin{proof}
The operator inequality $-\rho I\preceq \widehat\Delta\preceq \rho I$ implies
\[
\widehat G+(\lambda-\rho)I
\preceq
\widehat G+\widehat\Delta+\lambda I
\preceq
\widehat G+(\lambda+\rho)I.
\]
Now apply Lemma~\ref{lem:metric_mono_section3} with $A=\widehat G$ and $B\in\{\widehat G+(\lambda-\rho)I,\widehat H_{\lambda},\widehat G+(\lambda+\rho)I\}$.
\end{proof}

Theorem~\ref{thm:nonlin_to_lin_section3} shows that when the residual-curvature correction is smaller than the regularization term, the effective dimension is controlled by the classical spectral problem for \(\widehat G\), with the single change \(\lambda\mapsto \lambda-\rho\).

\subsection{Covering bounds for Jacobian features}

We bound the effective dimension using only covering numbers of the Jacobian features. The argument proceeds via subspace approximation: an $\varepsilon$-cover supplies a low-dimensional subspace within $\varepsilon$ of every feature vector, and a variational eigenvalue identity converts this directly into a tail-eigenvalue bound for $\widehat G$.

\begin{lemma}[Subspace-approximation bound]
\label{lem:subspace_approx_section3}
Let $A\succeq 0$ be a positive semidefinite matrix on $\R^p$, and suppose there exists a subspace $W\subset\R^p$ with $\dim W\le K$ such that
\[
    E
    \;:=\;
    \tr\!\bigl((I-P_W)\,A\,(I-P_W)\bigr)
\]
is finite, where $P_W$ is the orthogonal projection onto $W$. Then for every $t>0$,
\[
\dclass(A,t)\;\le\;K+\frac{E}{t}.
\]
\end{lemma}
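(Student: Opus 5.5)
Write $\mu_1\ge\mu_2\ge\dots\ge\mu_p\ge0$ for the eigenvalues of $A$. Then
\[
\dclass(A,t)=\sum_{j}\phi(\mu_j), \qquad \phi(\mu):=\Bigl(\frac{\mu}{\mu+t}\Bigr)^2 .
\]
The plan is to split the sum at index $K$, using the two elementary bounds $\phi(\mu)\le 1$ and $\phi(\mu)\le \mu/t$. The second bound holds because
\[
\frac{\mu}{\mu+t}\le 1 \quad\text{and}\quad \frac{\mu}{\mu+t}\le \frac{\mu}{t},
\]
so their product is at most $\mu/t$. This gives
\[
\dclass(A,t)\le \sum_{j\le K}1+\frac1t\sum_{j>K}\mu_j \le K+\frac1t\sum_{j>K}\mu_j .
\]
It then suffices to show that the tail sum satisfies $\sum_{j>K}\mu_j\le E$.

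\textbf{Tail bound.} This is a variational statement (Ky Fan). For any orthogonal projection $Q$ of rank at least $p-K$, we have $\tr(QAQ)=\tr(QA)\ge \sum_{j>K}\mu_j$. The reason is that the minimum of $\tr(QA)$ over projections of rank $r$ equals the sum of the $r$ smallest eigenvalues, and enlarging the rank only adds nonnegative terms since $A\succeq0$.

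Now apply this with $Q=I-P_W$. Its rank is $p-\dim W\ge p-K$, so
\[
E=\tr\bigl((I-P_W)A(I-P_W)\bigr)\ge\sum_{j>K}\mu_j .
\]

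\textbf{Alternative route.} One can instead cite Cauchy interlacing directly. Write $A$ as the sum of $P_WAP_W$, the cross terms, and $QAQ$. Since $A\succeq 0$, the Courant--Fischer characterization gives $\mu_{K+i}(A)\le \lambda_i(QAQ)$, using test subspaces that are orthogonal to $W$. Summing over $i$ yields the same tail bound.

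\textbf{Main obstacle.} The only nontrivial point is justifying the tail-eigenvalue inequality cleanly. I would state and prove the Ky Fan minimum principle in one line: for an orthonormal basis $u_1,\dots,u_r$ of $\operatorname{range}(Q)$,
\[
\sum_i u_i^\top A u_i\ge \sum_{j>p-r}\mu_{j}^{\uparrow},
\]
which is the standard Ky Fan inequality. Everything else is the scalar bound on $\phi$.

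If $p$ is infinite, the same argument works with trace-class $A$, using the finiteness of $E$. In the paper's setting, however, $p$ is finite.
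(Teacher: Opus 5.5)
Your proof is correct and is essentially the paper's argument: split at $K$ using $(\mu/(\mu+t))^2\le\min\{1,\mu/t\}$ and bound the tail $\sum_{j>K}\mu_j\le E$ via Ky Fan. The paper applies the Ky Fan maximum principle to $P_W$ together with $\tr(A)=\tr(P_WAP_W)+E$, whereas you apply the equivalent minimum principle directly to $I-P_W$; note one slip in your last display, where the right-hand side should be the sum of the $r$ smallest eigenvalues, i.e.\ $\sum_{j>p-r}\mu_j$ in your descending order, not $\sum_{j>p-r}\mu_j^{\uparrow}$.
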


\begin{proof}
By the Ky Fan max principle \cite{kyfan}, for any $K$-dimensional subspace $W$,
\[
    \sum_{j=1}^K\mu_j(A)
    \;\ge\;
    \tr\!\bigl(P_W A P_W\bigr).
\]
Since $A\succeq 0$ and $\tr(A)=\tr(P_W A P_W)+\tr((I-P_W)A(I-P_W))$,
\[
    \sum_{j>K}\mu_j(A)
    \;=\;
    \tr(A)-\sum_{j=1}^K\mu_j(A)
    \;\le\;
    \tr(A)-\tr(P_W A P_W)
    \;=\;
    E.
\]
Using the pointwise bound $(\mu/(\mu+t))^2\le \min\{1,\mu/t\}$ for $\mu,t>0$,
\begin{align*}
\dclass(A,t)
&=
\sum_{j=1}^p\!\left(\frac{\mu_j(A)}{\mu_j(A)+t}\right)^2
\le
\sum_{j=1}^K 1+\frac{1}{t}\sum_{j>K}\mu_j(A) \\
&\le
K+\frac{E}{t}. \qedhere
\end{align*}
\end{proof}

\begin{theorem}[Covering-number bound for Jacobian features]
\label{thm:covering_J_section3}
Define the Jacobian covering number
\[
\mathcal C_J(\varepsilon)
:=
\mathcal N\!\left(\{g_i:1\le i\le n\},\,\|\cdot\|_2,\,\varepsilon\right).
\]
Then for every $t>0$,
\[
\dclass(\widehat G,t)
\le
\inf_{\varepsilon>0}
\left[
\mathcal C_J(\varepsilon)+\frac{\varepsilon^2}{t}
\right].
\]
Consequently, if $\rho<\lambda$, then
\[
\deff(\widehat\theta;\lambda)
\le
\inf_{\varepsilon>0}
\left[
\mathcal C_J(\varepsilon)+\frac{\varepsilon^2}{\lambda-\rho}
\right].
\]
\end{theorem}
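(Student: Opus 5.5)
Fix $\varepsilon>0$ and $t>0$, and set $K=\mathcal C_J(\varepsilon)$. Let $\{c_1,\dots,c_K\}\subset\R^p$ be an $\varepsilon$-cover of $\{g_i\}_{i=1}^n$. The plan is to apply Lemma~\ref{lem:subspace_approx_section3} with $A=\widehat G$ and with $W=\mathrm{span}\{c_1,\dots,c_K\}$, so that $\dim W\le K$. It then suffices to show that the residual trace satisfies $E=\tr((I-P_W)\widehat G(I-P_W))\le\varepsilon^2$.

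To bound $E$, expand $\widehat G=\frac1n\sum_i g_ig_i^\top$:
\[
E=\frac1n\sum_{i=1}^n\|(I-P_W)g_i\|_2^2 .
\]
For each $i$, choose a center $c_{k(i)}$ with $\|g_i-c_{k(i)}\|_2\le\varepsilon$. Since $c_{k(i)}\in W$ and $P_W g_i$ is the closest point of $W$ to $g_i$, we get $\|(I-P_W)g_i\|_2\le\|g_i-c_{k(i)}\|_2\le\varepsilon$. Averaging over $i$ gives $E\le\varepsilon^2$.

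Lemma~\ref{lem:subspace_approx_section3} then yields $\dclass(\widehat G,t)\le\mathcal C_J(\varepsilon)+\varepsilon^2/t$. Since $\varepsilon$ was arbitrary, taking the infimum over $\varepsilon>0$ gives the first claim. The cover need not consist of data points, and linear dependence among the centers only lowers $\dim W$, so neither causes a problem.

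For the consequence, assume $\rho<\lambda$. Theorem~\ref{thm:nonlin_to_lin_section3} gives $\deff(\widehat\theta;\lambda)\le\dclass(\widehat G,\lambda-\rho)$, and applying the first part with $t=\lambda-\rho>0$ finishes the proof.

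There is no real obstacle in this argument. The only step needing care is the identity $E=\frac1n\sum_i\|(I-P_W)g_i\|^2$, which follows from $\tr((I-P_W)gg^\top(I-P_W))=\|(I-P_W)g\|^2$. The bounds $\min\{p,n\}$ appearing in Proposition~\ref{prop:cover_main} are separate: they come from $\rank\widehat G\le\min\{p,n\}$ together with each squared ridge-filter term being at most $1$.
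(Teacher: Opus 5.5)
Your proposal is correct and follows the paper's proof essentially step for step. You span the $\varepsilon$-cover centers to get $W$ with $\dim W\le\mathcal C_J(\varepsilon)$, and bound $\tr((I-P_W)\widehat G(I-P_W))=\frac1n\sum_i\|(I-P_W)g_i\|_2^2\le\varepsilon^2$ via the nearest-point property of $P_W$. You then apply the subspace-approximation lemma and finish with the residual-dependent reduction at $t=\lambda-\rho$.
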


\begin{proof}
Fix $\varepsilon>0$, let $K:=\mathcal C_J(\varepsilon)$, and let $u_1,\dots,u_K$ be an $\varepsilon$-cover of $\{g_i\}_{i=1}^n$. Define the subspace
\[
    W:=\operatorname{span}(u_1,\dots,u_K),
    \qquad
    \dim W\le K.
\]
For each $i$, choose a center $u_{c(i)}$ with $\norm{g_i-u_{c(i)}}_2\le\varepsilon$. Since $u_{c(i)}\in W$ and $P_W g_i$ is the closest point in $W$ to $g_i$,
\[
    \norm{(I-P_W)g_i}_2
    \;\le\;
    \norm{g_i-u_{c(i)}}_2
    \;\le\;
    \varepsilon.
\]
Therefore
\[
    E
    \;:=\;
    \tr\!\bigl((I-P_W)\,\widehat G\,(I-P_W)\bigr)
    \;=\;
    \frac1n\sum_{i=1}^n\norm{(I-P_W)g_i}_2^2
    \;\le\;
    \varepsilon^2.
\]
Applying Lemma~\ref{lem:subspace_approx_section3} with $A=\widehat G$ gives
\[
    \dclass(\widehat G,t)\le K+\frac{\varepsilon^2}{t}=\mathcal C_J(\varepsilon)+\frac{\varepsilon^2}{t}.
\]
Taking the infimum over $\varepsilon$ proves the first claim, and combining with Theorem~\ref{thm:nonlin_to_lin_section3} at $t=\lambda-\rho$ proves the effective-dimension bound.
\end{proof}

\begin{remark}[Finite-sample saturation and the choice of radius]
\label{rem:finite_sample_saturation_section3}
Let
\[
\Delta_J:=\min_{i\ne j}\|g_i-g_j\|_2
\]
be the empirical separation of the Jacobian features. With the usual convention that covering balls may have arbitrary centers, $\mathcal C_J(\varepsilon)=n$ whenever $2\varepsilon<\Delta_J$; if centers are required to be sample points, the corresponding condition is $\varepsilon<\Delta_J$. 

A useful choice is an intermediate radius $\varepsilon$ for which
\[
K_\varepsilon:=\mathcal C_J(\varepsilon)\ll n
\qquad\text{and}\qquad
\frac{\varepsilon^2}{t}\lesssim K_\varepsilon,
\]
where $t=\lambda-\rho$ in the nonlinear case. In that regime
\[
\deff(\widehat\theta;\lambda)
\lesssim K_\varepsilon.
\]
For example, a target bound of order $\sqrt n$ would require a radius with $K_\varepsilon\asymp \sqrt n$ and $\varepsilon^2/(\lambda-\rho)\lesssim \sqrt n$. The manifold and activation-stable estimates below are a way of proving that such non-saturated radii exist.
\end{remark}

The next part is on how to bound $\mathcal C_J(\varepsilon)$ from assumptions on the input sample. The approach here is a manifold hypothesis together with a Lipschitz transfer to feature space.

\begin{proposition}[Compact manifold $\Rightarrow$ polynomial covering growth]
\label{prop:manifold_cover_section3}
Let $\mathcal M\subset\R^d$ be a compact $C^1$ embedded submanifold of dimension $m$, and let $D_{\mathcal M}:=\operatorname{diam}(\mathcal M)$. Then there exists a constant $C_{\mathcal M}<\infty$ such that
\[
\mathcal N\!\left(\mathcal M,\,\|\cdot\|_2,\,\delta\right)
\le
C_{\mathcal M}\,\delta^{-m}
\qquad (0<\delta\le D_{\mathcal M}).
\]
\end{proposition}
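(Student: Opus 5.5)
The plan is to prove the bound locally via charts and then glue the local estimates together using compactness. Since $\mathcal M$ is a compact $C^1$ embedded submanifold of dimension $m$, every point $p\in\mathcal M$ has an open neighbourhood $V_p\subset\mathcal M$ that is the image of a $C^1$ parametrization $\phi_p:\overline{B}_p\to\mathcal M$. We shrink the domain to a closed ball $\overline B_p\subset\R^m$ so that $\phi_p$ is $C^1$ up to the boundary. Then $\phi_p$ is Lipschitz on $\overline B_p$ with constant $\Lambda_p:=\sup_{\overline B_p}\|D\phi_p\|_{\op}<\infty$, by the mean value inequality on the convex set $\overline B_p$. By compactness, finitely many such patches $V_{p_1},\dots,V_{p_N}$ cover $\mathcal M$. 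Write $\Lambda:=\max_k\Lambda_{p_k}$ and let $R$ bound the radii of the parameter balls.

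The local estimate is standard Euclidean volume counting in $\R^m$. A closed ball of radius $R$ in $\R^m$ admits an $\eta$-cover of cardinality at most $(1+2R/\eta)^m\le (3R/\eta)^m$ for $\eta\le R$. This follows from the usual maximal packing argument comparing volumes of disjoint $\eta/2$-balls.

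Take $\eta=\delta/\Lambda$. The $\Lambda$-Lipschitz map $\phi_{p_k}$ sends an $\eta$-cover of $\overline B_{p_k}$ to a $\delta$-cover of $\phi_{p_k}(\overline B_{p_k})\supseteq V_{p_k}$. This gives at most $(3R\Lambda)^m\delta^{-m}$ points per chart, provided $\delta/\Lambda\le R$. Summing over the $N$ charts gives
\[
\mathcal N(\mathcal M,\|\cdot\|_2,\delta)\le N(3R\Lambda)^m\delta^{-m},\qquad 0<\delta\le R\Lambda .
\]
The cover centres lie on $\mathcal M$, which is consistent with the paper's definition of covering number; that definition allows arbitrary centres in any case.

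The main obstacle is extending the estimate to the full range $0<\delta\le D_{\mathcal M}$, since the chart-based bound only holds for $\delta\le \delta_0:=R\Lambda$. I handle this by monotonicity. For $\delta_0<\delta\le D_{\mathcal M}$ we have
\[
\mathcal N(\mathcal M,\delta)\le \mathcal N(\mathcal M,\delta_0)\le N(3R\Lambda)^m\delta_0^{-m}\le N(3R\Lambda)^m\delta_0^{-m}D_{\mathcal M}^m\,\delta^{-m},
\]
using $\delta^{-m}\ge D_{\mathcal M}^{-m}$. Setting $C_{\mathcal M}:=N(3R\Lambda)^m\max\{1,(D_{\mathcal M}/\delta_0)^m\}$ then gives the claim for all $0<\delta\le D_{\mathcal M}$. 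If $D_{\mathcal M}=0$, the manifold is a point and the claim is trivial.

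The only genuine subtlety is making the chart Lipschitz bounds uniform. This is exactly what the restriction to closed balls with $C^1$ extension, combined with the finite subcover, provides. No curvature or reach assumption is needed, because we only need an upper bound on covering numbers and never a lower bound.
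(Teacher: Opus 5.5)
Your proposal is correct and follows essentially the same route as the paper. Both arguments use a finite atlas of Lipschitz chart maps on compact subsets of $\R^m$, pull the Euclidean volumetric bound $(1+2R/\eta)^m$ through each chart and sum, and then extend from small radii up to $D_{\mathcal M}$ via monotonicity of covering numbers together with $\delta^{-m}\ge D_{\mathcal M}^{-m}$. The differences are minor: you take closed balls as chart domains, so the Lipschitz bound is an immediate mean-value estimate and the packing-based cover centres stay inside each domain; and you use only the upper Lipschitz bound, whereas the paper invokes bi-Lipschitz charts.
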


\begin{proof}
Because $\mathcal M$ is a compact $C^1$ embedded $m$-manifold, there exists a finite atlas $\{\phi_a:U_a\subset\R^m\to \mathcal M\}_{a=1}^N$ and compact sets $K_a\subset U_a$ such that
\[
\mathcal M=\bigcup_{a=1}^N \phi_a(K_a),
\]
and, after shrinking the charts if necessary, each $\phi_a$ is bi-Lipschitz on $K_a$: there exists $L_a\ge 1$ such that
\[
L_a^{-1}\norm{u-v}_2
\le
\norm{\phi_a(u)-\phi_a(v)}_2
\le
L_a\norm{u-v}_2
\qquad (u,v\in K_a).
\]
Fix $a$. Since $K_a$ is compact in $\R^m$, it is contained in some Euclidean ball $B_m(0,R_a)$. If $K_a$ is covered by $N$ balls of radius $\eta$ in $\R^m$, then $\phi_a(K_a)$ is covered by $N$ balls of radius $L_a\eta$ in $\R^d$. Hence
\[
\mathcal N\!\left(\phi_a(K_a),\,\|\cdot\|_2,\,\delta\right)
\le
\mathcal N\!\left(K_a,\,\|\cdot\|_2,\,\delta/L_a\right)
\le
\mathcal N\!\left(B_m(0,R_a),\,\|\cdot\|_2,\,\delta/L_a\right).
\]
The Euclidean volumetric bound on $B_m(0,R_a)$ gives
\[
\mathcal N\!\left(B_m(0,R_a),\,\|\cdot\|_2,\,\delta/L_a\right)
\le
\left(1+\frac{2L_aR_a}{\delta}\right)^m.
\]
Choose any $\varepsilon_0\in(0,1]$. For $0<\delta\le \varepsilon_0$,
\[
\left(1+\frac{2L_aR_a}{\delta}\right)^m
\le
(\varepsilon_0+2L_aR_a)^m\,\delta^{-m}.
\]
Summing over the finitely many charts yields a constant $C_0$ such that
\[
\mathcal N\!\left(\mathcal M,\,\|\cdot\|_2,\,\delta\right)
\le
C_0\,\delta^{-m}
\qquad (0<\delta\le \varepsilon_0).
\]
Now let \(D_{\mathcal M}:=\operatorname{diam}(\mathcal M)\). For \(\varepsilon_0<\delta\le D_{\mathcal M}\), monotonicity of covering numbers gives
\[
\mathcal N\!\left(\mathcal M,\,\|\cdot\|_2,\,\delta\right)
\le
\mathcal N\!\left(\mathcal M,\,\|\cdot\|_2,\,\varepsilon_0\right)
=:N_0.
\]
Since \(\delta^{-m}\ge D_{\mathcal M}^{-m}\) on this interval, we have
\[
N_0\le N_0D_{\mathcal M}^m\,\delta^{-m}.
\]
Therefore the claim holds for all \(0<\delta\le D_{\mathcal M}\) with
\[
C_{\mathcal M}:=\max\{C_0,N_0D_{\mathcal M}^m\}.
\]
\end{proof}

\begin{remark}[The constant depends on the manifold geometry]
\label{rem:manifold_constant_section3}
The constant \(C_{\mathcal M}\) depends on the geometry of the particular manifold through the manifold's diameter and the chosen finite atlas' bi-Lipschitz constants.
\end{remark}

\begin{corollary}[Transferring manifold covers to feature space]
\label{cor:feature_cover_transfer_section3}
Let \(\Psi:\mathcal M\to\mathcal H\) be an \(L_\Psi\)-Lipschitz map from \(\mathcal M\) into a Hilbert space \(\mathcal H\), with \(L_\Psi>0\), and assume that \(x_1,\dots,x_n\in\mathcal M\). Then, for every $0<\varepsilon\le L_\Psi D_{\mathcal M}$,
\[
\mathcal N\!\left(\{\Psi(x_i):1\le i\le n\},\,\|\cdot\|_{\mathcal H},\,\varepsilon\right)
\le
\mathcal N\!\left(\Psi(\mathcal M),\,\|\cdot\|_{\mathcal H},\,\varepsilon\right)
\le
C_{\mathcal M}\left(\frac{L_\Psi}{\varepsilon}\right)^m.
\]
\end{corollary}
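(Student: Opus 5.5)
The plan is a two-step comparison of covering numbers, pushing a cover of $\mathcal M$ forward through $\Psi$.

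For the first inequality, note that $\{\Psi(x_i)\}_{i=1}^n\subseteq\Psi(\mathcal M)$ because every $x_i\in\mathcal M$. Any $\varepsilon$-cover of $\Psi(\mathcal M)$ is then also an $\varepsilon$-cover of the subset. With the convention used throughout, where covering centers may be arbitrary points of $\mathcal H$, the covering number is monotone under inclusion. This gives the first inequality immediately.

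For the second inequality, set $\delta:=\varepsilon/L_\Psi$. The hypothesis $0<\varepsilon\le L_\Psi D_{\mathcal M}$ is equivalent to $0<\delta\le D_{\mathcal M}$, so Proposition~\ref{prop:manifold_cover_section3} applies. It provides a $\delta$-cover $c_1,\dots,c_N$ of $\mathcal M$ with
\[
N\le C_{\mathcal M}\delta^{-m}=C_{\mathcal M}(L_\Psi/\varepsilon)^m .
\]
One subtlety is that $\Psi$ is only defined on $\mathcal M$, so the centers $c_j$ must lie in $\mathcal M$ before we can evaluate $\Psi(c_j)$. I would handle this in one of two ways.

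\textbf{Option 1: centers already on $\mathcal M$.} Inspect the proof of Proposition~\ref{prop:manifold_cover_section3}. There the centers arise as images $\phi_a(\cdot)$ of chart points. If we cover $K_a$ itself rather than the ambient ball $B_m(0,R_a)$, the chart centers lie in $K_a$, so their images lie in $\mathcal M$. The same volumetric bound still holds, since covering a subset of a ball uses no more centers than the ball (up to the standard factor-2 adjustment below).

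\textbf{Option 2: replace centers by points of $\mathcal M$.} Discard every ball that contains no point of $\mathcal M$. For each remaining ball $B(c_j,\delta)$, pick $m_j\in B(c_j,\delta)\cap\mathcal M$. Then every point of $\mathcal M$ lies within $2\delta$ of some $m_j$. This costs a factor $2^m$, which would be absorbed into $C_{\mathcal M}$. To keep the exact constant as stated, I would prefer Option 1, i.e., assume $C_{\mathcal M}$ is defined through intrinsic covers.

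With centers $c_j\in\mathcal M$ in hand, the Lipschitz step is direct. For any $y=\Psi(x)$ with $x\in\mathcal M$, choose $j$ with $\|x-c_j\|_2\le\delta$. Then
\[
\|\Psi(x)-\Psi(c_j)\|_{\mathcal H}\le L_\Psi\delta=\varepsilon .
\]
So $\{\Psi(c_j)\}_{j=1}^N$ is an $\varepsilon$-cover of $\Psi(\mathcal M)$, which yields the bound $C_{\mathcal M}(L_\Psi/\varepsilon)^m$.

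The only real obstacle is the center-location issue just discussed. Everything else is a one-line Lipschitz pushforward combined with monotonicity of covering numbers.
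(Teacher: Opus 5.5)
Your proposal is correct and follows the paper's argument: push an $(\varepsilon/L_\Psi)$-cover of $\mathcal M$ forward through $\Psi$, apply Proposition~\ref{prop:manifold_cover_section3} with $\delta=\varepsilon/L_\Psi$, and use inclusion for the sample set. You also check that the centers must lie on $\mathcal M$ so that $\Psi(c_j)$ is defined, a point the paper's proof passes over; your Option~1 needs no factor-$2$ loss, because a maximal $\eta$-separated subset of $K_a$ already gives the bound $(1+2R_a/\eta)^m$ with centers in $K_a$.
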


\begin{proof}
Let $z_1,\dots,z_N$ be an $(\varepsilon/L_\Psi)$-cover of $\mathcal M$ in the ambient Euclidean metric. Then for every $x\in\mathcal M$ there exists $j$ such that $\norm{x-z_j}_2\le \varepsilon/L_\Psi$, and therefore
\[
\norm{\Psi(x)-\Psi(z_j)}_{\mathcal H}
\le
L_\Psi\norm{x-z_j}_2
\le
\varepsilon.
\]
Thus $\{\Psi(z_j)\}_{j=1}^N$ is an $\varepsilon$-cover of $\Psi(\mathcal M)$, proving
\[
\mathcal N\!\left(\Psi(\mathcal M),\,\|\cdot\|_{\mathcal H},\,\varepsilon\right)
\le
\mathcal N\!\left(\mathcal M,\,\|\cdot\|_2,\,\varepsilon/L_\Psi\right).
\]
Now apply Proposition~\ref{prop:manifold_cover_section3} with $\delta=\varepsilon/L_\Psi$. The sample cover is smaller because $\{\Psi(x_i):1\le i\le n\}\subseteq \Psi(\mathcal M)$.
\end{proof}

\begin{corollary}[Feature-space and Jacobian bounds under a manifold hypothesis]
\label{cor:manifold_effdim_section3}
Assume that $x_1,\dots,x_n\in\mathcal M$, where $\mathcal M\subset\R^d$ is a compact $C^1$ embedded submanifold of dimension $m$. Let $\Psi:\mathcal M\to\mathcal H$ be an $L_\Psi$-Lipschitz feature map into a Hilbert space, and write
\[
\widehat C_\Psi:=\frac1n\sum_{i=1}^n \Psi(x_i)\otimes \Psi(x_i).
\]
Then for every $t>0$ and every $0<\varepsilon\le L_\Psi D_{\mathcal M}$,
\[
\dclass(\widehat C_\Psi,t)
\le
C_{\mathcal M}\left(\frac{L_\Psi}{\varepsilon}\right)^m+\frac{\varepsilon^2}{t}.
\]
In particular, when $\Psi=g=\nabla_\theta f(\cdot;\widehat\theta)$ and $\rho<\lambda$,
\[
\deff(\widehat\theta;\lambda)
\le
C_{\mathcal M}\left(\frac{L_J}{\varepsilon}\right)^m+\frac{\varepsilon^2}{\lambda-\rho}
\qquad (0<\varepsilon\le L_J D_{\mathcal M}),
\]
where $L_J$ is a Lipschitz constant of $g$ on $\mathcal M$.
If moreover
\[
\varepsilon_\star
:=
\left(\frac{m}{2}C_{\mathcal M}L_J^m(\lambda-\rho)\right)^{1/(m+2)}
\le
L_J D_{\mathcal M},
\]
then
\[
\deff(\widehat\theta;\lambda)
\le
\left(1+\frac{m}{2}\right)
C_{\mathcal M}^{2/(m+2)}
L_J^{2m/(m+2)}
\left(\frac{2}{m(\lambda-\rho)}\right)^{m/(m+2)}.
\]
Otherwise the endpoint choice $\varepsilon=L_J D_{\mathcal M}$ gives
\[
\deff(\widehat\theta;\lambda)
\le
C_{\mathcal M}D_{\mathcal M}^{-m}+\frac{L_J^2 D_{\mathcal M}^2}{\lambda-\rho}.
\]
\end{corollary}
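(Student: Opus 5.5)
The plan is to compose three results already available in this appendix: Lemma~\ref{lem:subspace_approx_section3} via Theorem~\ref{thm:covering_J_section3}, the cover transfer of Corollary~\ref{cor:feature_cover_transfer_section3}, and the reduction Theorem~\ref{thm:nonlin_to_lin_section3}.

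\textbf{First claim.} Fix $t>0$ and $0<\varepsilon\le L_\Psi D_{\mathcal M}$. The argument of Theorem~\ref{thm:covering_J_section3} never used that the features were Jacobians; it only used that $\widehat C_\Psi$ is the empirical second-moment operator of the points $\Psi(x_i)$. I would therefore repeat it verbatim in $\mathcal H$:
\begin{itemize}
\item Take an $\varepsilon$-cover $u_1,\dots,u_K$ of $\{\Psi(x_i)\}$ and let $W$ be its span.
\item Each $\Psi(x_i)$ lies within $\varepsilon$ of some $u_j\in W$, so $\|(I-P_W)\Psi(x_i)\|\le\varepsilon$.
\item Hence the tail trace satisfies $E=\frac1n\sum_i\|(I-P_W)\Psi(x_i)\|^2\le\varepsilon^2$.
\end{itemize}
Lemma~\ref{lem:subspace_approx_section3} then gives $\dclass(\widehat C_\Psi,t)\le K+\varepsilon^2/t$.

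The lemma is stated on $\mathbb R^p$, so I must check that it applies in $\mathcal H$. Since $\widehat C_\Psi$ has rank at most $n$, we can restrict to the finite-dimensional span of the $\Psi(x_i)$ together with $W$. Ky Fan then applies unchanged, and $\dclass$ is computed from the finitely many nonzero eigenvalues. Bounding $K$ by Corollary~\ref{cor:feature_cover_transfer_section3}, which requires exactly $\varepsilon\le L_\Psi D_{\mathcal M}$, gives the first display.

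\textbf{Second claim.} Take $\Psi=g$ with $\mathcal H=\mathbb R^p$, so that $\widehat C_\Psi=\widehat G$. Apply Theorem~\ref{thm:nonlin_to_lin_section3} with $t=\lambda-\rho$ to get $\deff\le\dclass(\widehat G,\lambda-\rho)$, then substitute the first claim.

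\textbf{Optimized bound.} Write $A=C_{\mathcal M}L_J^m$ and minimize $\phi(\varepsilon)=A\varepsilon^{-m}+\varepsilon^2/t$.
\begin{itemize}
\item Setting $\phi'(\varepsilon)=0$ gives $\varepsilon^{m+2}=mAt/2$, which is the stated $\varepsilon_\star$.
\item At the optimum, $A\varepsilon_\star^{-m}=\tfrac{2}{m}\varepsilon_\star^2/t$, so $\phi(\varepsilon_\star)=(1+\tfrac m2)A\varepsilon_\star^{-m}$.
\item Substituting $\varepsilon_\star^{-m}=(mAt/2)^{-m/(m+2)}$ gives $A^{2/(m+2)}(2/(mt))^{m/(m+2)}$, and $A^{2/(m+2)}=C_{\mathcal M}^{2/(m+2)}L_J^{2m/(m+2)}$.
\end{itemize}
This matches the stated expression exactly. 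The only subtlety is the admissibility condition $\varepsilon_\star\le L_JD_{\mathcal M}$, which is assumed.

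\textbf{Endpoint case.} Otherwise plug in $\varepsilon=L_JD_{\mathcal M}$. This gives $C_{\mathcal M}L_J^m(L_JD_{\mathcal M})^{-m}=C_{\mathcal M}D_{\mathcal M}^{-m}$, plus $L_J^2D_{\mathcal M}^2/(\lambda-\rho)$.

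\textbf{Main obstacle.} The only step I expect to need real care is the extension of Lemma~\ref{lem:subspace_approx_section3} to a general Hilbert space, handled by the finite-rank reduction above. Everything else is direct substitution.
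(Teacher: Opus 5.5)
Your proposal is correct and follows the paper's proof essentially step for step. You rerun the subspace-approximation argument of Theorem~\ref{thm:covering_J_section3} on the points $\Psi(x_i)$, bound the cover size by Corollary~\ref{cor:feature_cover_transfer_section3}, pass to $\deff$ via Theorem~\ref{thm:nonlin_to_lin_section3} at $t=\lambda-\rho$, and evaluate at the stationary point $\varepsilon_\star$ or at the endpoint $L_J D_{\mathcal M}$. Your finite-rank reduction justifying Ky Fan in a general Hilbert space, and your explicit optimizer algebra, fill in details that the paper's terse proof leaves implicit.
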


\begin{proof}
Apply the same proof as in Theorem~\ref{thm:covering_J_section3} to the feature set $\{\Psi(x_i)\}_{i=1}^n\subset\mathcal H$. This gives
\[
\dclass(\widehat C_\Psi,t)
\le
\inf_{\varepsilon>0}
\left[
\mathcal N\!\left(\{\Psi(x_i):1\le i\le n\},\,\|\cdot\|_{\mathcal H},\,\varepsilon\right)+\frac{\varepsilon^2}{t}
\right].
\]
Now apply Corollary~\ref{cor:feature_cover_transfer_section3}. For the Jacobian case, use the same transfer bound with $\Psi=g=\nabla_\theta f(\cdot;\widehat\theta)$ together with Theorem~\ref{thm:covering_J_section3}. The optimizer is the stationary point of
\[
\varepsilon\mapsto C_{\mathcal M}L_J^m\varepsilon^{-m}+\frac{\varepsilon^2}{\lambda-\rho}.
\]
Substituting this value yields the closed-form expression.
\end{proof}

Corollary~\ref{cor:manifold_effdim_section3} is the global-Lipschitz version of the argument. For one-hidden-layer ReLU networks, global regularity of the Jacobian feature map on the whole manifold is too strong. Piecewise regularity on the occupied set is enough for the same covering-number chain.

\begin{theorem}[Piecewise regular feature maps on a manifold]
\label{thm:piecewise_regular_section3}
Assume that \(x_1,\dots,x_n\in\mathcal M\), where \(\mathcal M\subset\R^d\) is a compact \(C^1\) embedded submanifold of dimension \(m\), and let \(\Psi:\mathcal M\to\mathcal H\) be a feature map into a Hilbert space \(\mathcal H\). Suppose there exist subsets \(U_1,\dots,U_M\subset\mathcal M\) such that
\[
\{x_1,\dots,x_n\}\subseteq \bigcup_{r=1}^M U_r,
\]
and \(\Psi\) is \(L_r\)-Lipschitz on each \(U_r\), with \(L_r>0\). Then, for every \(\varepsilon>0\),
\[
\mathcal N\!\left(\{\Psi(x_i):1\le i\le n\},\,\|\cdot\|_{\mathcal H},\,\varepsilon\right)
\le
\min\!\left\{n,
\sum_{r=1}^M
\max\!\left\{1,C_{\mathcal M}\left(\frac{L_r}{\varepsilon}\right)^m\right\}
\right\}.
\]
In particular, if \(L_{\min}:=\min_r L_r\) and \(0<\varepsilon\le D_{\mathcal M}L_{\min}\), then
\[
\mathcal N\!\left(\{\Psi(x_i):1\le i\le n\},\,\|\cdot\|_{\mathcal H},\,\varepsilon\right)
\le
\min\!\left\{n,
C_{\mathcal M}\,\varepsilon^{-m}\sum_{r=1}^M L_r^m
\right\}.
\]
Writing
\[
\widehat C_\Psi:=\frac1n\sum_{i=1}^n \Psi(x_i)\otimes \Psi(x_i),
\]
we have, for every \(t>0\) and every \(\varepsilon>0\),
\[
\dclass(\widehat C_\Psi,t)
\le
\min\!\left\{n,
\sum_{r=1}^M
\max\!\left\{1,C_{\mathcal M}\left(\frac{L_r}{\varepsilon}\right)^m\right\}
\right\}
+\frac{\varepsilon^2}{t}.
\]
In the bounded-radius regime \(0<\varepsilon\le D_{\mathcal M}L_{\min}\) this becomes
\[
\dclass(\widehat C_\Psi,t)
\le
\min\!\left\{n,
C_{\mathcal M}\,\varepsilon^{-m}\sum_{r=1}^M L_r^m
\right\}
+\frac{\varepsilon^2}{t}.
\]
If moreover \(\Psi=g=\nabla_\theta f(\cdot;\widehat\theta)\) and \(\rho<\lambda\), then
\[
\deff(\widehat\theta;\lambda)
\le
\min\!\left\{n,
C_{\mathcal M}\,\varepsilon^{-m}\sum_{r=1}^M L_r^m
\right\}
+\frac{\varepsilon^2}{\lambda-\rho}
\]
for every \(0<\varepsilon\le D_{\mathcal M}L_{\min}\).  In particular, whenever the optimizer lies in this radius range,
\[
\deff(\widehat\theta;\lambda)
\lesssim_m
\left(C_{\mathcal M}\sum_{r=1}^M L_r^m\right)^{2/(m+2)}(\lambda-\rho)^{-m/(m+2)}.
\]
\end{theorem}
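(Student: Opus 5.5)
The proof is piece-by-piece covering followed by the chain already built. Fix $\varepsilon>0$ and $r\in[M]$. Let $I_r:=\{i: x_i\in U_r\}$, choosing one piece for each sample if pieces overlap. We cover $\{\Psi(x_i):i\in I_r\}$ in two ways and keep the better one.

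\emph{Case A: $\varepsilon/L_r\le D_{\mathcal M}$.} Take a Euclidean $(\varepsilon/L_r)$-cover of $\mathcal M$. By Proposition~\ref{prop:manifold_cover_section3} it has at most $C_{\mathcal M}(L_r/\varepsilon)^m$ centres. A centre $z$ need not lie in $U_r$, and $\Psi$ is only Lipschitz on $U_r$, so we cannot simply push $z$ forward. Instead, for each ball that meets $\{x_i:i\in I_r\}$, pick one sample $x_{i_0}$ in the ball and use $\Psi(x_{i_0})$ as the feature-space centre. Any other $x_i$ in the same ball satisfies $\|x_i-x_{i_0}\|\le 2\varepsilon/L_r$, so $\|\Psi(x_i)-\Psi(x_{i_0})\|\le 2\varepsilon$.

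This gives radius $2\varepsilon$ rather than $\varepsilon$. To remove the factor $2$, I would apply the manifold cover at radius $\varepsilon/(2L_r)$, which costs $2^m$ in the constant. Alternatively, I would argue with covers of the intrinsic set $U_r$ by centres in $U_r$, i.e.\ use $C_{\mathcal M}$ as the covering constant for subsets of $\mathcal M$ with internal centres; this is the convention under which the stated bound holds. This is the main subtle step, and I would state explicitly that $C_{\mathcal M}$ is enlarged by an $m$-dependent factor if needed. That is harmless since the final result is stated with $\lesssim_m$.

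\emph{Case B: $\varepsilon/L_r> D_{\mathcal M}$.} Then all of $U_r$ lies within distance $D_{\mathcal M}<\varepsilon/L_r$ of any one of its sample points. So a single ball centred at $\Psi(x_{i_0})$, for any $i_0\in I_r$, covers the piece.

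Combining the cases, each piece costs at most $\max\{1,C_{\mathcal M}(L_r/\varepsilon)^m\}$ balls. Taking the union over $r$ gives an $\varepsilon$-cover of all features. The trivial cover by the $n$ points themselves gives the $\min\{n,\cdot\}$. If $\varepsilon\le D_{\mathcal M}L_{\min}$, then $C_{\mathcal M}(L_r/\varepsilon)^m\ge C_{\mathcal M}D_{\mathcal M}^{-m}\ge 1$. This uses $C_{\mathcal M}\ge N_0D_{\mathcal M}^m\ge D_{\mathcal M}^m$ from the construction in Proposition~\ref{prop:manifold_cover_section3}. Hence the max equals the second term, and the sum becomes $C_{\mathcal M}\varepsilon^{-m}\sum_r L_r^m$.

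For the spectral statements, I would run the proof of Theorem~\ref{thm:covering_J_section3} verbatim in $\mathcal H$. Span the cover centres into a subspace $W$, bound $\tr((I-P_W)\widehat C_\Psi(I-P_W))\le\varepsilon^2$, and apply Lemma~\ref{lem:subspace_approx_section3}. Its proof uses only Ky Fan and the trace of a positive trace-class operator, so it is valid in a Hilbert space. This gives $\dclass(\widehat C_\Psi,t)\le \mathcal N+\varepsilon^2/t$. Plugging in the cover bound yields both displayed forms.

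With $\Psi=g$ and $\rho<\lambda$, Theorem~\ref{thm:nonlin_to_lin_section3} gives $\deff\le\dclass(\widehat G,\lambda-\rho)$ with $\widehat G=\widehat C_g$, which gives the $\deff$ bound. Finally, set $A=C_{\mathcal M}\sum_rL_r^m$ and $t=\lambda-\rho$. Dropping the $\min$ with $n$, the scalar optimisation of $A\varepsilon^{-m}+\varepsilon^2/t$ recorded earlier in the roadmap gives the value $C_mA^{2/(m+2)}t^{-m/(m+2)}$ at $\varepsilon_\star=(mAt/2)^{1/(m+2)}$. The value is valid whenever $\varepsilon_\star\le D_{\mathcal M}L_{\min}$, which yields the final $\lesssim_m$ estimate.
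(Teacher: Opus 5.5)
Your proposal is correct and follows the paper's proof. Both cover each piece $U_r$ at input radius $\varepsilon/L_r$, using a single ball when $\varepsilon/L_r>D_{\mathcal M}$, and transfer the cover through the $L_r$-Lipschitz map. Both then sum over pieces with the sample ceiling $n$ and chain three steps: the subspace-approximation lemma, the reduction $\deff(\widehat\theta;\lambda)\le\dclass(\widehat G,\lambda-\rho)$, and the scalar optimisation at $\varepsilon_\star=(mA(\lambda-\rho)/2)^{1/(m+2)}$.

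Your worry about centres is justified: the paper's proof pushes an $(\varepsilon/L_r)$-cover obtained from Proposition~\ref{prop:manifold_cover_section3} through $\Psi$ without checking that the centres lie in $U_r$. Your fix (sample points as centres at input radius $\varepsilon/(2L_r)$, i.e.\ replacing $C_{\mathcal M}$ by $2^mC_{\mathcal M}$) is sound. That larger constant is still admissible in the proposition, which only asserts that some $C_{\mathcal M}$ exists. With that choice every displayed bound holds verbatim, not just the final $\lesssim_m$ estimate.
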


\begin{proof}
For each \(r\), set \(\delta_r:=\varepsilon/L_r\). If \(\delta_r\le D_{\mathcal M}\), Proposition~\ref{prop:manifold_cover_section3} gives
\[
\mathcal N(U_r,\|\cdot\|_2,\delta_r)
\le
C_{\mathcal M}\delta_r^{-m}
=
C_{\mathcal M}\left(\frac{L_r}{\varepsilon}\right)^m.
\]
If \(\delta_r>D_{\mathcal M}\), one ball of radius \(\delta_r\) covers \(U_r\), because \(U_r\subseteq\mathcal M\) and \(\operatorname{diam}(U_r)\le D_{\mathcal M}\). Thus, for every \(\varepsilon>0\),
\[
\mathcal N(U_r,\|\cdot\|_2,\varepsilon/L_r)
\le
\max\!\left\{1,C_{\mathcal M}\left(\frac{L_r}{\varepsilon}\right)^m\right\}.
\]
Since \(\Psi\) is \(L_r\)-Lipschitz on \(U_r\), an \((\varepsilon/L_r)\)-cover of \(U_r\) maps to an \(\varepsilon\)-cover of \(\Psi(U_r)\). Summing over \(r\) and using the trivial sample ceiling \(n\) proves the covering bound. If \(\varepsilon\le D_{\mathcal M}L_{\min}\), then \(\varepsilon/L_r\le D_{\mathcal M}\) for every \(r\), giving the simplified polynomial bound.

The effective-dimension estimates follow from the same subspace-approximation argument used in Theorem~\ref{thm:covering_J_section3}. The optimized rate is obtained by minimizing \(A\varepsilon^{-m}+\varepsilon^2/(\lambda-\rho)\) with \(A=C_{\mathcal M}\sum_{r=1}^M L_r^m\).
\end{proof}

\subsection{One-hidden-layer ReLU: when is the Jacobian map regular?}

We now specialize to the one-hidden-layer ReLU model
\[
f(x;\theta)=a^\top \sigma(Wx),
\qquad
W\in\R^{q\times d},\quad a\in\R^q,
\]
with fitted parameter $\widehat\theta=(\widehat a,\widehat W)$. We view the Jacobian feature map as taking values in the Hilbert space
\[
\mathcal H_J:=\R^q\times \R^{q\times d},
\qquad
\|(u,U)\|_{\mathcal H_J}^2:=\|u\|_2^2+\|U\|_F^2.
\]
For $x\in\R^d$, write
\[
\widehat D(x):=\mathrm{diag}\!\bigl(\mathbf 1_{\widehat w_1^\top x>0},\dots,\mathbf 1_{\widehat w_q^\top x>0}\bigr).
\]
Whenever $\widehat w_j^\top x\neq 0$ for all $j$, the Jacobian feature map is
\[
g(x)
=
\nabla_\theta f(x;\widehat\theta)
=
\bigl(\widehat D(x)\widehat W x,\; (\widehat D(x)\widehat a)x^\top\bigr).
\]

\begin{proposition}[Exact Jacobian metric on an activation-stable set]
\label{prop:relu_local_metric_section3}
Let $U\subset\R^d$ be a set on which the activation pattern is constant, so $\widehat D(x)\equiv D_U$ for all $x\in U$. Then the Jacobian feature map is linear on $U$:
\[
g(x)=T_U x,
\qquad
T_U x:=\bigl(D_U\widehat W x,\; (D_U\widehat a)x^\top\bigr).
\]
Moreover, for every $x,z\in U$,
\[
\|g(x)-g(z)\|_{\mathcal H_J}^2
=
(x-z)^\top S_U(x-z),
\]
where
\[
S_U
:=
\widehat W^\top D_U\widehat W+\|D_U\widehat a\|_2^2 I_d.
\]
Consequently $g$ is $L_U$-Lipschitz on $U$ with
\[
L_U^2=\|S_U\|_{\op}
\le
\|\widehat W\|_{\op}^2+\|\widehat a\|_2^2.
\]
\end{proposition}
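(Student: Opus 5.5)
The plan is a direct computation in three steps. First, linearity. On $U$ we have $\widehat D(x)\equiv D_U$. Differentiate $f(x;\theta)=a^\top\sigma(Wx)$ at $\widehat\theta$ wherever every preactivation $\widehat w_j^\top x$ is nonzero:
\begin{itemize}
\item the $a$-block is $\sigma(\widehat W x)=D_U\widehat W x$;
\item the $W$-block is $(D_U\widehat a)x^\top$, since $\partial f/\partial W_{jk}=a_j\sigma'(\widehat w_j^\top x)x_k$.
\end{itemize}
Both blocks are linear in $x$ once $D_U$ is frozen. This gives $g(x)=T_Ux$ with $T_U$ as stated. On the measure-zero boundaries where some $\widehat w_j^\top x=0$, I would simply adopt the displayed formula as the (Clarke-type) definition of $g$, consistent with the convention $\mathbf 1_{\widehat w_j^\top x>0}$.

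Second, the exact metric identity. Set $v=x-z$, so by linearity $g(x)-g(z)=T_Uv=(D_U\widehat W v,\,(D_U\widehat a)v^\top)$. The two block norms are:
\begin{itemize}
\item $\|D_U\widehat W v\|_2^2=v^\top\widehat W^\top D_U^\top D_U\widehat W v=v^\top\widehat W^\top D_U\widehat W v$, because $D_U$ is a diagonal $0/1$ matrix and hence a symmetric idempotent projection;
\item $\|(D_U\widehat a)v^\top\|_F^2=\|D_U\widehat a\|_2^2\|v\|_2^2$, because the Frobenius norm of a rank-one matrix $uv^\top$ is $\|u\|_2\|v\|_2$.
\end{itemize}
Summing the two gives $v^\top S_Uv$. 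This is the only step needing care, and only in the use of $D_U^2=D_U$ and of the rank-one Frobenius formula.

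Third, the Lipschitz constant. $S_U$ is positive semidefinite, so $v^\top S_Uv\le\|S_U\|_{\op}\|v\|_2^2$, which gives $L_U^2=\|S_U\|_{\op}$. The constant is attained by the top eigenvector of $S_U$ if $U$ contains a segment in that direction; I read $L_U^2=\|S_U\|_{\op}$ as the valid Lipschitz constant, which is what the statement needs. For the final inequality, use the triangle inequality for the operator norm: $\|S_U\|_{\op}\le\|\widehat W^\top D_U\widehat W\|_{\op}+\|D_U\widehat a\|_2^2$. Then
\begin{itemize}
\item $\|\widehat W^\top D_U\widehat W\|_{\op}=\|D_U\widehat W\|_{\op}^2\le\|\widehat W\|_{\op}^2$, since $\|D_U\|_{\op}\le1$;
\item $\|D_U\widehat a\|_2\le\|\widehat a\|_2$.
\end{itemize}
None of the steps is a genuine obstacle; the only thing to watch is the boundary convention, which is handled as in the first step.
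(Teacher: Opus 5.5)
Your proposal is correct and follows the paper's proof essentially step for step. You freeze $D_U$ to read off the two linear Jacobian blocks, expand $\|g(x)-g(z)\|_{\mathcal H_J}^2$ using $D_U^2=D_U$ and the rank-one Frobenius identity, and bound $\|S_U\|_{\op}$ via $\|D_U\|_{\op}\le 1$; the paper writes this last step as the Loewner bounds $\widehat W^\top D_U\widehat W\preceq\widehat W^\top\widehat W$ and $\|D_U\widehat a\|_2\le\|\widehat a\|_2$, which is equivalent. Your explicit convention for boundary points where some $\widehat w_j^\top x=0$ is a small extra piece of care that the paper leaves implicit.
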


\begin{proof}
On an activation-stable set, $\sigma(\widehat W x)=D_U\widehat W x$, so
\[
\nabla_a f(x;\widehat\theta)=D_U\widehat W x.
\]
For the weight matrix,
\[
\nabla_W f(x;\widehat\theta)=(D_U\widehat a)x^\top.
\]
This proves the linearity of $g$. For $h:=x-z$,
\begin{align*}
\|g(x)-g(z)\|_{\mathcal H_J}^2
&=\|D_U\widehat W h\|_2^2+\|(D_U\widehat a)h^\top\|_F^2 \\
&=h^\top \widehat W^\top D_U\widehat W h+\|D_U\widehat a\|_2^2\,\|h\|_2^2 \\
&=h^\top S_U h.
\end{align*}
The Lipschitz constant follows immediately. Since $D_U\preceq I_q$, we also have $\widehat W^\top D_U\widehat W\preceq \widehat W^\top \widehat W$ and $\|D_U\widehat a\|_2\le \|\widehat a\|_2$.
\end{proof}

Thus, the main theorem needs only a finite cover of the occupied part of the manifold by regions on which the Jacobian map is regular. For one-hidden-layer ReLU networks, activation-stable cells provide such regions.

\begin{proposition}[Gate margins guarantee activation-stable cells]
\label{prop:relu_margin_section3}
For a center $c\in\R^d$, define the local gate margin
\[
\gamma(c)
:=
\min_{1\le j\le q:\,\|\widehat w_j\|_2>0}
\frac{|\widehat w_j^\top c|}{\|\widehat w_j\|_2},
\]
with the convention $\gamma(c)=+\infty$ if every row of $\widehat W$ is zero. If $0<\delta<\gamma(c)$, then the activation pattern is constant on the Euclidean ball $B(c,\delta)$. Consequently the Jacobian feature map is $L_c$-Lipschitz on $B(c,\delta)$ with
\[
L_c^2\le \|\widehat W\|_{\op}^2+\|\widehat a\|_2^2.
\]
\end{proposition}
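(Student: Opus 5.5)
The plan is to prove the two claims in order: first that the gate signs cannot flip inside the ball $B(c,\delta)$, and then that Proposition~\ref{prop:relu_local_metric_section3} applies to give the Lipschitz bound.

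\textbf{Constant activation pattern.} Fix $x\in B(c,\delta)$ and a neuron $j$.
\begin{itemize}
\item If $\widehat w_j=0$, then $\widehat w_j^\top x=0$ for every $x$. The gate $\mathbf 1\{\widehat w_j^\top x>0\}$ is identically $0$, so this neuron is constant on all of $\R^d$.
\item If $\widehat w_j\neq 0$, then $\delta<\gamma(c)$ gives $|\widehat w_j^\top c|\ge \gamma(c)\|\widehat w_j\|_2>\delta\|\widehat w_j\|_2\ge 0$, so $\widehat w_j^\top c\neq 0$.
\end{itemize}
In the second case, Cauchy--Schwarz gives
\[
|\widehat w_j^\top x-\widehat w_j^\top c|\le \|\widehat w_j\|_2\|x-c\|_2\le \delta\|\widehat w_j\|_2<|\widehat w_j^\top c|.
\]
Hence $\widehat w_j^\top x$ has the same strict sign as $\widehat w_j^\top c$. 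In particular it is nonzero, so the indicator agrees with its value at $c$.

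Running over all $j$, $\widehat D(x)=\widehat D(c)$ for every $x$ in the ball. Since the ball is convex, hence connected, it is an activation-stable region in the sense of Definition~\ref{def:activation-stable-region}. If every row of $\widehat W$ is zero, then $\gamma=+\infty$, every gate is $0$ everywhere, and the claim holds for any $\delta$.

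\textbf{Lipschitz bound.} The ball is now a set on which $\widehat D\equiv D_U$ with $D_U:=\widehat D(c)$, so Proposition~\ref{prop:relu_local_metric_section3} applies directly. It gives $g(x)=T_Ux$ on the ball and $\|g(x)-g(z)\|^2_{\mathcal H_J}=(x-z)^\top S_U(x-z)$, with $L_c^2=\|S_U\|_{\op}\le\|\widehat W\|_{\op}^2+\|\widehat a\|_2^2$.

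\textbf{Subtle point.} The formula for $g$ requires $\widehat w_j^\top x\neq 0$ whenever $\widehat w_j\neq 0$; for zero rows the convention $\mathbf 1\{0>0\}=0$ is used. I will check that the strict inequality $\delta<\gamma(c)$ ensures nonvanishing throughout the closed ball $B(c,\delta)$, which the chain of inequalities above does.

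This is the only place needing care, since the gradient formula is otherwise only defined off the hyperplanes. With that verified, the rest is bookkeeping.
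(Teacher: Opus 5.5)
Your proof is correct and follows the paper's argument. In both, Cauchy--Schwarz gives $|\widehat w_j^\top x-\widehat w_j^\top c|<|\widehat w_j^\top c|$ on the ball, so no gate flips and $\widehat D(x)=\widehat D(c)$, after which Proposition~\ref{prop:relu_local_metric_section3} gives the Lipschitz bound. Your explicit handling of zero rows, the all-zero case, and connectedness of the ball is slightly more careful than the paper but does not change the route.
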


\begin{proof}
Fix $x\in B(c,\delta)$. For every $j$ with $\widehat w_j\neq 0$,
\[
|\widehat w_j^\top x-\widehat w_j^\top c|
\le
\|\widehat w_j\|_2\|x-c\|_2
<
\|\widehat w_j\|_2\gamma(c)
\le
|\widehat w_j^\top c|.
\]
Hence $\widehat w_j^\top x$ has the same sign as $\widehat w_j^\top c$. Thus $\widehat D(x)=\widehat D(c)$ on $B(c,\delta)$, and Proposition~\ref{prop:relu_local_metric_section3} applies.
\end{proof}

\begin{corollary}[One-hidden-layer ReLU under a manifold hypothesis and an activation-stable cover]
\label{cor:relu_piecewise_section3}
Assume that \(x_1,\dots,x_n\in\mathcal M\), where \(\mathcal M\subset\R^d\) is a compact \(C^1\) embedded submanifold of dimension \(m\). Suppose the occupied part of \(\mathcal M\) is contained in
\[
U_1\cup\cdots\cup U_M,
\]
where each \(U_r\subset\mathcal M\) is activation-stable for the fitted network, with constant gate matrix \(D_r\). Let
\[
L_r^2:=\|\widehat W^\top D_r\widehat W\|_{\op}+\|D_r\widehat a\|_2^2,
\qquad
L_{\min}:=\min_{1\le r\le M}L_r,
\]
and assume \(L_{\min}>0\). Then for every \(t>0\) and every \(\varepsilon>0\),
\[
\dclass(\widehat G,t)
\le
\min\!\left\{n,
\sum_{r=1}^M
\max\!\left\{1,C_{\mathcal M}\left(\frac{L_r}{\varepsilon}\right)^m\right\}
\right\}
+\frac{\varepsilon^2}{t}.
\]
In particular, for \(0<\varepsilon\le D_{\mathcal M}L_{\min}\),
\[
\dclass(\widehat G,t)
\le
\min\!\left\{n,
C_{\mathcal M}\,\varepsilon^{-m}\sum_{r=1}^M L_r^m
\right\}
+\frac{\varepsilon^2}{t}.
\]
Consequently, if \(\rho<\lambda\), then for every \(0<\varepsilon\le D_{\mathcal M}L_{\min}\),
\[
\deff(\widehat\theta;\lambda)
\le
\min\!\left\{n,
C_{\mathcal M}\,\varepsilon^{-m}\sum_{r=1}^M L_r^m
\right\}
+\frac{\varepsilon^2}{\lambda-\rho}.
\]
In particular, whenever the optimizer lies in this radius range,
\[
\deff(\widehat\theta;\lambda)
\lesssim_m
\left(C_{\mathcal M}\sum_{r=1}^M L_r^m\right)^{2/(m+2)}(\lambda-\rho)^{-m/(m+2)}.
\]
The simpler bound
\[
\deff(\widehat\theta;\lambda)
\le
M C_{\mathcal M}\left(\frac{L_{\max}}{\varepsilon}\right)^m+
\frac{\varepsilon^2}{\lambda-\rho},
\qquad
L_{\max}:=\max_{1\le r\le M} L_r,
\]
is also valid in the same bounded-radius regime.
\end{corollary}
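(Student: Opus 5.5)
This corollary is an instantiation of Theorem~\ref{thm:piecewise_regular_section3} with $\Psi=g=\nabla_\theta f(\cdot;\widehat\theta)$, taking values in $\mathcal H_J$, and the pieces $U_r$ given by the activation-stable sets. So the plan is to verify its hypotheses and then read off each claimed inequality.

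\textbf{Step 1: Lipschitz constants on each piece.} On each $U_r$ the gate matrix is constant, $\widehat D(x)\equiv D_r$. Proposition~\ref{prop:relu_local_metric_section3} then gives that $g$ is linear on $U_r$, with
\[
\|g(x)-g(z)\|^2=(x-z)^\top S_{U_r}(x-z).
\]
Hence $g$ is Lipschitz on $U_r$ with constant $\|S_{U_r}\|_{\op}^{1/2}$. Because the scalar term is a multiple of the identity,
\[
\|S_{U_r}\|_{\op}=\|\widehat W^\top D_r\widehat W\|_{\op}+\|D_r\widehat a\|_2^2=L_r^2 .
\]
So $g$ is $L_r$-Lipschitz on $U_r$, and $L_r>0$ because $L_{\min}>0$.

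There is one technicality: the displayed formula for $g$ requires $\widehat w_j^\top x\ne 0$. I would read ``activation-stable with gate $D_r$'' as saying that the formula $g(x)=T_{U_r}x$ holds on $U_r$, for example by the convention $\sigma'(0)=0$ that matches $\mathbf 1\{\cdot>0\}$. If needed, I would note that the training inputs are assumed to be points of differentiability.

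\textbf{Step 2: the bounds on $\dclass(\widehat G,t)$.} The occupied part of $\mathcal M$, and hence $\{x_i\}$, is contained in $\bigcup_r U_r$. Theorem~\ref{thm:piecewise_regular_section3} with $\widehat C_\Psi=\widehat G$ therefore yields the bound with the sum of $\max\{1,C_{\mathcal M}(L_r/\varepsilon)^m\}$ for every $\varepsilon>0$. In the regime $\varepsilon\le D_{\mathcal M}L_{\min}$ every $\delta_r=\varepsilon/L_r\le D_{\mathcal M}$, which gives the simplified form. The $\deff$ statement follows by combining this with Theorem~\ref{thm:nonlin_to_lin_section3} at $t=\lambda-\rho$, using $\deff\le\dclass(\widehat G,\lambda-\rho)$. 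The optimized rate comes from minimizing $A\varepsilon^{-m}+\varepsilon^2/t$ with $A=C_{\mathcal M}\sum_r L_r^m$, using the closed form with $C_m$ recorded in the roadmap, valid when $\varepsilon_\star\le D_{\mathcal M}L_{\min}$.

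\textbf{Step 3: the simpler bound.} Use $L_r\le L_{\max}$ to get $\sum_r L_r^m\le M L_{\max}^m$, and drop the $\min\{n,\cdot\}$, which is allowed since the minimum is at most either argument.

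\textbf{Main obstacle.} The only real subtlety is Step 1: matching the paper's $L_r$ with the exact operator norm of $S_{U_r}$ (a direct identity, not merely an upper bound), and ensuring the Lipschitz estimate holds on all of $U_r$, not only on a ball. Linearity of $g$ on $U_r$ requires only a constant gate, not convexity, because $g(x)-g(z)=T_{U_r}(x-z)$ holds for any two points sharing the same $D_r$. Everything else is bookkeeping from earlier results.
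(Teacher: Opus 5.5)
Your proposal is correct and matches the paper's proof, which instantiates Theorem~\ref{thm:piecewise_regular_section3} with $\Psi=g$ and uses Proposition~\ref{prop:relu_local_metric_section3} to show that $g$ is $L_r$-Lipschitz on each activation-stable $U_r$; your observation that $\|S_{U_r}\|_{\op}=L_r^2$ holds with equality is exactly what makes the constants agree. The remaining steps you write out are the routine bookkeeping the paper's one-line proof leaves implicit: the passage to $\deff$ at margin $\lambda-\rho$ via Theorem~\ref{thm:nonlin_to_lin_section3}, the scalar optimization over $\varepsilon$, and the $L_{\max}$ bound via $\sum_r L_r^m\le ML_{\max}^m$.
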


\begin{proof}
Apply Theorem~\ref{thm:piecewise_regular_section3} with \(\Psi=g\) and use Proposition~\ref{prop:relu_local_metric_section3} on each activation-stable set \(U_r\).
\end{proof}

\begin{corollary}[Optimized effective-dimension manifold bound]
\label{cor:optimized_nonlin_section3}
Assume the hypotheses of Corollary~\ref{cor:relu_piecewise_section3}, and assume \(\rho<\lambda\).  Let
\[
A:=C_{\mathcal M}\sum_{r=1}^M L_r^m,
\qquad
\varepsilon_\star:=\left(\frac{mA(\lambda-\rho)}{2}\right)^{1/(m+2)}.
\]
If \(\varepsilon_\star\le D_{\mathcal M}L_{\min}\), then
\[
\deff(\widehat\theta;\lambda)
\le
\min\!\left\{p,n,
C_m A^{2/(m+2)}(\lambda-\rho)^{-m/(m+2)}
\right\},
\]
where
\[
C_m:=\left(1+\frac m2\right)\left(\frac2m\right)^{m/(m+2)}.
\]
If additionally \(S_r\preceq \eta I_d\) on every occupied activation-stable piece, then \(L_r\le\sqrt\eta\) and
\[
\deff(\widehat\theta;\lambda)
\le
\min\!\left\{p,n,
C_m\left(C_{\mathcal M}M\eta^{m/2}\right)^{2/(m+2)}
(\lambda-\rho)^{-m/(m+2)}
\right\}
\]
whenever the corresponding optimizer lies in the bounded-radius regime.
\end{corollary}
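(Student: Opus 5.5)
The plan is to obtain the first display by chaining Corollary~\ref{cor:relu_piecewise_section3} with an explicit scalar optimization in $\varepsilon$, and then to add the trivial rank ceilings. Under the hypotheses of Corollary~\ref{cor:relu_piecewise_section3} and $\rho<\lambda$, that corollary already gives, for every $0<\varepsilon\le D_{\mathcal M}L_{\min}$,
\[
\deff(\widehat\theta;\lambda)\le \min\{n, A\varepsilon^{-m}\}+\frac{\varepsilon^2}{\lambda-\rho}\le A\varepsilon^{-m}+\frac{\varepsilon^2}{t},\qquad t:=\lambda-\rho .
\]
Since $\varepsilon_\star\le D_{\mathcal M}L_{\min}$ by assumption, we may plug in $\varepsilon=\varepsilon_\star$.

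The next step is the one-dimensional calculus. Set $\phi(\varepsilon)=A\varepsilon^{-m}+\varepsilon^2/t$. Then $\phi'(\varepsilon)=-mA\varepsilon^{-m-1}+2\varepsilon/t$, which vanishes exactly at $\varepsilon_\star^{m+2}=mAt/2$. At this point $\varepsilon_\star^2/t=\tfrac{m}{2}A\varepsilon_\star^{-m}$, so
\[
\phi(\varepsilon_\star)=\Bigl(1+\tfrac m2\Bigr)A\varepsilon_\star^{-m}.
\]
Substituting $\varepsilon_\star^{-m}=(mAt/2)^{-m/(m+2)}$ gives
\[
A\cdot A^{-m/(m+2)}(m/2)^{-m/(m+2)}t^{-m/(m+2)}=A^{2/(m+2)}(2/m)^{m/(m+2)}t^{-m/(m+2)},
\]
which is exactly $C_mA^{2/(m+2)}t^{-m/(m+2)}$. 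Only the value at $\varepsilon_\star$ is needed, not global optimality.

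For the ceilings $p$ and $n$, note that $\widehat G=\frac1n\sum_i g_ig_i^\top$ has rank at most $\min\{p,n\}$. Write $\widehat H_\lambda^{-1}\widehat G=\widehat H_\lambda^{-1/2}(\widehat H_\lambda^{-1/2}\widehat G\widehat H_\lambda^{-1/2})\widehat H_\lambda^{1/2}$. Its eigenvalues are then those of $X=\widehat H_\lambda^{-1/2}\widehat G\widehat H_\lambda^{-1/2}\succeq0$, and $X$ has the same rank as $\widehat G$. Theorem~\ref{thm:nonlin_to_lin_section3} gives $\widehat H_\lambda\succeq\widehat G+tI\succeq\widehat G$. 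By Lemma~\ref{lem:metric_mono_section3}, or directly from $X\preceq \widehat G^{1/2}(\widehat G+tI)^{-1}\widehat G^{1/2}\preceq I$, every eigenvalue of $X$ lies in $[0,1]$. Hence $\deff=\tr(X^2)\le\rank X\le\min\{p,n\}$. Taking the minimum of the three bounds proves the first claim.

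For the second claim, if $S_r\preceq\eta I_d$, then $L_r^2=\|S_r\|_{\op}\le\eta$ by Proposition~\ref{prop:relu_local_metric_section3}. Hence $A\le C_{\mathcal M}M\eta^{m/2}$. The bound $C_mA^{2/(m+2)}t^{-m/(m+2)}$ is increasing in $A$, so we may replace $A$ by $\tilde A:=C_{\mathcal M}M\eta^{m/2}$.

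The main obstacle is the radius constraint. The optimizer for $\tilde A$ is $\tilde\varepsilon_\star=(m\tilde A t/2)^{1/(m+2)}$, and it is this radius that must satisfy the bounded-radius condition; this is the meaning of ``the corresponding optimizer lies in the bounded-radius regime''. I would handle it by using the piecewise bound $\min\{n,A\varepsilon^{-m}\}+\varepsilon^2/t\le\tilde A\varepsilon^{-m}+\varepsilon^2/t$, which holds at $\varepsilon=\tilde\varepsilon_\star$, and then repeating the same calculus with $\tilde A$ in place of $A$.
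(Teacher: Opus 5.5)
Your proposal is correct and follows the paper's proof. Like the paper, you evaluate the bound of Corollary~\ref{cor:relu_piecewise_section3} at the stationary point $\varepsilon_\star$, add the rank ceiling $\rank(\widehat G)\le\min\{n,p\}$, and use $L_r^2=\|S_r\|_{\op}\le\eta$ for the second claim; your treatment of the radius condition in the $\eta$-version is more careful than the paper's. One small nit: $X\preceq \widehat G^{1/2}(\widehat G+tI)^{-1}\widehat G^{1/2}$ is not a valid Loewner comparison in general, since $X$ and $\widehat G^{1/2}\widehat H_{\lambda}^{-1}\widehat G^{1/2}$ only share eigenvalues, but $X\preceq I$ follows directly from $\widehat G\preceq \widehat H_{\lambda}$, so your ceiling argument stands.
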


\begin{proof}
The first bound is Corollary~\ref{cor:relu_piecewise_section3} together with the finite-rank ceiling \(\rank(\widehat G)\le \min\{n,p\}\).  The optimized expression is obtained by minimizing \(A\varepsilon^{-m}+\varepsilon^2/(\lambda-\rho)\), whose stationary point is \(\varepsilon_\star=(mA(\lambda-\rho)/2)^{1/(m+2)}\).  If \(S_r\preceq \eta I_d\), then \(L_r^2=\|S_r\|_{\op}\le\eta\), and hence \(\sum_rL_r^m\le M\eta^{m/2}\).
\end{proof}

This is the main extent of the argument. The manifold hypothesis gives the input covering law. A regular feature map transfers that cover to feature space. In one-hidden-layer ReLU networks, an activation-stable cover is one way to get this regularity, but the core theorem only needs the regularity itself.

\subsection{Bounding the residual norm $\rho$ for one-hidden-layer ReLU}

For one-hidden-layer ReLU networks the residual-curvature correction can be bounded directly, without any homogeneity argument.

\begin{proposition}[Residual control for one-hidden-layer ReLU]
\label{prop:rho_relu_section3}
Consider the one-hidden-layer ReLU model above and assume that the fitted network is differentiable on the  sample, i.e.\ $\widehat w_j^\top x_i\neq 0$ for every $i$ and $j$. Then
\[
\rho
=
\left\|\frac1n\sum_{i=1}^n r_i\nabla_\theta^2 f(x_i;\widehat\theta)\right\|_{\op}
\le
\sqrt{2\widehat L(\widehat\theta)}\left(\frac1n\sum_{i=1}^n \|x_i\|_2^2\right)^{1/2}.
\]
In particular, if $x_1,\dots,x_n\in\mathcal M$ and $\sup_{x\in\mathcal M}\|x\|_2\le R_{\mathcal M}$, then
\[
\rho\le R_{\mathcal M}\sqrt{2\widehat L(\widehat\theta)}.
\]
\end{proposition}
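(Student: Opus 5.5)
The plan is to compute the parameter Hessian $\nabla_\theta^2 f(x_i;\widehat\theta)$ explicitly and observe that it is purely off-diagonal. From there, $\rho$ reduces to the operator norm of a single bilinear coupling between $a$ and $W$, which Cauchy--Schwarz bounds.

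\emph{Step 1: the Hessian has only the cross block.} Fix $i$. Since $\widehat w_j^\top x_i\neq 0$ for all $j$, the activation pattern is constant in a neighbourhood of $\widehat W$, with the sample point $x_i$ held fixed. On that neighbourhood
\[
f(x_i;a,W)=a^\top \widehat D(x_i)Wx_i,
\]
with $\widehat D(x_i)$ frozen. This is exactly the linearity used in Proposition~\ref{prop:relu_local_metric_section3}, now viewed in the parameters. The map is linear in $a$ for fixed $W$ and linear in $W$ for fixed $a$. Hence the $aa$ and $WW$ blocks of $\nabla_\theta^2 f$ vanish. The quadratic form in a direction $(u,U)\in\mathcal H_J$ is
\[
(u,U)^\top\nabla_\theta^2 f(x_i;\widehat\theta)(u,U)=2\,u^\top \widehat D(x_i)Ux_i .
\]

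\emph{Step 2: bound the quadratic form.} Multiply by the residuals and average. The symmetric matrix $\widehat\Delta$ has quadratic form
\[
Q(u,U)=\frac{2}{n}\sum_{i=1}^n r_i\,u^\top \widehat D(x_i)Ux_i .
\]
Since $\widehat\Delta$ is symmetric, $\rho=\sup_{\|u\|^2+\|U\|_F^2=1}|Q(u,U)|$. Each term satisfies
\[
|u^\top \widehat D(x_i)Ux_i|\le\|u\|_2\|Ux_i\|_2\le\|u\|_2\|U\|_F\|x_i\|_2,
\]
using $\|\widehat D(x_i)\|_{\op}\le1$. Combining this with $2\|u\|_2\|U\|_F\le\|u\|_2^2+\|U\|_F^2=1$ gives
\[
\rho\le\frac1n\sum_{i=1}^n|r_i|\,\|x_i\|_2 .
\]

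\emph{Step 3: Cauchy--Schwarz over $i$.} Apply Cauchy--Schwarz across the sample index and use $\frac1n\sum_i r_i^2=2\widehat L(\widehat\theta)$. This yields the stated bound. The manifold consequence follows by bounding $\|x_i\|_2\le R_{\mathcal M}$.

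The only delicate point is Step 1: justifying that the Hessian exists and has zero diagonal blocks. This needs the gate pattern to be locally constant in the parameter $W$ at the fixed inputs $x_i$, not merely locally constant in $x$. That is exactly what the strict condition $\widehat w_j^\top x_i\neq0$ gives, by continuity of $W\mapsto w_j^\top x_i$. A possible $q^{-1/2}$ normalization of $f$ would only multiply the Hessian by $q^{-1/2}\le1$, so it would strengthen the bound rather than break it.
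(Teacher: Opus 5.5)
Your proposal is correct and follows essentially the same route as the paper: the Hessian at each $x_i$ is purely the $a$--$W$ cross block, controlled by $\|x_i\|_2$ through $\|\widehat D(x_i)\|_{\op}\le 1$, which gives $\rho\le\frac1n\sum_i|r_i|\,\|x_i\|_2$, and Cauchy--Schwarz over $i$ finishes. The only cosmetic difference is that you bound the quadratic form of the averaged $\widehat\Delta$ directly, whereas the paper bounds each $\|\nabla_\theta^2 f(x_i;\widehat\theta)\|_{\op}=\|B_i\|_{\op}\le\|x_i\|_2$ separately and then uses the triangle inequality; your justification that the gates are locally constant in $W$ (so the Hessian exists) is a detail the paper leaves implicit.
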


\begin{proof}
Write $H_i:=\nabla_\theta^2 f(x_i;\widehat\theta)$. For one-hidden-layer ReLU, the only nonzero second derivatives are the mixed $a$--$W$ blocks. More precisely, with
\[
D_i:=\widehat D(x_i),
\]
we can write
\[
H_i=
\begin{bmatrix}
0 & B_i \\
B_i^\top & 0
\end{bmatrix},
\qquad
B_i(U)=D_i U x_i,
\]
where $U\in\R^{q\times d}$. For every $U$,
\[
\|B_i(U)\|_2
\le
\|D_i\|_{\op}\,\|U x_i\|_2
\le
\|U\|_F\,\|x_i\|_2,
\]
so $\|B_i\|_{\op}\le \|x_i\|_2$ and therefore $\|H_i\|_{\op}=\|B_i\|_{\op}\le \|x_i\|_2$. Hence
\[
\rho
\le
\frac1n\sum_{i=1}^n |r_i|\,\|H_i\|_{\op}
\le
\frac1n\sum_{i=1}^n |r_i|\,\|x_i\|_2.
\]
Applying Cauchy--Schwarz and using $\frac1n\sum_i r_i^2=2\widehat L(\widehat\theta)$ gives the claim.
\end{proof}

In the small-training-error regime, Proposition~\ref{prop:rho_relu_section3} makes $\rho$ small. Theorem~\ref{thm:nonlin_to_lin_section3} then shows that $\deff$ is governed by the fitted Jacobian Gram with the mild replacement $\lambda\mapsto\lambda-\rho$.

\subsection{Scale and interpretation of the local Lipschitz constants}

On an activation-stable piece \(U_r\), the Jacobian feature map is linear.  If \(D_r\) denotes the fixed diagonal activation-gate matrix on \(U_r\), then the squared Lipschitz constant of the Jacobian feature map is
\[
L_r^2
=
\left\|\widehat W^\top D_r\widehat W\right\|_{\op}
+
\left\|D_r\widehat a\right\|_2^2 .
\]
Thus \(L_r\) measures how quickly the fitted parameter-gradient features change as the input moves inside the same activation pattern.  Small values of \(L_r\) mean that the Jacobian features are nearly constant along that part of the data manifold, which directly reduces the cover size entering the effective-dimension bound.

The normalization is important.  In the unnormalized parametrization
\[
f(x;\theta)=a^\top\sigma(Wx),
\]
the displayed quantity typically grows with the width \(q\), because it sums over active hidden units.  Under the usual width-normalized parametrization
\[
f(x;\theta)=q^{-1/2}a^\top\sigma(Wx),
\]
the corresponding local metric is
\[
S_r^{\mathrm{norm}}
=
\frac{1}{q}\widehat W^\top D_r\widehat W
+
\frac{1}{q}\left\|D_r\widehat a\right\|_2^2 I_d .
\]
This order-one scaling makes the contraction level \(\eta\) interpretable: if \(S_r^{\mathrm{norm}}\preceq\eta I_d\) on the occupied activation-stable pieces, then the local Lipschitz constants satisfy \(L_r\le\sqrt\eta\), and the optimized effective-dimension bound becomes
\[
\deff(\widehat\theta;\lambda)
\le
\min\!\left\{p,n,
C_m\left(C_{\mathcal M}M\eta^{m/2}\right)^{2/(m+2)}
(\lambda-\rho)^{-m/(m+2)}
\right\}
\]
in the nonsaturated bounded-radius regime.  The residual-curvature correction enters only through \(\lambda-\rho\), so small training residuals make the bound stable with respect to the residual-curvature correction.

If the pieces have comparable local Lipschitz constants, \(L_r\approx L\), then the geometric factor in the optimized bound is
\[
\left(C_{\mathcal M}M L^m\right)^{2/(m+2)}.
\]
The factor \(L^m\) is important: contraction along an \(m\)-dimensional data manifold is amplified by the intrinsic dimension before being softened by the optimization exponent \(2/(m+2)\).  This is how trained Jacobian geometry can make \(\deff\) much smaller than the ambient parameter count \(p\) or the sample size \(n\).

\begin{proposition}[$L_2$ regularization gives a sufficient local-metric bound]
\label{prop:l2_regularization_contraction_section3}
Consider the normalized one-hidden-layer ReLU model
\[
f(x;\theta)=q^{-1/2}a^\top \sigma(Wx),
\qquad
W\in\mathbb R^{q\times d},\quad a\in\mathbb R^q .
\]
Let \(\widehat\theta=(\widehat a,\widehat W)\) be a minimizer of the regularized empirical risk
\[
\widehat J_\alpha(\theta)
:=
\widehat L(\theta)
+
\frac{\alpha}{2q}
\left(
\|W\|_F^2+\|a\|_2^2
\right),
\qquad
\alpha>0,
\]
where
\[
\widehat L(\theta)
=
\frac{1}{2n}\sum_{i=1}^n
\left(f(x_i;\theta)-y_i\right)^2 .
\]
On an activation-stable region \(U_r\) with gate matrix \(D_r\), define the normalized local Jacobian metric
\[
S_r^{\mathrm{norm}}
:=
\frac{1}{q}\widehat W^\top D_r\widehat W
+
\frac{1}{q}\|D_r\widehat a\|_2^2 I_d .
\]
Then
\[
S_r^{\mathrm{norm}}
\preceq
\frac{1}{q}
\left(
\|\widehat W\|_F^2+\|\widehat a\|_2^2
\right)I_d
\preceq
\frac{2\widehat L(0)}{\alpha}I_d .
\]
Consequently, the bound
\[
S_r^{\mathrm{norm}}\preceq \eta I_d
\qquad
\text{for every occupied activation-stable region }U_r
\]
holds with
\[
\eta=\frac{2\widehat L(0)}{\alpha}.
\]
If \(\rho<\lambda\), this gives
\[
\deff(\widehat\theta;\lambda)
\le
\min\!\left\{p,n,
C_m\left(C_{\mathcal M}M
\left(\frac{2\widehat L(0)}{\alpha}\right)^{m/2}\right)^{2/(m+2)}
(\lambda-\rho)^{-m/(m+2)}
\right\}.
\]
\end{proposition}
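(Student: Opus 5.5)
The plan is to compare the objective value at \(\widehat\theta\) with its value at \(\theta=0\), and then read off the local metric bound from the norm bound this gives.

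\textbf{Step 1 (first Loewner inequality).} On a region \(U_r\) the gate matrix satisfies \(0\preceq D_r\preceq I_q\). Hence
\[
\widehat W^\top D_r\widehat W\preceq \widehat W^\top\widehat W\preceq \|\widehat W\|_{\op}^2 I_d\preceq \|\widehat W\|_F^2 I_d ,
\]
and also \(\|D_r\widehat a\|_2^2\le\|\widehat a\|_2^2\). Adding the two pieces and multiplying by \(q^{-1}\) gives
\[
S_r^{\mathrm{norm}}\preceq q^{-1}\bigl(\|\widehat W\|_F^2+\|\widehat a\|_2^2\bigr)I_d .
\]
This is the same computation as in Proposition~\ref{prop:relu_local_metric_section3}, now carrying the normalization factor.

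\textbf{Step 2 (norm bound from optimality).} Since \(\widehat\theta\) is a minimizer of \(\widehat J_\alpha\), we have \(\widehat J_\alpha(\widehat\theta)\le\widehat J_\alpha(0)\). The zero network predicts \(0\) everywhere and carries no penalty, so \(\widehat J_\alpha(0)=\widehat L(0)=\frac1{2n}\sum_i y_i^2\). Dropping the nonnegative term \(\widehat L(\widehat\theta)\) then gives
\[
\frac{\alpha}{2q}\bigl(\|\widehat W\|_F^2+\|\widehat a\|_2^2\bigr)\le \widehat L(0),
\]
that is,
\[
q^{-1}\bigl(\|\widehat W\|_F^2+\|\widehat a\|_2^2\bigr)\le 2\widehat L(0)/\alpha .
\]
Combined with Step~1, this yields the second Loewner inequality and hence \(\eta=2\widehat L(0)/\alpha\).

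This step is the only point needing care. The comparison with \(\theta=0\) requires \(\widehat\theta\) to be a \emph{global} minimizer of \(\widehat J_\alpha\), which is how the statement is phrased. For a merely local minimizer the bound would need a descent or sublevel-set argument, for instance that the optimizer never increases \(\widehat J_\alpha\) above its initial value, which is then at most \(\widehat L(0)\) plus a small initialization penalty. I would flag this but not pursue it, since the hypothesis as stated is global minimization.

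\textbf{Step 3 (effective-dimension bound).} The Lipschitz constants come from the normalized metric. Repeating Proposition~\ref{prop:relu_local_metric_section3} with the \(q^{-1/2}\) factor gives \(L_r^2=\|S_r^{\mathrm{norm}}\|_{\op}\le\eta\), so \(\sum_{r}L_r^m\le M\eta^{m/2}\). I would then invoke Corollary~\ref{cor:optimized_nonlin_section3} with this choice of \(\eta\), assuming \(\rho<\lambda\) and the bounded-radius condition on the optimizer, as implicit in that corollary. This gives the displayed \(\deff\) bound directly. The \(\min\{p,n,\cdot\}\) ceiling is inherited from that corollary via \(\rank(\widehat G)\le\min\{n,p\}\).
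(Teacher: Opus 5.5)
Your proposal is correct and follows the paper's proof essentially step for step: you bound $S_r^{\mathrm{norm}}$ in Loewner order using $0\preceq D_r\preceq I_q$ and $\|\widehat W\|_{\op}\le\|\widehat W\|_F$, you compare $\widehat J_\alpha(\widehat\theta)\le\widehat J_\alpha(0)=\widehat L(0)$ and drop $\widehat L(\widehat\theta)\ge 0$, and you then invoke Corollary~\ref{cor:optimized_nonlin_section3} with $\eta=2\widehat L(0)/\alpha$. Your three caveats are accurate and make explicit what the paper's proof leaves tacit: the comparison with $\theta=0$ needs $\widehat\theta$ to be a global minimizer; the bounded-radius condition on $\varepsilon_\star$ is inherited from that corollary; and $L_r^2=\|S_r^{\mathrm{norm}}\|_{\op}$ requires carrying the $q^{-1/2}$ factor through Proposition~\ref{prop:relu_local_metric_section3}.
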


\begin{proof}
On an activation-stable region, the normalized Jacobian metric is
\[
S_r^{\mathrm{norm}}
=
\frac{1}{q}\widehat W^\top D_r\widehat W
+
\frac{1}{q}\|D_r\widehat a\|_2^2 I_d .
\]
Since \(0\preceq D_r\preceq I_q\),
\[
\widehat W^\top D_r\widehat W
\preceq
\widehat W^\top \widehat W
\preceq
\|\widehat W\|_{\op}^2 I_d
\preceq
\|\widehat W\|_F^2 I_d,
\]
and
\[
\|D_r\widehat a\|_2^2
\le
\|\widehat a\|_2^2 .
\]
Therefore
\[
S_r^{\mathrm{norm}}
\preceq
\frac{1}{q}
\left(
\|\widehat W\|_F^2+\|\widehat a\|_2^2
\right)I_d .
\]
Because \(\widehat\theta\) minimizes \(\widehat J_\alpha\),
\[
\widehat J_\alpha(\widehat\theta)
\le
\widehat J_\alpha(0)
=
\widehat L(0),
\]
where the zero parameter has zero regularization penalty. Since \(\widehat L(\widehat\theta)\ge 0\), this implies
\[
\frac{\alpha}{2q}
\left(
\|\widehat W\|_F^2+\|\widehat a\|_2^2
\right)
\le
\widehat L(0).
\]
Equivalently,
\[
\frac{1}{q}
\left(
\|\widehat W\|_F^2+\|\widehat a\|_2^2
\right)
\le
\frac{2\widehat L(0)}{\alpha}.
\]
Combining the two terms gives the stated bound on \(S_r^{\mathrm{norm}}\).  The final effective-dimension bound follows from Corollary~\ref{cor:optimized_nonlin_section3} with \(\eta=2\widehat L(0)/\alpha\).
\end{proof}

\clearpage

\section{Experimental Details}
\label{app:experimental_details}
\subsection{Compute details}
All experiments were run on a MacBook (14-inch, 2024) with an M4 Pro chip and 24GB memory. Running all experiments takes approximately 6 hours on this system.

\subsection{Synthetic data generation}

All synthetic manifold experiments use product tori with intrinsic dimension \(m\ge 4\).  Latents are sampled independently as \(\theta_i\sim {\rm Unif}([0,2\pi]^m)\), and embedded by
\[
z(\theta)=\frac1{\sqrt m}
(\cos\theta_1,\sin\theta_1,\ldots,\cos\theta_m,\sin\theta_m),
\qquad x=Qz(\theta),
\]
where \(Q\in\mathbb R^{d\times 2m}\) is a random orthonormal embedding. Rows are normalized after embedding.  The rich target used for the initialization-compression and cover experiments is
\[
f_\star(\theta)=\frac1{\sqrt m}\sum_{j=1}^m
\bigl(\sin\theta_j+0.3\cos(2\theta_j)\bigr).
\]
The residual-curvature validation and Generalization bound synthetic experiments use a seed-dependent mixed Fourier target with up to 128 integer frequencies in \(\{-3,\ldots,3\}^m\), with moderate label noise.  The activation-region frequency sweep uses
\[
f_{\star,k}(\theta)=\frac12\sum_{j=1}^4\sin(k\theta_j),
\qquad k\in\{1,2,4,8,16\}.
\]
Figure~\ref{fig:app_targets} shows the actual target families used in the experiments.

The clustered-sphere experiment in Figure~\ref{fig:main_deff_validation_cluster}B samples \(K\) equal-size clusters in \(\mathbb R^{60}\), with points drawn near random unit-sphere centers and then normalized.  The displayed \(K\)-sweep fixes the within-cluster spread at \(\sigma=0.1\) and uses a rank-one per-cluster target, so increasing \(K\) increases the number of target directions the trained model must represent.

\begin{figure*}[t]
  \centering
  \includegraphics[width=\textwidth]{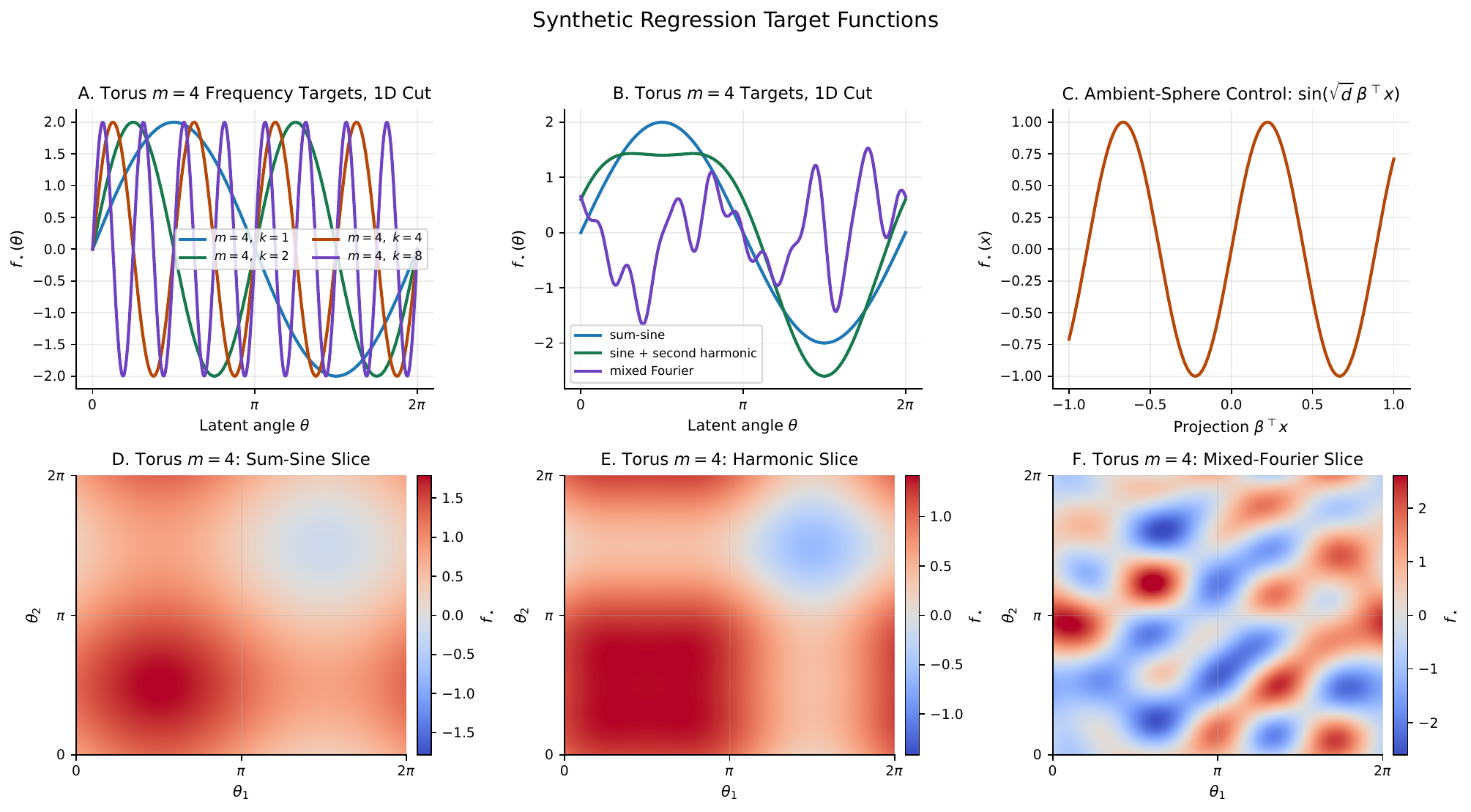}
  \caption{\textbf{Synthetic regression targets.}  The manifold experiments avoid intrinsic dimensions lower than 4 where results can quickly become trivial; the targets shown here use \(m=4\) one-dimensional cuts or \(m=4\) two-dimensional slices.}
  \label{fig:app_targets}
\end{figure*}

\begin{figure*}[t]
  \centering
  \includegraphics[width=\textwidth]{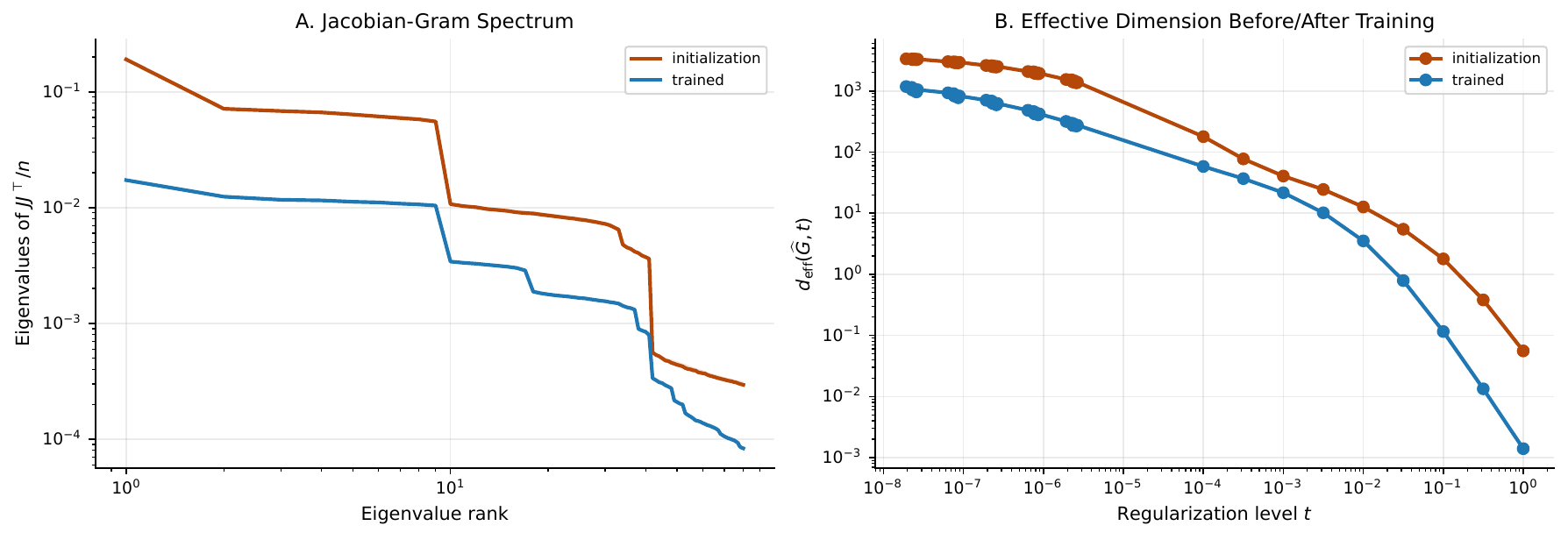}
  \caption{\textbf{Training compresses the relevant Jacobian geometry.} On the same \(m=4\) manifold regression task, the trained Jacobian Gram has a much faster spectral decay than the initialization Gram.  At \(t\approx 10^{-2}\), the median effective dimension drops from \(12.6\) at initialization to \(3.52\) after training, even though \(p=51{,}712\) and \(n=4096\).  This directly supports the theory's use of the trained Jacobian rather than a fixed-NTK measure.  Solid curves show medians over ten seeds; thin dotted boundaries and the inset table show the 10--90\% seed range.}
  \label{fig:app_init_training}
\end{figure*}

\subsection{Network architectures and training details}
All neural-network experiments use a width-normalized one-hidden-layer ReLU network without biases,
\[
f_\theta(x)=\frac1{\sqrt q}\sum_{r=1}^q a_r\,\sigma(w_r^\top x),
\qquad \sigma(u)=\max\{u,0\}.
\]
Thus \(p=q(d+1)\).  Training minimizes regularized full-batch square loss,
\[
\frac{1}{2n}\sum_{i=1}^n(f_\theta(x_i)-y_i)^2+
\frac{\alpha}{2q}\bigl(\|W\|_F^2+\|a\|_2^2\bigr),
\]
using Adam.  The retained stochastic experiments use the seed counts reported in Table~\ref{tab:app_seed_audit}.  Unless a caption states otherwise, line and band summaries are medians with interquartile ranges over seed-level runs.

\begin{table*}[t]
\centering
\scriptsize
\resizebox{\textwidth}{!}{%
\begin{tabular}{lllllll}
\toprule
Figure(s) & Dataset/process & \(n_{\rm train}\) & \(n_{\rm test}\) & \(d\) & \(q\) & Optimization \\
\midrule
Fig.~\ref{fig:main_deff_validation_cluster}A & noisy manifold, \(m\in\{4,8\}\), mixed Fourier & 256 & 4000 & 60, 80 & 256 & \(\alpha=0.01\), 2500 steps, lr 0.01 \\
Fig.~\ref{fig:main_deff_validation_cluster}B & clustered sphere, fixed \(\sigma=0.1\), \(K\)-sweep & 2000 & 2000 & 60 & 2500 & \(\alpha=0.001\), 4000 steps, cosine lr 0.01 \\
Fig.~\ref{fig:app_init_training} & manifold, \(m=4\), rich target & 4096 & 20000 & 100 & 512 & \(\alpha=0.02\), 6000 steps, lr 0.01 \\
Fig.~\ref{fig:app_cover_bound} & manifold, \(m=4\), rich target & 4096 & 20000 & 100 & 512 & \(\alpha=0.02\), 6000 steps, lr 0.01 \\
Fig.~\ref{fig:main_activation_regions} & manifold, \(m=4\), frequency sweep & 2048 & 12000 & 100 & 512 & \(\alpha=0.004\), 5000 steps, lr 0.01 \\
Fig.~\ref{fig:main_supportive_bounds} synthetic & noisy manifold, \(m\in\{4,8\}\), mixed Fourier & 512, 1024, 2048 & 4000 & 64, 96 & 192 & \(\alpha=0.012\), 900 steps, lr 0.01 \\
Fig.~\ref{fig:main_supportive_bounds} real & California Housing; Wine Quality & 2048 & 4096; 2048 & 8; 12 & 512 & \(\alpha=0.04\), 1500 steps, lr 0.005 \\
Figs.~\ref{fig:app_california}, \ref{fig:app_wine}, \ref{fig:app_superconductivity} & real-data geometry diagnostics & 4096; 2048 & 8192; 2048 & 8, 12, 81 & 512 & dataset-specific settings below \\
\bottomrule
\end{tabular}}
\caption{Dataset sizes, model widths, ambient dimensions, and optimization settings for the retained experiments.  Seed counts are given in Table~\ref{tab:app_seed_audit}; the real-data diagnostics use \(\alpha=0.05\), 4000 steps, lr 0.006 for California Housing; \(\alpha=0.04\), 1500 steps, lr 0.005 for Wine Quality; and \(\alpha=0.04\), 5000 steps, lr 0.006 for UCI Superconductivity.}
\label{tab:app_hyperparameters}
\end{table*}

\begin{table}[t]
\centering
\scriptsize
\begin{tabular}{lrrl}
\toprule
Experiment file & Rows & Seeds & Intrinsic dimensions used \\
\midrule
Exp0 initialization & 280 & 10 & \(m=4\) task \\
Exp1 cover & 140 & 10 & \(4\) \\
Exp4 activation regions & 50 & 10 & \(4\) \\
Exp6 California & 5 & 5 & PCA95 \(=6\) \\
Exp7 Superconductivity & 5 & 5 & PCA95 \(=17\) \\
Exp7 Wine Quality & 3 & 3 & PCA95 \(=9\) \\
Exp13 residual-curvature validation & 240 & 5 & \(4,8\) \\
Exp14 cluster difficulty & 84 & 4 & clustered sphere \\
Generalization-bound chart & 30 & 5 & \(4,8\) and real data \\
\bottomrule
\end{tabular}
\caption{Summary of retained experiment outputs.  Rows count the stored seed-level metric rows, including analysis-margin sweeps where applicable.}
\label{tab:app_seed_audit}
\end{table}

\subsection{Effective-dimension and cover computations}
For a trained two-layer ReLU network, the Jacobian-feature Gram matrix is computed exactly.  Writing \(\phi_i=(\sigma(w_1^\top x_i),\ldots,\sigma(w_q^\top x_i))\), the code uses
\[
H_{ij}=\langle \nabla_\theta f_\theta(x_i),\nabla_\theta f_\theta(x_j)\rangle
=\frac1q\left[
\phi_i^\top\phi_j
 +(x_i^\top x_j)\sum_{r=1}^q a_r^2
 {\bf 1}\{w_r^\top x_i>0\}{\bf 1}\{w_r^\top x_j>0\}
\right].
\]

The empirical \(\varepsilon\)-cover is computed by farthest-first greedy \(K\)-center clustering in either input space or Jacobian-feature space.  In Jacobian-feature space, squared distances are obtained from the Gram matrix:
\[
\|J_i-J_j\|^2=H_{ii}+H_{jj}-2H_{ij}.
\]
For each recorded \(K\), the cover radius \(\varepsilon_K\) gives
\[
B_{\rm cover}(K,t)=K+\varepsilon_K^2/t.
\]

\begin{figure}[t]
  \centering
  \includegraphics[width=0.74\linewidth]{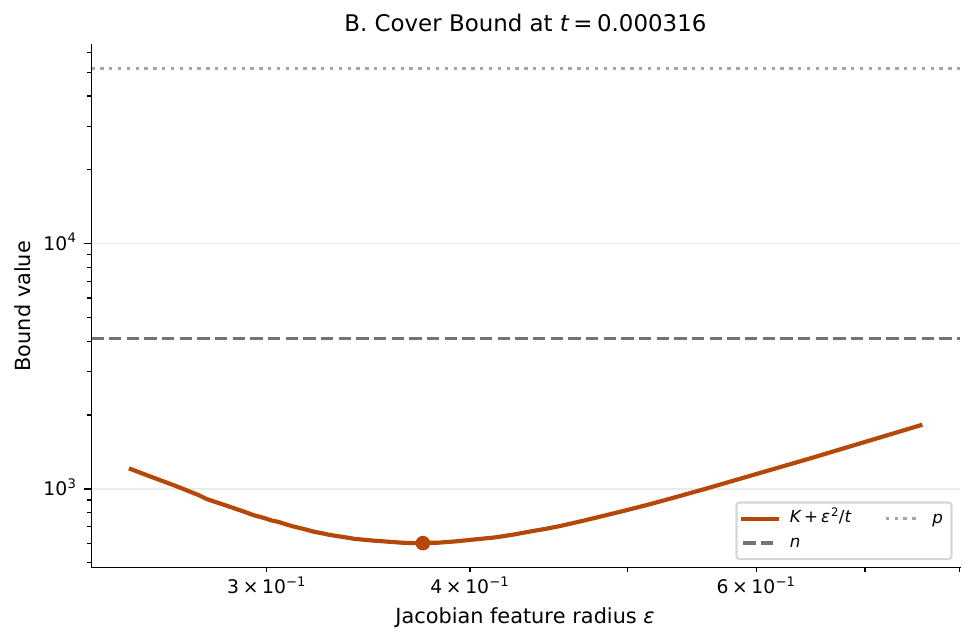}
  \caption{\textbf{Empirical Jacobian-feature cover bound.}  At \(t=3.16\cdot 10^{-4}\), the best cover value is \(598\), well below \(n=4096\) and \(p=51{,}712\), while the measured \(d_{\rm eff}\) is \(36.7\). Here we see the characteristic U-shaped behaviour of the cover bound.  As the radius \(\varepsilon\) increases, the covering number \(K(\varepsilon)\) decreases, while the approximation term \(\varepsilon^2/t\) increases.  The turning point reflects the tradeoff between using fewer Jacobian-feature cover centers and paying a larger within-ball error penalty.}
  \label{fig:app_cover_bound}
\end{figure}

\subsection{Residual-curvature and cluster-difficulty diagnostics}
Figure~\ref{fig:main_deff_validation_cluster}A evaluates the nonlinear effective dimension directly using Hutchinson estimates for \(\operatorname{tr}((H_t^{-1}G)^2)\), with
\[
H_t=\widehat G+\Delta+tI,\qquad
\Delta=\frac1n\sum_{i=1}^n r_i\nabla^2_\theta f_{\widehat\theta}(x_i).
\]
The comparison curve uses the spectral proxy \(d_{\rm lin}(\widehat G,t-\rho)\), where \(\rho=\|\Delta\|_{\rm op}\).  The retained plot aggregates six noisy manifold configurations with \(m\in\{4,8\}\), noise levels \(\sigma\in\{0.05,0.10,0.15\}\), and five seeds.  Figure~\ref{fig:main_deff_validation_cluster}B uses the clustered-sphere \(K\)-sweep at fixed \(\sigma=0.1\). 

\subsection{Activation-region diagnostics}
For the one-hidden-layer ReLU experiment in Figure~\ref{fig:main_activation_regions}, an occupied activation region is a unique binary activation pattern \(({\bf 1}\{w_r^\top x>0\})_{r=1}^q\) that contains at least one training point.  We also report the number of sampled cells \(M_{\rm full}\) appearing on a large held-out set.  This is deliberately not a count of all combinatorially possible ReLU regions; the theory only needs the data-occupied pieces.  The visualization panel uses a two-dimensional biased-ReLU partition solely for display, while the quantitative panel uses the \(m=4\) manifold frequency sweep described in Table~\ref{tab:app_hyperparameters}.

\subsection{Real-data experiments}
We use three retained regression diagnostics.  California Housing \cite{kelleypacecali} is loaded from \texttt{sklearn.datasets.fetch\_california\_housing}; the target is median house value and the ambient dimension is \(d=8\).  Wine Quality \cite{cortez2009winequality} combines the UCI red and white wine tables, using 11 physicochemical covariates plus a red/white indicator to predict integer quality scores.  UCI Superconductivity \cite{hamidieh2018superconductivty} uses 81 elemental-composition covariates to predict critical temperature.  Inputs and targets are standardized using training-set statistics.  The same Jacobian Gram, \(d_{\rm eff}\), cover, partition, and generalization-bound computations are applied.

\begin{figure}[H]
  \centering
  \includegraphics[width=0.82\textwidth]{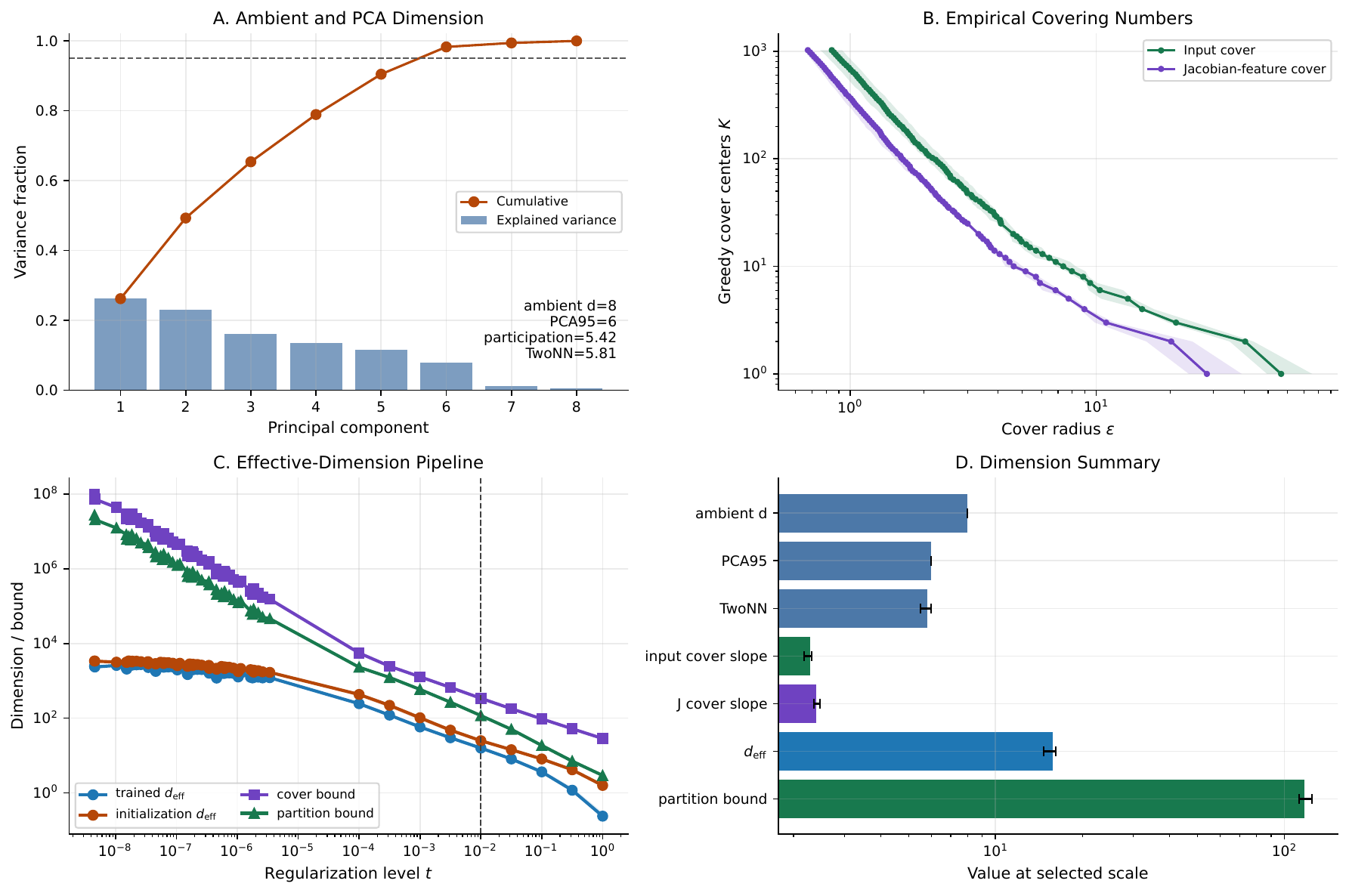}
  \caption{\textbf{California Housing geometry diagnostics.}  The panels report trained and initialization spectra, effective dimensions, cover curves, and stability-bound diagnostics for the eight-dimensional California Housing benchmark.}
  \label{fig:app_california}
\end{figure}

\begin{figure}[H]
  \centering
  \includegraphics[width=0.82\textwidth]{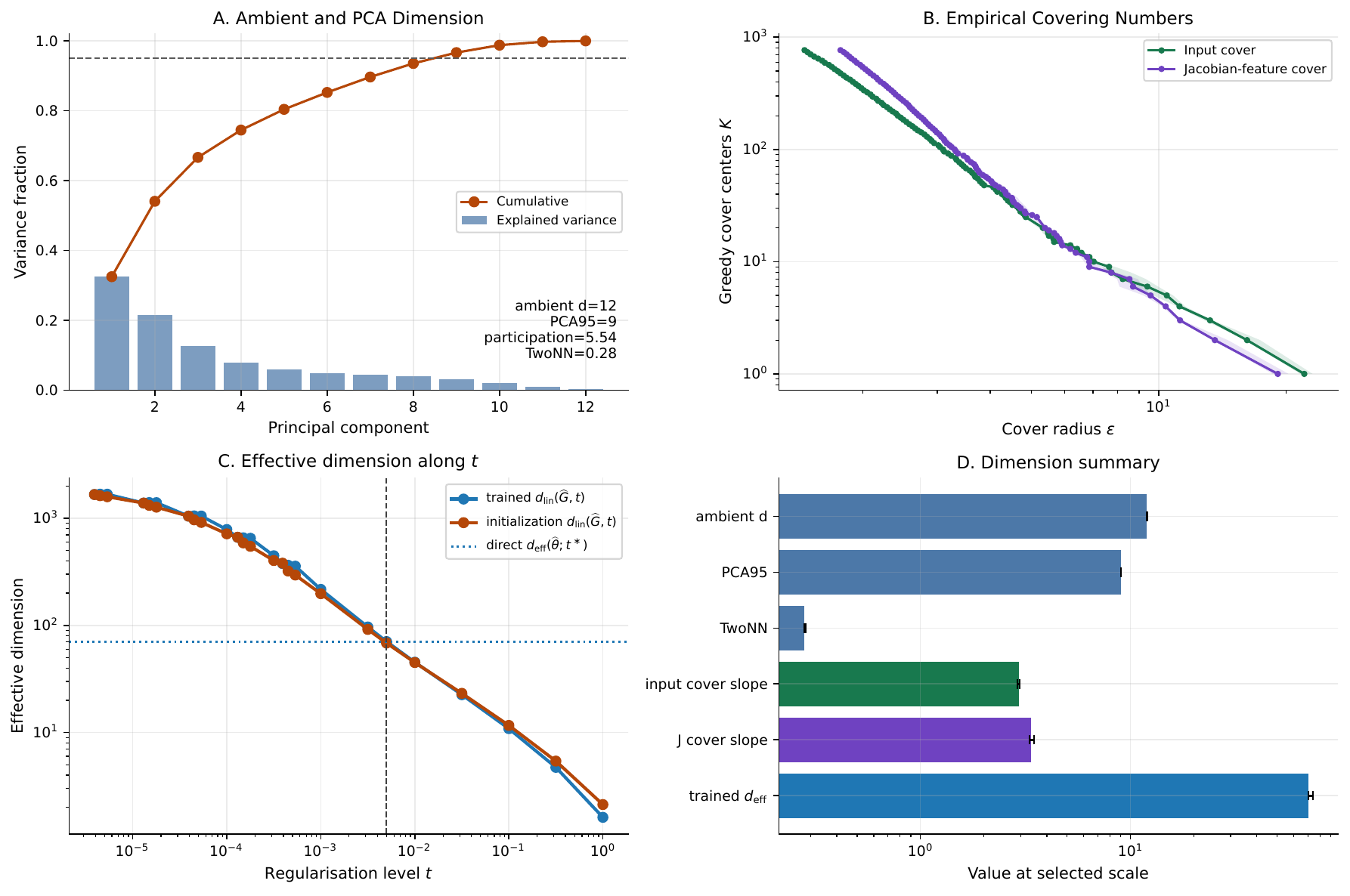}
  \caption{\textbf{UCI Wine Quality geometry diagnostics.}  The retained Wine Quality benchmark combines red and white wines, and the panels report spectra, effective dimensions, cover curves, and stability-bound diagnostics for the trained Jacobian geometry.}
  \label{fig:app_wine}
\end{figure}

\begin{figure}[H]
  \centering
  \includegraphics[width=0.82\textwidth]{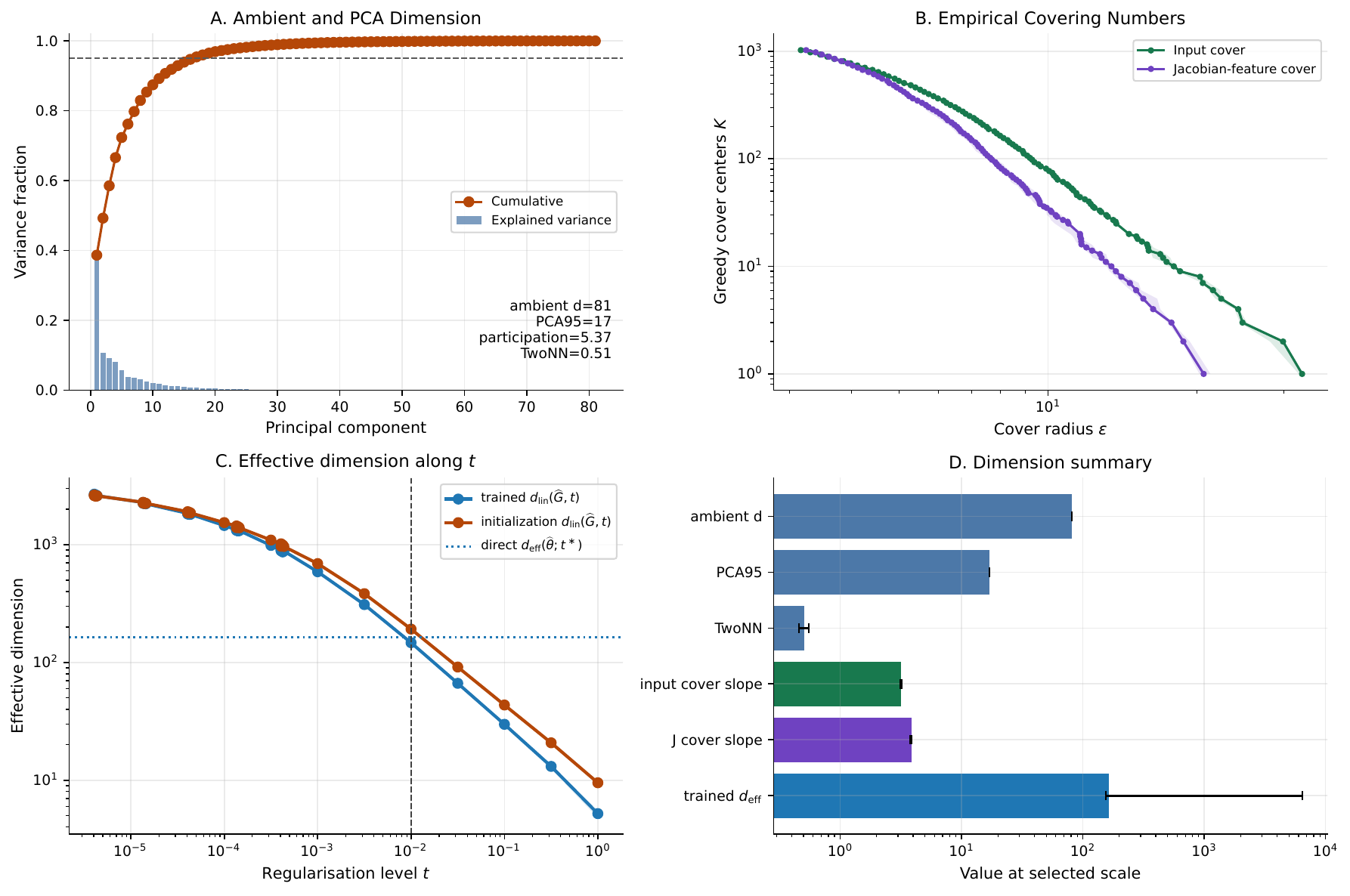}
  \caption{\textbf{UCI Superconductivity geometry diagnostics.}  The panels report spectra, effective dimensions, cover curves, and stability-bound diagnostics for the 81-dimensional superconductivity benchmark.}
  \label{fig:app_superconductivity}
\end{figure}

\begin{figure}[H]
  \centering
  \includegraphics[width=\textwidth]{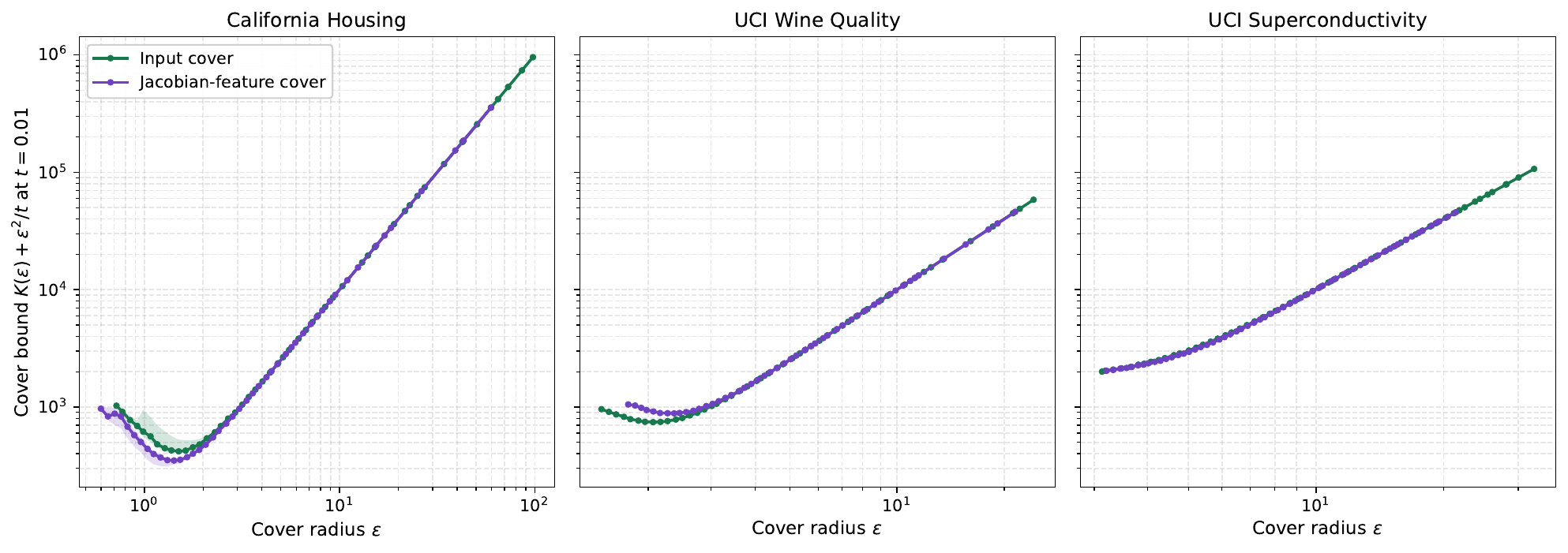}
  \caption{\textbf{Real-data cover-bound curves.}  For each real-data benchmark, the plot compares input-space covers and trained Jacobian-feature covers through the cover bound \(K(\varepsilon)+\varepsilon^2/t\).  The U-shaped curves make explicit the tradeoff in the theory: small radii require many centers, while large radii pay a larger within-ball approximation penalty.}
  \label{fig:app_real_covering_numbers}
\end{figure}

\begin{table}[t]
\centering
\scriptsize
\begin{tabular}{lrrrrrr}
\toprule
Dataset & \(d\) & PCA95 & TwoNN & \(d_{\rm eff}\) & Gap & \(d_{\rm eff}\) bound \\
\midrule
California Housing & 8 & 6 & 5.61 & 15.5 & 0.0219 & 0.0653 \\
Wine Quality & 12 & 9 & 0.281 & 70.7 & 0.152 & 0.287 \\
UCI Superconductivity & 81 & 17 & 0.507 & 164 & 0.0323 & 0.204 \\
\bottomrule
\end{tabular}
\caption{Real-data effective dimensions and held-out generalization bounds. Values are medians over the retained seeds for each diagnostic run.}
\label{tab:app_real_numbers}
\end{table}

\subsection{Generalization-bound numbers}
For Figure~\ref{fig:main_supportive_bounds}, observed generalization is the held-out train-test half-MSE gap.  The displayed Section~\ref{sec:stability_setup} upper bound is
\[
\sqrt{\frac{8\widehat L C}{n}}+\frac{2C}{n},
\]
where \(\widehat L\) is the training half-MSE and \(C=d_{\rm eff}\) is the trained effective dimension.  Table~\ref{tab:app_supportive_numbers} reports the medians underlying the main bar chart.

\begin{table}[t]
\centering
\scriptsize
\begin{tabular}{lrrrr}
\toprule
Configuration & Observed gap & \(d_{\rm eff}\) bound & Median \(d_{\rm eff}\) & Bound/gap \\
\midrule
Synth \(m=4,n=512\) & 0.553 & 1.182 & 176 & 2.21 \\
Synth \(m=4,n=1024\) & 0.340 & 0.904 & 191 & 2.67 \\
Synth \(m=8,n=1024\) & 0.612 & 1.008 & 321 & 1.65 \\
Synth \(m=8,n=2048\) & 0.472 & 0.768 & 345 & 1.63 \\
California Housing & 0.0470 & 0.137 & 30.1 & 2.92 \\
Wine Quality & 0.156 & 0.291 & 76.8 & 1.87 \\
\bottomrule
\end{tabular}
\caption{Median bound values used in Figure~\ref{fig:main_supportive_bounds}.  Each row aggregates five retained seeds; the plotted error bars show the 10--90\% seed interval.}
\label{tab:app_supportive_numbers}
\end{table}

\clearpage
\section*{NeurIPS Paper Checklist}

\begin{enumerate}

\item {\bf Claims}
    \item[] Question: Do the main claims made in the abstract and introduction accurately reflect the paper's contributions and scope?
    \item[] Answer: \answerYes{}.
    \item[] Justification: The abstract and introduction state the main theoretical contribution, namely a stability-based generalization bound for ridge-regularized nonlinear least squares in terms of an effective dimension at the trained model. They also state the geometric consequences through partitions, covers, manifold structure, and one-hidden-layer ReLU activation-stable regions, as well as the scope of the supporting experiments. The assumptions and limitations are stated in the main text and discussed again in the discussion section.
    \item[] Guidelines:
    \begin{itemize}
        \item The answer \answerNA{} means that the abstract and introduction do not include the claims made in the paper.
        \item The abstract and/or introduction should clearly state the claims made, including the contributions made in the paper and important assumptions and limitations. A \answerNo{} or \answerNA{} answer to this question will not be perceived well by the reviewers. 
        \item The claims made should match theoretical and experimental results, and reflect how much the results can be expected to generalize to other settings. 
        \item It is fine to include aspirational goals as motivation as long as it is clear that these goals are not attained by the paper. 
    \end{itemize}

\item {\bf Limitations}
    \item[] Question: Does the paper discuss the limitations of the work performed by the authors?
    \item[] Answer: \answerYes{}.
    \item[] Justification: The discussion section states the main limitations.
    \item[] Guidelines:
    \begin{itemize}
        \item The answer \answerNA{} means that the paper has no limitation while the answer \answerNo{} means that the paper has limitations, but those are not discussed in the paper. 
        \item The authors are encouraged to create a separate ``Limitations'' section in their paper.
        \item The paper should point out any strong assumptions and how robust the results are to violations of these assumptions (e.g., independence assumptions, noiseless settings, model well-specification, asymptotic approximations only holding locally). The authors should reflect on how these assumptions might be violated in practice and what the implications would be.
        \item The authors should reflect on the scope of the claims made, e.g., if the approach was only tested on a few datasets or with a few runs. In general, empirical results often depend on implicit assumptions, which should be articulated.
        \item The authors should reflect on the factors that influence the performance of the approach. For example, a facial recognition algorithm may perform poorly when image resolution is low or images are taken in low lighting. Or a speech-to-text system might not be used reliably to provide closed captions for online lectures because it fails to handle technical jargon.
        \item The authors should discuss the computational efficiency of the proposed algorithms and how they scale with dataset size.
        \item If applicable, the authors should discuss possible limitations of their approach to address problems of privacy and fairness.
        \item While the authors might fear that complete honesty about limitations might be used by reviewers as grounds for rejection, a worse outcome might be that reviewers discover limitations that aren't acknowledged in the paper. The authors should use their best judgment and recognize that individual actions in favor of transparency play an important role in developing norms that preserve the integrity of the community. Reviewers will be specifically instructed to not penalize honesty concerning limitations.
    \end{itemize}

\item {\bf Theory assumptions and proofs}
    \item[] Question: For each theoretical result, does the paper provide the full set of assumptions and a complete (and correct) proof?
    \item[] Answer: \answerYes{}.
    \item[] Justification: Each main theoretical result states its assumptions in the theorem or proposition statement. The main text gives the key definitions and proof ideas, while the appendix provides the full proofs. These proofs are all correct to the best of my knowledge.
    \item[] Guidelines:
    \begin{itemize}
        \item The answer \answerNA{} means that the paper does not include theoretical results. 
        \item All the theorems, formulas, and proofs in the paper should be numbered and cross-referenced.
        \item All assumptions should be clearly stated or referenced in the statement of any theorems.
        \item The proofs can either appear in the main paper or the supplemental material, but if they appear in the supplemental material, the authors are encouraged to provide a short proof sketch to provide intuition. 
        \item Inversely, any informal proof provided in the core of the paper should be complemented by formal proofs provided in appendix or supplemental material.
        \item Theorems and Lemmas that the proof relies upon should be properly referenced. 
    \end{itemize}

   \item {\bf Experimental result reproducibility}
    \item[] Question: Does the paper fully disclose all the information needed to reproduce the main experimental results of the paper to the extent that it affects the main claims and/or conclusions of the paper?
    \item[] Answer: \answerYes{}.
    \item[] Justification: The experimental section and appendix describe the synthetic data-generating processes, real datasets, train/test splits, network architectures, optimization procedure, regularization levels, random seeds, and the procedures used to compute Jacobian Gram matrices, effective dimensions, covering numbers, partition bounds, and activation-region diagnostics.
    \item[] Guidelines:
    \begin{itemize}
        \item The answer \answerNA{} means that the paper does not include experiments.
        \item If the paper includes experiments, a \answerNo{} answer to this question will not be perceived well by the reviewers: Making the paper reproducible is important, regardless of whether the code and data are provided or not.
        \item If the contribution is a dataset and\slash or model, the authors should describe the steps taken to make their results reproducible or verifiable. 
        \item Depending on the contribution, reproducibility can be accomplished in various ways. For example, if the contribution is a novel architecture, describing the architecture fully might suffice, or if the contribution is a specific model and empirical evaluation, it may be necessary to either make it possible for others to replicate the model with the same dataset, or provide access to the model. In general. releasing code and data is often one good way to accomplish this, but reproducibility can also be provided via detailed instructions for how to replicate the results, access to a hosted model (e.g., in the case of a large language model), releasing of a model checkpoint, or other means that are appropriate to the research performed.
        \item While NeurIPS does not require releasing code, the conference does require all submissions to provide some reasonable avenue for reproducibility, which may depend on the nature of the contribution. For example
        \begin{enumerate}
            \item If the contribution is primarily a new algorithm, the paper should make it clear how to reproduce that algorithm.
            \item If the contribution is primarily a new model architecture, the paper should describe the architecture clearly and fully.
            \item If the contribution is a new model (e.g., a large language model), then there should either be a way to access this model for reproducing the results or a way to reproduce the model (e.g., with an open-source dataset or instructions for how to construct the dataset).
            \item We recognize that reproducibility may be tricky in some cases, in which case authors are welcome to describe the particular way they provide for reproducibility. In the case of closed-source models, it may be that access to the model is limited in some way (e.g., to registered users), but it should be possible for other researchers to have some path to reproducing or verifying the results.
        \end{enumerate}
    \end{itemize}

\item {\bf Open access to data and code}
    \item[] Question: Does the paper provide open access to the data and code, with sufficient instructions to faithfully reproduce the main experimental results, as described in supplemental material?
    \item[] Answer: \answerYes{}.
    \item[] Justification: The submission includes anonymized code and instructions for reproducing the synthetic and real-data experiments. The real-data experiments use publicly available regression datasets, and the synthetic experiments are generated from procedures fully specified in the appendix.
    \item[] Guidelines:
    \begin{itemize}
        \item The answer \answerNA{} means that paper does not include experiments requiring code.
        \item Please see the NeurIPS code and data submission guidelines (\url{https://neurips.cc/public/guides/CodeSubmissionPolicy}) for more details.
        \item While we encourage the release of code and data, we understand that this might not be possible, so \answerNo{} is an acceptable answer. Papers cannot be rejected simply for not including code, unless this is central to the contribution (e.g., for a new open-source benchmark).
        \item The instructions should contain the exact command and environment needed to run to reproduce the results. See the NeurIPS code and data submission guidelines (\url{https://neurips.cc/public/guides/CodeSubmissionPolicy}) for more details.
        \item The authors should provide instructions on data access and preparation, including how to access the raw data, preprocessed data, intermediate data, and generated data, etc.
        \item The authors should provide scripts to reproduce all experimental results for the new proposed method and baselines. If only a subset of experiments are reproducible, they should state which ones are omitted from the script and why.
        \item At submission time, to preserve anonymity, the authors should release anonymized versions (if applicable).
        \item Providing as much information as possible in supplemental material (appended to the paper) is recommended, but including URLs to data and code is permitted.
    \end{itemize}

\item {\bf Experimental setting/details}
    \item[] Question: Does the paper specify all the training and test details (e.g., data splits, hyperparameters, how they were chosen, type of optimizer) necessary to understand the results?
    \item[] Answer: \answerYes{}.
    \item[] Justification: The experimental section and appendix specify the datasets, data splits, model architectures, optimization method, regularization parameters, training procedures, random seeds, and the numerical procedures used to compute the effective-dimension and covering quantities.
    \item[] Guidelines:
    \begin{itemize}
        \item The answer \answerNA{} means that the paper does not include experiments.
        \item The experimental setting should be presented in the core of the paper to a level of detail that is necessary to appreciate the results and make sense of them.
        \item The full details can be provided either with the code, in appendix, or as supplemental material.
    \end{itemize}

\item {\bf Experiment statistical significance}
    \item[] Question: Does the paper report error bars suitably and correctly defined or other appropriate information about the statistical significance of the experiments?
    \item[] Answer: \answerYes{}.
    \item[] Justification: The experiments are repeated over multiple random seeds. The paper reports median results and, where appropriate, seed-level variability through error bars or appendix tables. The appendix states which source of randomness is varied, including initialization, sampling, and train/test splits, and explains how the reported variability measures are computed.
    \item[] Guidelines:
    \begin{itemize}
        \item The answer \answerNA{} means that the paper does not include experiments.
        \item The authors should answer \answerYes{} if the results are accompanied by error bars, confidence intervals, or statistical significance tests, at least for the experiments that support the main claims of the paper.
        \item The factors of variability that the error bars are capturing should be clearly stated (for example, train/test split, initialization, random drawing of some parameter, or overall run with given experimental conditions).
        \item The method for calculating the error bars should be explained (closed form formula, call to a library function, bootstrap, etc.)
        \item The assumptions made should be given (e.g., Normally distributed errors).
        \item It should be clear whether the error bar is the standard deviation or the standard error of the mean.
        \item It is OK to report 1-sigma error bars, but one should state it. The authors should preferably report a 2-sigma error bar than state that they have a 96\% CI, if the hypothesis of Normality of errors is not verified.
        \item For asymmetric distributions, the authors should be careful not to show in tables or figures symmetric error bars that would yield results that are out of range (e.g., negative error rates).
        \item If error bars are reported in tables or plots, the authors should explain in the text how they were calculated and reference the corresponding figures or tables in the text.
    \end{itemize}

\item {\bf Experiments compute resources}
    \item[] Question: For each experiment, does the paper provide sufficient information on the computer resources (type of compute workers, memory, time of execution) needed to reproduce the experiments?
    \item[] Answer: \answerYes{}.
    \item[] Justification: The appendix reports the hardware used for the experiments, including the compute worker type, memory, approximate runtime of the main experiments, and total compute required to reproduce the reported results.
    \item[] Guidelines:
    \begin{itemize}
        \item The answer \answerNA{} means that the paper does not include experiments.
        \item The paper should indicate the type of compute workers CPU or GPU, internal cluster, or cloud provider, including relevant memory and storage.
        \item The paper should provide the amount of compute required for each of the individual experimental runs as well as estimate the total compute. 
        \item The paper should disclose whether the full research project required more compute than the experiments reported in the paper (e.g., preliminary or failed experiments that didn't make it into the paper). 
    \end{itemize}
    
\item {\bf Code of ethics}
    \item[] Question: Does the research conducted in the paper conform, in every respect, with the NeurIPS Code of Ethics \url{https://neurips.cc/public/EthicsGuidelines}?
    \item[] Answer: \answerYes{}.
    \item[] Justification: The work is theoretical research on generalization bounds for regression models. It does not involve human subjects, private data, surveillance, scraped personal data, or deployment of high-risk systems. The experiments use synthetic data and public regression datasets.
    \item[] Guidelines:
    \begin{itemize}
        \item The answer \answerNA{} means that the authors have not reviewed the NeurIPS Code of Ethics.
        \item If the authors answer \answerNo, they should explain the special circumstances that require a deviation from the Code of Ethics.
        \item The authors should make sure to preserve anonymity (e.g., if there is a special consideration due to laws or regulations in their jurisdiction).
    \end{itemize}

\item {\bf Broader impacts}
    \item[] Question: Does the paper discuss both potential positive societal impacts and negative societal impacts of the work performed?
    \item[] Answer: \answerNA{}.
    \item[] Justification: The paper is foundational theoretical work on generalization in nonlinear least-squares models. It does not introduce a deployed system, dataset, model, or application with a direct societal impact pathway.
    \item[] Guidelines:
    \begin{itemize}
        \item The answer \answerNA{} means that there is no societal impact of the work performed.
        \item If the authors answer \answerNA{} or \answerNo, they should explain why their work has no societal impact or why the paper does not address societal impact.
        \item Examples of negative societal impacts include potential malicious or unintended uses (e.g., disinformation, generating fake profiles, surveillance), fairness considerations (e.g., deployment of technologies that could make decisions that unfairly impact specific groups), privacy considerations, and security considerations.
        \item The conference expects that many papers will be foundational research and not tied to particular applications, let alone deployments. However, if there is a direct path to any negative applications, the authors should point it out. For example, it is legitimate to point out that an improvement in the quality of generative models could be used to generate Deepfakes for disinformation. On the other hand, it is not needed to point out that a generic algorithm for optimizing neural networks could enable people to train models that generate Deepfakes faster.
        \item The authors should consider possible harms that could arise when the technology is being used as intended and functioning correctly, harms that could arise when the technology is being used as intended but gives incorrect results, and harms following from (intentional or unintentional) misuse of the technology.
        \item If there are negative societal impacts, the authors could also discuss possible mitigation strategies (e.g., gated release of models, providing defenses in addition to attacks, mechanisms for monitoring misuse, mechanisms to monitor how a system learns from feedback over time, improving the efficiency and accessibility of ML).
    \end{itemize}
    
\item {\bf Safeguards}
    \item[] Question: Does the paper describe safeguards that have been put in place for responsible release of data or models that have a high risk for misuse (e.g., pre-trained language models, image generators, or scraped datasets)?
    \item[] Answer: \answerNA{}.
    \item[] Justification: The paper does not release a high-risk model, scraped dataset, generative model, or other asset requiring misuse safeguards.
    \item[] Guidelines:
    \begin{itemize}
        \item The answer \answerNA{} means that the paper poses no such risks.
        \item Released models that have a high risk for misuse or dual-use should be released with necessary safeguards to allow for controlled use of the model, for example by requiring that users adhere to usage guidelines or restrictions to access the model or implementing safety filters. 
        \item Datasets that have been scraped from the Internet could pose safety risks. The authors should describe how they avoided releasing unsafe images.
        \item We recognize that providing effective safeguards is challenging, and many papers do not require this, but we encourage authors to take this into account and make a best faith effort.
    \end{itemize}

\item {\bf Licenses for existing assets}
    \item[] Question: Are the creators or original owners of assets (e.g., code, data, models), used in the paper, properly credited and are the license and terms of use explicitly mentioned and properly respected?
    \item[] Answer: \answerYes{}.
    \item[] Justification: The paper cites the sources of the public regression datasets used in the experiments. Wine Quality and UCI Superconductivity are distributed under CC BY 4.0 via the UCI Machine Learning Repository [Wine Quality DOI: 10.24432/C56S3T; Superconductivity DOI: 10.24432/C53P47]. California Housing [Pace and Barry, 1997] is derived from the 1990 U.S. Census (public domain) and is accessed via sklearn.datasets; the underlying data is not distributed under an explicit license but the original publication is cited.
    \item[] Guidelines:
    \begin{itemize}
        \item The answer \answerNA{} means that the paper does not use existing assets.
        \item The authors should cite the original paper that produced the code package or dataset.
        \item The authors should state which version of the asset is used and, if possible, include a URL.
        \item The name of the license (e.g., CC-BY 4.0) should be included for each asset.
        \item For scraped data from a particular source (e.g., website), the copyright and terms of service of that source should be provided.
        \item If assets are released, the license, copyright information, and terms of use in the package should be provided. For popular datasets, \url{paperswithcode.com/datasets} has curated licenses for some datasets. Their licensing guide can help determine the license of a dataset.
        \item For existing datasets that are re-packaged, both the original license and the license of the derived asset (if it has changed) should be provided.
        \item If this information is not available online, the authors are encouraged to reach out to the asset's creators.
    \end{itemize}

\item {\bf New assets}
    \item[] Question: Are new assets introduced in the paper well documented and is the documentation provided alongside the assets?
    \item[] Answer: \answerNA{}.
    \item[] Justification: No new assets, just synthetic data and experimental code.
    \item[] Guidelines:
    \begin{itemize}
        \item The answer \answerNA{} means that the paper does not release new assets.
        \item Researchers should communicate the details of the dataset\slash code\slash model as part of their submissions via structured templates. This includes details about training, license, limitations, etc. 
        \item The paper should discuss whether and how consent was obtained from people whose asset is used.
        \item At submission time, remember to anonymize your assets (if applicable). You can either create an anonymized URL or include an anonymized zip file.
    \end{itemize}

\item {\bf Crowdsourcing and research with human subjects}
    \item[] Question: For crowdsourcing experiments and research with human subjects, does the paper include the full text of instructions given to participants and screenshots, if applicable, as well as details about compensation (if any)? 
    \item[] Answer: \answerNA{}.
    \item[] Justification: The paper does not involve crowdsourcing, user studies, or research with human subjects.
    \item[] Guidelines:
    \begin{itemize}
        \item The answer \answerNA{} means that the paper does not involve crowdsourcing nor research with human subjects.
        \item Including this information in the supplemental material is fine, but if the main contribution of the paper involves human subjects, then as much detail as possible should be included in the main paper. 
        \item According to the NeurIPS Code of Ethics, workers involved in data collection, curation, or other labor should be paid at least the minimum wage in the country of the data collector. 
    \end{itemize}

\item {\bf Institutional review board (IRB) approvals or equivalent for research with human subjects}
    \item[] Question: Does the paper describe potential risks incurred by study participants, whether such risks were disclosed to the subjects, and whether Institutional Review Board (IRB) approvals (or an equivalent approval/review based on the requirements of your country or institution) were obtained?
    \item[] Answer: \answerNA{}.
    \item[] Justification: The paper does not involve crowdsourcing, user studies, or research with human subjects, so IRB approval is not applicable.
    \item[] Guidelines:
    \begin{itemize}
        \item The answer \answerNA{} means that the paper does not involve crowdsourcing nor research with human subjects.
        \item Depending on the country in which research is conducted, IRB approval (or equivalent) may be required for any human subjects research. If you obtained IRB approval, you should clearly state this in the paper. 
        \item We recognize that the procedures for this may vary significantly between institutions and locations, and we expect authors to adhere to the NeurIPS Code of Ethics and the guidelines for their institution. 
        \item For initial submissions, do not include any information that would break anonymity (if applicable), such as the institution conducting the review.
    \end{itemize}

\item {\bf Declaration of LLM usage}
    \item[] Question: Does the paper describe the usage of LLMs if it is an important, original, or non-standard component of the core methods in this research? Note that if the LLM is used only for writing, editing, or formatting purposes and does \emph{not} impact the core methodology, scientific rigor, or originality of the research, declaration is not required.
    \item[] Answer: \answerNA{}.
    \item[] Justification: LLMs were only used to help with debugging, formatting and editing.
    \item[] Guidelines:
    \begin{itemize}
        \item The answer \answerNA{} means that the core method development in this research does not involve LLMs as any important, original, or non-standard components.
        \item Please refer to our LLM policy in the NeurIPS handbook for what should or should not be described.
    \end{itemize}

\end{enumerate}
\end{document}